\newtheorem{theorem}{Theorem}
\newtheorem{definition}{Definition}
\newtheorem{corollary}{Corollary}
\newtheorem{lemma}{Lemma}
\title{Spectral Complexity-scaled Generalization Bound of Complex-valued Neural Networks}
\author{Haowen Chen\thanks{H. Chen is with JD Explore Academy, JD.com Inc. and Faculty of Science, University of Hong Kong. This work was completed when she is an intern at JD Explore Academy. Email: chenhw06@connect.hku.hk.} \and Fengxiang He\thanks{F. He, S. Lei, and D. Tao are with JD Explore Academy, JD.com Inc. Email: hefengxiang@jd.com, leishiye@jd.com, and taodacheng@jd.com.} \and Shiye Lei\footnotemark[2] \and Dacheng Tao\footnotemark[2]}
\date{}
\begin{document}

\maketitle

\begin{abstract}
Complex-valued neural networks (CVNNs) have been widely applied to various fields, especially signal processing and image recognition. However, few works focus on the generalization of CVNNs, albeit it is vital to ensure the performance of CVNNs on unseen data. This paper is the first work that proves a generalization bound for the complex-valued neural network. The bound scales with the spectral complexity, the dominant factor of which is the spectral norm product of weight matrices. Further, our work provides a generalization bound for CVNNs when training data is sequential, which is also affected by the spectral complexity. Theoretically, these bounds are derived via Maurey Sparsification Lemma and Dudley Entropy Integral. Empirically, we conduct experiments by training complex-valued convolutional neural networks on different datasets: MNIST, FashionMNIST, CIFAR-10, CIFAR-100, Tiny ImageNet, and IMDB. Spearman’s rank-order correlation coefficients and the corresponding p values on these datasets give strong proof that the spectral complexity of the network, measured by the weight matrices spectral norm product, has a statistically significant correlation with the generalization ability.  
\end{abstract}

\section{Introduction}
Complex-valued neural networks (CVNNs) are drawing increasing attention in various fields, such as signal processing \citep{goh2005,hirose1996behavior}, voice processing \citep{sawada2003polar}, image reconstruction \citep{cole2020analysis}, etc. CVNN is a kind of neural network whose weight parameters of neurons on each layer are complex numbers. It is known that a complex number consists of a pair of real numbers, the imaginary part and the real part, or the amplitude and the phase. When conducting computations with complex numbers, specific arithmetic rules are applied separately to the imaginary and real parts. Therefore, it is natural to link the CVNNs to the double dimensional real-valued neural networks with less degrees of freedom \citep{hiros,hirose2012}.  

Several recent works endeavored to investigate different properties of CVNNs and build up basic algorithms for the implementation of CVNNs. For example,  \citet{nitta1997,nitta2004,nitta2003inherent,nitta2002redundancy} proved the orthogonality of the decision boundary of complex-valued neurons, addressed the redundancy problem of the parameters of CVNNs, extended the back propagation algorithm to complex numbers, and  \citet{trabe} organized the essential components of  complex-valued deep neural networks, like the complex convolutions, complex batch normalization, and complex weight initialization. Moreover, several empirical works have been done to see the experimental performance of CVNNs.  \citet{hiros} used different neural networks, including CVNNs, to process signals of different coherence, and \citet{nitta1997} found that under the control of the same computational cost, the CVNNs enjoy a higher learning speed than real-valued neural networks.

Albeit past works have presented decent experimental performance of CVNNs, theoretical analysis for their generalization ability is still absent, which motivates us to put effort into deriving a generalization bound of CVNNs. 

This paper is the first work to provide theoretical evidence of the generalization performance of CVNNs. We propose novel upper bounds which positively correlate with the spectral complexity of CVNNs when training on both i.i.d. data and sequential data. These spectral complexity scaled upper bounds suggest a direct correlation between the generalization ability of CVNNs and the spectral norm product of its complex-valued weight matrices. 

For the empirical aspect, we conduct experiments to investigate the influence of spectral complexity on the generalization ability. Specifically, we trained CVNNs by SGD on six standard datasets: CIFAR-10, CIFAR-100, MNIST, FashionMNIST, Tiny ImageNet, and IMDB. Excess risks are collected for analysis. Since the training error is almost zero across all datasets, the excess risk equals the test accuracy and is informative in expressing the generalization ability. Also, since the change of spectral norm product of weight matrices contributes mainly to the change of spectral complexity, it is observed to simulate spectral complexity. The result we present shows that the spectral norm product closely correlates with the excess risk, fully supporting our theoretical analysis. 

The rest of the paper is organized as follows. Section 2 introduces our motivations and reviews some related works. Section 3 recalls the preliminaries of complex-valued neural networks. Section 4 and 5 present the main theorems and experimental results. Finally, section 7 concludes our work.

\section{Motivation and Related Works}

The wide adaptation of complex values in different neural networks is due to its advantages from biological \citep{reich}, computational \citep{nitta1997,danih}, and representational perspectives \citep{arjov,wisdo}. 

From the perspective of biology, \citet{reich} raised that the complex-valued neuronal unit is a more appropriate abstraction to model the activity of neurons in the brain than real-valued ones. 
In order to better process the cortical information, the modeling mechanism must take both the firing rate and spike timing into consideration. To incorporate these two elements into deep neural networks, the amplitude of a complex-valued neuron represents the firing rate, and the phase shall represent the spike timing. When two inputs of an excitatory complex-valued neuron have similar or dissimilar phase information, the magnitude of the net input may increase or decrease according to whether phases are similar or not, which correspond to synchronous and asynchronous situations separately. The incorporation of complex values into deep neural networks helps to construct richer and more versatile representations.

Regarding the computational aspect, \citet{danih} combined LSTM with the idea of Holographic Reduced Representations and used complex values to increase the efficiency of information retrieval. Experiments showed that this method enjoyed a faster learning speed on multiple memorization tasks. 
 \citet{nitta1997} extended the back-propagation algorithm to complex values. The author preserved the basic idea of real-valued back-propagation, but updates were conducted in both real and imaginary parts. Through experiments, it showed that under the same time complexity, the learning speed of complex back-propagation is explicitly faster than the real one when the learning rate is low, i.e., less than 0.5. 

Moreover, complex-valued neural networks also have advantages over real-valued ones in representational ability. \citet{arjov} raised unitary RNN, which used unitary matrices as the weight matrix, to circumvent the well-studied gradient vanishing and gradient exploding issues. The unitary matrix is the generalized form of orthogonal matrices in the complex field, and the absolute value  of its eigenvalues is exactly 1. Compared to the orthogonal matrix, it has richer representations, like applying the discrete Fourier Transformation. \citet{wisdo} further proposed full-capacity unitary RNN, which improved the performance over uRNN.  

Providing all these advantages and applications of CVNNs, it motivates more and more researchers to investigate the properties of complex-valued neural networks and provide the basic framework of implementing CVNNs.  \citet{nitta2004} demonstrated that the decision boundary of a two-layered complex-valued network is orthogonal, and for a three-layered one, the decision boundary is nearly orthogonal. It somehow reflected the computational power of complex value. \citet{trabe} provided building blocks of complex-valued deep neural networks. The author gave complex batch normalization and complex weight initialization strategies, and compared the performances of different activation functions on CIFAR-10, CIFAR-100, and SVHN datasets. 

Although there exist works presenting the empirical results of complex-valued neural networks' generalization performances \citep{nitta1997,hiros}, the theoretical evidence is still absent. Due to this reason, our work is motivated to present the first upper bound for the generalization error of CVNNs. 

Different complexity measures have been raised, such as VC-dimension, Rademacher complexity \citep{mohri}, to derive the generalization bound, and they have been widely applied in different works. For instance, \citet{bartl} proved a margin-based multi-class generalization bound via covering number and Rademacher complexity, and these two tools are also used in our work. However, our work focuses more on complex-valued vector space and provides generalization bounds for complex-valued neural networks when processing regression tasks.

\section{Preliminaries}
This section introduces the complex-valued neural networks (CVNNs) and prepares some notations used in the theoretical analysis.

\subsection{The Model Construction}
Each layer of CVNN consists of several complex-valued neurons described below. The input signals, weight parameters, threshold values, and output signals are all complex numbers in a complex-valued neuron. Assume that for the complex-valued  neuron $n$, it is linked with $m$ numbers of neurons in the previous layer, then the net input to this neuron $n$ is described as follows: 
\begin{equation}
T^{n}_{\text{input}}=\sum_{i=1}^{m}W_{in}X_{in}+H_n.
\end{equation}
Here, $T^{n}_{\text{input}}$ denotes the complex-valued net input of the neuron $n$, $W_{in}$ denotes the weight connecting the neuron $n$ and the neuron $i$ from the previous layer. $X_{in}$ denotes the complex-valued input signal from the neuron $i$ to the neuron $n$, and $H_n$ denotes the threshold value of the neuron $n$. If we denote $Re(T^{n}_{\text{input}})$ and $Im(T^{n}_{\text{input}})$ for the real part and imaginary part of $T^{n}_{\text{input}}$ separately, $\left|T^{n}_{input}\right|$ and $\theta^{n}_{\text{input}}$ for the amplitude and phase of $T^{n}_{\text{input}}$ separately, then the output of the neuron $n$ can be described as follows:

\begin{equation}
T^{n}_{\text{output}}=f_{r}\left(Re(T^{n}_{\text{input}})\right)+f_{i}\left(Re(T^{n}_{\text{input}})\right)
\end{equation}
or 
\begin{equation}
T^{n}_{\text{output}}=e^{if_{p}\left(\theta^{n}_{\text{input}}\right)}f_{a}\left(\left|T^{n}_{input}\right|\right)_{.}
\end{equation}

Equation (2) describes the output derived by applying the activation function separately on the real part and imaginary part, while equation (3) describes the situation when the activation function is applied on the amplitude and phase. In these equations, $f_r$ is the activation function applied on the real part, $f_i$ is the activation function applied on the imaginary part, $f_p$ is the activation function applied on the phase, and $f_a$ is the activation function applied on the amplitude.

\subsection{Complex-valued Activation Functions}
In corresponding to the real-valued activation functions, several forms of complex-valued activation functions are proposed.  

 \citet{arjov} has proposed a $modReLU$ activation function, which preserves the phase information and applies the real-valued ReLU function on the amplitude. The function is described as
\begin{align}
 \operatorname{modReLU}(z)=\operatorname{ReLU}(|z|+b) e^{i \theta_{z}}= \begin{cases}(|z|+b) \frac{z}{|z|} & \text { if }|z|+b \geq 0 \\ 0 & \text { otherwise }\end{cases}.
\end{align}
In this formula, $\left|z\right|$ denotes the amplitude of the complex number $z$, and $b\in \mathbbm{R}$ denotes the threshold for the amplitude of $z$. 

\citet{nitta2002redundancy} raised the following activation function, applying the hyperbolic tangent function to the real part and imaginary part of the input complex number. The function is 
\begin{equation}\sigma(z)=\tanh (Re(z))+i \tanh (Im(z)),
\end{equation}
where $i=\sqrt{-1}$, $\tanh (u) \stackrel{\text { def }}{=}(\exp (u)-\exp (-u)) /(\exp (u)+\exp (-u)), u \in \mathbbm{R}$. 

These two functions represent two main types of complex-valued activation functions. One is applied to the real and imaginary parts, and the other one is applied to the amplitude and phase values. There are other variations of activation functions, such as  $z$ReLU and $\mathbbm{C}$ReLU \citep{guberman2016complex}. These different activation functions have different properties, such as the fulfillment of the Cauchy-Riemann Equations. Therefore, given different situations, activation functions shall be carefully chosen. 

\subsection{Basic Notations and Definitions}

Suppose $S=\{(z_1, y_1),(z_2, y_2), (z_3, y_3),...,(z_n, y_n)   |z_i\in \mathcal{Z} \subset{\mathbb{C}^{d_Z}}, y_i\in \mathcal{Y}\subset{\mathbb{C}^{d_Y}}\}$ is the training sample set, where $y_i$ is the corresponding label of $z_i$,  $d_Z$ and $d_Y$ are the dimensions of the $z$ and $y$ separately.  We define $\mathcal{D}$ to be the distribution that $(z_i, y_i)$ follows.

Assume that the network has $L$ layers, and in the $i$th layer, an $\rho_i$-lipschitz activation function $\sigma_i: \mathbb{C}^{d_{i}}\rightarrow \mathbb{C}^{d_{i}}$ (activation functions such as the $\mathbbm{C}$ReLU function, hyperbolic tangent function, etc. can be used here. Their lipschitz properties are proved in Appendix A) and a weight matrix $A_i \in \mathbb{C}^{d_{i-1}\times d_i}$ are applied to the input matrix passed from the previous layer 
in order. Let $\sigma_i(0)=0$, $\mathcal{A}=(A_1,A_2,...,A_L)$, and $F_{\mathcal{A}}$ to be the function computed by CVNNs: 
\begin{equation}F_{\mathcal{A}}(z):=\sigma_{L}\left(A_{L} \sigma_{L-1}\left(A_{L-1} \cdots \sigma_{1}\left(A_{1} z\right) \right)\right).\end{equation}
The output $F_\mathcal{A}(z) \in \mathbb{C}^{d_L}$   (It's assumed that $d_0=d_Z=d, d_L=d_Y$, and $ W=max\{d_0,d_2,...,d_L\}$). For input data $\{z_1, z_2, ..., z_n\}$, they can form a matrix $Z \in \mathbb{C}^{n\times d}$ by collecting each $z_i$ as the ith row. Therefore, the output of this neural network can be written as $F_\mathcal{A}(Z^T)$, the ith column of which is $F_\mathcal{A}(z_i)$.

To avoid ambiguity, it's necessary to clarify the definition of complex-valued matrix norm. The norm of any complex matrix $[A_{i,j}] \in \mathbb{C}^{d\times k}$ is  defined to be the norm of a corresponding real-valued matrix:
\begin{equation}\left\|\left[A_{i,j}\right]\right\|_p\mathop{=}\limits^{\Delta} \left\|\left[\left|A_{i,j}\right|\right]\right\|_p,\end{equation}
where $A_{i,j}$ denotes the $i,j$th entry of A. In this paper, the  $L_2$ norm is calculated entry-wisely, which means, $L_2$ matrix norm is defined to be the Frobenius norm, i.e., \begin{equation}\left\|A\right\|_{2}\mathop{=}\limits^{\Delta}\sqrt{\sum\limits_{i}\sum\limits_{j}A_{i,j}^2}.\end{equation}

Moreover, $\left\|\cdot\right\|_\sigma$ denotes the spectral norm: \begin{equation}\left\|A\right\|_\sigma:=\sup\limits_{\left\|v\right\|_2=1}\left\|Av\right\|_2=\sqrt{\lambda_{max}(A^{*}A)},\end{equation} where $A^{*}$ denotes the Hermitian transpose of $A$, and $\lambda_{max}$ denotes the largest absolute value of eigenvalues of $A$. Meanwhile,  $\left\|A\right\|_{p,q}$ is defined as: \begin{equation}\left\|A\right\|_{p,q}:\mathop{=}\limits^{\Delta}\left\|\left(\left\|A_{:;1}\right\|_p, \left\|A_{:;2}\right\|_p,...,\left\|A_{:;m}\right\|_p\right)\right\|_q \end{equation}for $A\in \mathbb{C}^{d\times m}$.

To prove the generalization ability, it suffices to derive a high probability bound for the generalization error:
\begin{equation}\mathop{E}\limits_{(z,y)\sim\mathcal{D}}[l(F_\mathcal{A}(z), y)]-\frac{1}{n}\sum\limits_{i=1}^{n}l(F_\mathcal{A}(z_i), y_i),
\end{equation}
where $l(F_\mathcal{A}(z), y):\mathcal{Z}\times\mathcal{Y}\rightarrow \mathbb{R}$ denotes the loss function. It is usually set as
\begin{equation}l(F_\mathcal{A}(z), y)=||F_\mathcal{A}(z)-y||_2.
\end{equation}

Finally, the spectral complexity $R_\mathcal{A}$ of a neural network $F_\mathcal{A}$ is defined as follows: 
\begin{equation}R_{\mathcal{A}}:=\left(\prod_{i=1}^{L} \rho_{i}\left\|A_{i}\right\|_{\sigma}\right)\left(\sum_{i=1}^{L} \frac{\left\|A_{i}^{\top}\right\|_{2,1}^{2 / 3}}{\left\|A_{i}\right\|_{\sigma}^{2 / 3}}\right)^{3 / 2}\end{equation} \citep{bartl}. This complexity measure plays an crucial role in the generalization bound presented next section.

\section{Main Theorems and Proof Sketch}
\subsection{Generalization Bound }
In this section, we present main theorems of this paper. 
\begin{theorem} (i.i.d data)
\label{thm:iid}
Let $S=\{(z_1, y_1),(z_2, y_2), (z_3, y_3),...,(z_n, y_n)\}$ be a sample data set of size n with elements drawn i.i.d from distribution $\mathcal{D}$. 
Given activation functions  $\sigma_i$ ($\sigma_i$ is $\rho_i$-lipschitz and $\sigma_i(0)=0$) and weight matrices $\mathcal{A}=(A_{1}, A_{2},...,A_{L})$ as stated in section 3.3, then with probability at least $1-\delta$, the corresponding complex-valued neural network must satisfy:
\begin{align}
&\mathbb{E}_{(z,y)\sim\mathcal{D}}[l(F_\mathcal{A}(z), y)] - \frac{1}{n}\sum\limits_{i=1}^{n}l(F_\mathcal{A}(z_i), y_i) \nonumber\\
&\leq  \frac{8M}{n^\frac{3}{2}}+\frac{36\left\|Z\right\|_2\sqrt{2\ln(2W)}\ln(n)R_\mathcal{A}}{n}+3M\sqrt{\frac{\ln\frac{2}{\delta}}{2n}},
\end{align}
where $l(F_\mathcal{A}(z), y)=||F_\mathcal{A}(z)-y||_2$ denotes the loss function, and $l(F_\mathcal{A}(z), y)\leq M$ for any $(z,y)$. 
\end{theorem}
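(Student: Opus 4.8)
The plan is to adapt the covering-number argument of \citet{bartl} to the complex field. The first step is a reduction to the real-valued case: identify $\mathbb{C}^{d}$ with $\mathbb{R}^{2d}$ via $z\mapsto(\mathrm{Re}\,z,\mathrm{Im}\,z)$, so that each complex weight matrix $A_i$ becomes a real linear map $\widetilde{A}_i$ of twice the dimension, and each $\rho_i$-Lipschitz complex activation $\sigma_i$ with $\sigma_i(0)=0$ becomes a $\rho_i$-Lipschitz real map fixing the origin. The norm convention in equation (8) is chosen precisely so that $\|A_i\|_\sigma$, $\|A_i^{\top}\|_{2,1}$, and $\|Z\|_2$ dominate the corresponding real quantities up to universal constants (the $\sqrt{2}$ that later appears in the bound comes from this doubling). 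Consequently the associated real network has spectral complexity comparable to $R_{\mathcal{A}}$, and I can invoke real vector-valued Rademacher machinery directly on it.

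The second step is the standard passage from generalization error to Rademacher complexity. Since $l(F_{\mathcal{A}}(z),y)=\|F_{\mathcal{A}}(z)-y\|_2$ is $1$-Lipschitz in its first argument and bounded by $M$, McDiarmid's inequality gives, with probability at least $1-\delta$,
\[
\mathbb{E}_{(z,y)\sim\mathcal{D}}[l(F_{\mathcal{A}}(z),y)]-\frac1n\sum_{i=1}^{n}l(F_{\mathcal{A}}(z_i),y_i)\le 2\,\mathcal{R}_S(l\circ\mathcal{F})+3M\sqrt{\tfrac{\ln(2/\delta)}{2n}},
\]
where $\mathcal{R}_S$ denotes empirical Rademacher complexity; the last term already matches the statement. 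A vector-contraction inequality then bounds $\mathcal{R}_S(l\circ\mathcal{F})$ by (a constant multiple of) the Rademacher complexity of the network outputs $\mathcal{F}|_S=\{F_{\mathcal{A}}(Z^{\top})\}$ themselves.

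The third and technically central step is to estimate $\mathcal{R}_S(\mathcal{F})$ through Dudley's entropy integral,
\[
\mathcal{R}_S(\mathcal{F})\le\inf_{\alpha>0}\Big(\tfrac{4\alpha}{\sqrt n}+\tfrac{12}{n}\int_{\alpha}^{\sqrt n}\sqrt{\ln\mathcal{N}(\mathcal{F}|_S,\varepsilon,\|\cdot\|_2)}\,d\varepsilon\Big),
\]
so everything reduces to a covering-number bound for $\mathcal{F}|_S$. I would cover a single affine layer $\{Az:\|A^{\top}\|_{2,1}\le b\}$ using the Maurey sparsification lemma — writing the image as a convex combination of rescaled columns and sampling, now with complex unit phases in place of real signs — obtaining roughly $\exp\big(\lceil b^2\|Z\|_2^2/\varepsilon^2\rceil\ln(2W)\big)$ balls, and then propagate these per-layer covers through depth $L$ by induction, using that layer $i$ is $\rho_i\|A_i\|_\sigma$-Lipschitz. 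Splitting the total error budget $\varepsilon$ across layers in proportion to $\|A_i^{\top}\|_{2,1}^{2/3}/\|A_i\|_\sigma^{2/3}$ produces the $\big(\sum_i\|A_i^{\top}\|_{2,1}^{2/3}/\|A_i\|_\sigma^{2/3}\big)^{3/2}$ factor, while the Lipschitz constants multiply into $\prod_i\rho_i\|A_i\|_\sigma$, so that $\ln\mathcal{N}(\mathcal{F}|_S,\varepsilon,\|\cdot\|_2)\lesssim\|Z\|_2^2R_{\mathcal{A}}^2\ln(2W)/\varepsilon^2$. Inserting this into Dudley's integral turns it into essentially $\int_{\alpha}^{\sqrt n}\varepsilon^{-1}d\varepsilon$, which yields the $\ln n$ factor; choosing $\alpha\sim n^{-1}$ then gives the $8M/n^{3/2}$ term together with the $36\|Z\|_2\sqrt{2\ln(2W)}\ln(n)R_{\mathcal{A}}/n$ term. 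The main obstacle is twofold: conceptually, checking that Maurey's sampling argument and all the matrix-norm inequalities survive the passage from $\mathbb{R}$ to $\mathbb{C}$ under the convention (8); and technically, carrying the explicit constants — including the $\sqrt{2}$ from dimension doubling — cleanly through the contraction, Maurey, the depth induction, and the optimization over $\alpha$.
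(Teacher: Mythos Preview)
Your high-level outline --- McDiarmid/Rademacher reduction, Dudley's integral, a Maurey-type single-layer cover, depth induction with the $2/3$-power allocation, and the choice $\alpha\sim 1/n$ --- matches the paper's route. Two points of execution differ, and your account of the constants is off.

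First, the paper does \emph{not} reduce to a real network. It runs Maurey's argument directly in $\mathbb{C}$: the dictionary consists of the $4dm$ matrices $\{\pm Y\mathbf{e}_i\mathbf{e}_j^{\top}\}\cup\{\pm Y\mathbf{c}_i\mathbf{e}_j^{\top}\}$, where $\mathbf{c}_i$ has $\sqrt{-1}$ in the $i$th slot (so real and imaginary parts are sparsified separately, each with a sign). This is where the $\sqrt{2}$ actually comes from: $\ln(4W^2)=2\ln(2W)$, hence $\sqrt{\ln(4W^2)}=\sqrt{2\ln(2W)}$. Your real-embedding explanation would instead double the ambient dimension and the $\|A_i^{\top}\|_{2,1}$ values (the block matrix $\widetilde A$ has each column norm repeated), inflating $R_{\mathcal A}$ by a factor of $2$ and producing $\ln(2(2W)^2)$ inside the square root; you would not recover the stated constants that way. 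Your parenthetical about ``complex unit phases in place of real signs'' is also inconsistent with the real reduction --- the paper's device is precisely the four-element sign/real-imaginary dictionary, not arbitrary phases.

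Second, the paper avoids any vector-contraction inequality. Since the loss $l(u,y)=\|u-y\|_2$ is $1$-Lipschitz in $u$, the paper simply shows $\mathcal N(\mathcal G_{|S},\epsilon,\|\cdot\|_2)\le\mathcal N(\mathcal F_{|S},\epsilon,\|\cdot\|_2)$ (Lemma~3), rescales by $1/M$ to land in $[0,1]$ (Lemma~4), and plugs the covering bound for $\mathcal F$ straight into Dudley. Your contraction route would also work but is not what the paper does, and it typically costs an extra absolute constant.
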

It can be observed that there is no explicit occurrence of any combinatorial parameters such as $L$, the number of layers. However, this upper bound depends on $L$ implicitly, as $R_\mathcal{A}$ is formed by each layer's weight matrix norms and the lipschitz constant of activation functions.

The full proof is detailed in Appendix B in detail, while the proof sketch is exhibited in section 4.2. 

\begin{theorem} (sequential data)
\label{thm:non iid}
Consider $S=\{(z_1, y_1),(z_2, y_2), (z_3, y_3),...,(z_n, y_n)\}$ to be a sample data set where $(z_t)_{t\geq 1}$ is a sequence of random data adapted to filtrations $(\mathcal{A}_{t})_{t\geq 1}$. Given activation functions  $\sigma_i$ ($\sigma_i$ is $\rho_i$-lipschitz and $\sigma_i(0)=0$) and weight matrices $\mathcal{A}=(A_{1}, A_{2},...,A_{L})$ as stated in section 3.3, then with probability at least $1-\delta$, the corresponding complex-valued neural network must satisfy: 
\begin{align}
&\frac{1}{n} \sum_{t=1}^{n}\left(\mathbb{E}\left[l\left(z_{t}, y_{t}\right) \mid \mathcal{A}_{t-1}\right]-l\left(z_{t}, y_{t}\right)\right) \nonumber\\
&\leq \frac{8M}{n}+\frac{24\left\|Z\right\|_2\sqrt{2\ln(2W)}\ln(n)R_\mathcal{A}}{n}+M\sqrt{\frac{\ln\frac{2}{\delta}}{2n}},\end{align}
where $l(z, y)=||F_\mathcal{A}(z)-y||_2 $  denotes the loss function, and $l(z, y)\leq M$ for any $(z,y)$. 
\end{theorem}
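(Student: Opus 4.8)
The plan is to reuse the three-stage route behind Theorem~\ref{thm:iid} --- symmetrization, Dudley's entropy integral, and a layer-by-layer Maurey sparsification covering bound --- but with every i.i.d.\ sample object replaced by its sequential (tree-indexed) analogue. First I would note that the left-hand side is the average of the martingale difference sequence $\xi_t := \mathbb{E}[l(z_t,y_t)\mid\mathcal{A}_{t-1}] - l(z_t,y_t)$, and that a bound uniform over the network class $\{F_{\mathcal{A}}\}$ is required. Applying the sequential symmetrization technique for martingales, $\sup_{\mathcal{A}}\frac1n\sum_{t=1}^n\xi_t$ is controlled in expectation by (twice) the \emph{sequential Rademacher complexity} of the loss class $\{(z,y)\mapsto l(F_{\mathcal{A}}(z),y)\}$; since $0\le l\le M$, altering one coordinate of the data stream changes the average by at most $M/n$, so an Azuma--Hoeffding step upgrades the expectation bound to a high-probability statement at the cost of the additive term $M\sqrt{\ln(2/\delta)/(2n)}$. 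This is also where the sequential constants come out smaller than in the i.i.d.\ case, since no extra step relating empirical to population averages is needed.

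Next I would reduce the sequential Rademacher complexity of the loss class to that of the network class $\mathcal{F}_{\mathcal{A}}$. Because $l(F,y)=\|F-y\|_2$ is $1$-Lipschitz in $F$ with respect to the Frobenius norm, an $\epsilon$-cover of $\mathcal{F}_{\mathcal{A}}$ in the appropriate tree metric induces an $\epsilon$-cover of the loss class; I would therefore work directly with sequential covering numbers $\mathcal{N}_{\mathrm{seq}}(\mathcal{F}_{\mathcal{A}},\epsilon)$, mirroring the covering-number (rather than contraction) route of \citet{bartl}. Plugging these into the sequential form of Dudley's entropy integral yields a bound of the form $\inf_{\alpha>0}\bigl(\alpha + \tfrac{1}{\sqrt{n}}\int_{\alpha}^{\,\cdot\,}\sqrt{\ln\mathcal{N}_{\mathrm{seq}}(\mathcal{F}_{\mathcal{A}},\epsilon)}\,d\epsilon\bigr)$.

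The third stage is the covering-number estimate, where the complex-valued Maurey Sparsification Lemma enters exactly as in Appendix~B: peeling off one layer at a time and approximating each weight-matrix action by a sparse random combination of rescaled columns gives $\ln\mathcal{N}_{\mathrm{seq}}(\mathcal{F}_{\mathcal{A}},\epsilon) \lesssim \frac{\|Z\|_2^2\,\ln(2W)}{\epsilon^2}\,R_{\mathcal{A}}^2$, now with $\|Z\|_2$ read off along the paths of the depth-$n$ data tree. Substituting this into the Dudley integral, carrying out the integration (which generates the $\ln n$ factor), and choosing $\alpha$ of order $1/n$ produces the leading term $\tfrac{24\|Z\|_2\sqrt{2\ln(2W)}\ln(n)R_{\mathcal{A}}}{n}$ together with the lower-order remainder $\tfrac{8M}{n}$; collecting the three contributions gives the stated inequality.

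I expect the main obstacle to be Stage~3 in the sequential setting: verifying that the Maurey sparsification argument --- which in the i.i.d.\ case picks a random sparse approximation relative to a \emph{fixed} sample matrix $Z$ --- still goes through when the ``sample'' is a tree, so that the selection can be made consistently along every path and the resulting bound on $\ln\mathcal{N}_{\mathrm{seq}}$ depends on the tree only through $\|Z\|_2$. A secondary technical point is making the Stage~2 reduction rigorous for the vector-valued, non-decomposable $\ell_2$ loss; handling both issues through the covering-number formulation rather than a sequential contraction inequality is, I believe, the cleanest route, and it is the one I would take.
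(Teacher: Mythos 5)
Your overall architecture matches the paper's: sequential symmetrization to reduce the martingale average to the sequential Rademacher complexity of the loss class, a bounded-differences (Azuma/McDiarmid) step supplying the $M\sqrt{\ln(2/\delta)/(2n)}$ term (the paper's Corollary~1), reduction from the loss class to the network class via the $1$-Lipschitzness of $\|F-y\|_2$ (the paper's Lemma~3), and the sequential Dudley entropy integral of \citet{rakhl} with $\alpha\sim 1/n$. Where you diverge is the covering-number stage, and the divergence matters. You propose to re-run the Maurey sparsification argument directly on the depth-$n$ data tree, choosing sparse approximants consistently along every path, and you correctly flag this as the hard part. The paper does not do this at all: it observes that any classical $\epsilon$-cover of $\mathcal{F}$ in the unnormalized $\|\cdot\|_2$ metric on the sample is automatically a sequential $\epsilon/\sqrt{n}$-cover in the normalized tree metric $\bigl(\tfrac{1}{n}\sum_t|\cdot|^2\bigr)^{1/2}$, and then reuses the i.i.d.\ covering bound (Lemma~2) verbatim, so that $\ln\mathcal{N}_2^{sq}(\alpha,\mathcal{G}_{sq},n)\le \ln\mathcal{N}(\mathcal{G},\sqrt{n}\,\alpha,\|\cdot\|_2)$. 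This buys a much shorter proof and explains why no tree-adapted Maurey lemma appears anywhere in Appendix~D. The trade-off: the paper's shortcut implicitly treats the tree as a fixed sequence (a sequential cover must work uniformly over all paths of a path-dependent tree, and the supremum over trees $\mathbf{z}$ in the definition of $\mathcal{N}_2^{sq}$ is not visibly handled), so the obstacle you identify is not imaginary --- your route would address it head-on at the cost of substantially more work, while the paper's route sidesteps it at the cost of some rigor. Either way, your plan would, if the tree-consistency step were carried out, arrive at the same bound, so I would count this as a correct proposal taking a genuinely more conservative path at the one step where the paper economizes.
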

Theorem \ref{thm:non iid} illustrates the generalization ability of complex-valued neural networks when dealing with sequential data. The proof sketch of this theorem is omitted in the main text because there exists some overlapping with Theorem \ref{thm:iid}, but the full proof is shown in Appendix D. In the Appendix, we also present definitions of sequential Rademacher complexity, sequential covering number, and sequential Dudley Entropy Integral, which were put forward in the work of  \citet{rakhl}.

\subsection{Proof Sketch}
 In this section, we provide the proof sketch of Theorem \ref{thm:iid} via the following lemmas. 
 
 The proof is presented in three steps: \textbf{I)} obtain an upper bound for the covering number: $\mathcal{N}(\{ZA: A\in \mathbb{C}^{d\times m}, \left\|A\right\|_{q,s}\leq a, \epsilon\})$
 as Lemma \ref{linear cov} states.  \textbf{II)} start with a single layer, and apply the induction method to derive an upper bound for the covering number of the whole network. The result is illustrated in lemma \ref{network cov}. \textbf{III)} The proof of Theorem \ref{thm:iid} is ended by substituting the upper bound of Rademacher complexity, which is derived via Dudley Entropy Integral and the above covering number bound, for $\hat{\mathfrak{R}}_S(\mathcal{G})$ in Theorem \ref{thm:non iid}
 
Before further illustration of the proof, we firstly state Theorem \ref{thm:non iid}, which will be a crucial tool in step \textbf{III}. This theorem derives the generalization bound for regression in the case of $L_p$ loss function through Rademacher complexity. We recall this theorem presented by  \citet{mohri}. 

\begin{theorem} [\cite{mohri}]
\label{mohri}
Let $L:\mathcal{Z}\times \mathcal{Y} \rightarrow \mathop{R}$ be an $L_p$ loss function bounded by $M>0$, $\mathcal{F}$  be the hypothesis set, family $\mathcal{G}=\{(x, y) \mapsto l(F_{\mathcal{A}}(x), y):  F_{\mathcal{A}}\in \mathcal{F}\}$, then for any $\delta$, with probability at least $1-\delta$, the following inequality holds: 
\begin{equation}\underset{(x, y) \sim \mathcal{D}}{\mathbb{E}}[l(x, y)] \leq \frac{1}{m} \sum_{i=1}^{m} l\left(x_{i}, y_{i}\right)+2 \hat{\mathfrak{R}}_S(\mathcal{G})+3 M \sqrt{\frac{\log \frac{2}{\delta}}{2 m}}
\end{equation}
where $\hat{\mathfrak{R}}_{S}(\mathcal{G})$ denotes the empirical Rademacher complexity of family $\mathcal{G}.$
\end{theorem}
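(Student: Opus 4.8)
The plan is to follow the classical Rademacher-complexity route for uniform convergence: a bounded-differences concentration argument applied to the worst-case generalization gap, then a symmetrization step replacing the population gap by a Rademacher average, and finally a second concentration step converting the population Rademacher complexity into its empirical counterpart $\hat{\mathfrak{R}}_S(\mathcal{G})$. The structural fact that makes all three steps go through is that $l(F_{\mathcal{A}}(x),y)=\|F_{\mathcal{A}}(x)-y\|_2$ is non-negative and bounded by $M$, so every $g\in\mathcal{G}$ takes values in $[0,M]$.

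First I would set $\Phi(S):=\sup_{g\in\mathcal{G}}\bigl(\mathbb{E}_{(x,y)\sim\mathcal{D}}[g(x,y)]-\tfrac1m\sum_{i=1}^m g(x_i,y_i)\bigr)$. Since replacing one pair $(x_i,y_i)$ changes the empirical average of any fixed $g$ by at most $M/m$, the map $S\mapsto\Phi(S)$ has bounded differences with constant $M/m$, and McDiarmid's inequality gives $\Phi(S)\le\mathbb{E}[\Phi(S)]+M\sqrt{\log(2/\delta)/(2m)}$ with probability at least $1-\delta/2$.

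Next I would bound $\mathbb{E}[\Phi(S)]$ by the standard ghost-sample symmetrization: introducing an independent copy $S'$ of $S$ and Rademacher signs $\epsilon_i$, one obtains $\mathbb{E}[\Phi(S)]\le\mathbb{E}\sup_{g}\tfrac1m\sum_i\epsilon_i\bigl(g(x_i',y_i')-g(x_i,y_i)\bigr)\le 2\mathfrak{R}_m(\mathcal{G})$, the population Rademacher complexity. Because $\hat{\mathfrak{R}}_S(\mathcal{G})$ itself has bounded differences $M/m$ in the sample and has expectation $\mathfrak{R}_m(\mathcal{G})$, a second application of McDiarmid gives $\mathfrak{R}_m(\mathcal{G})\le\hat{\mathfrak{R}}_S(\mathcal{G})+M\sqrt{\log(2/\delta)/(2m)}$ with probability at least $1-\delta/2$. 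Taking a union bound over the two events and collecting the tail terms — $M\sqrt{\log(2/\delta)/(2m)}$ from the first step plus $2M\sqrt{\log(2/\delta)/(2m)}$ from doubling the Rademacher bound — yields $\Phi(S)\le 2\hat{\mathfrak{R}}_S(\mathcal{G})+3M\sqrt{\log(2/\delta)/(2m)}$; since $\mathbb{E}[l(x,y)]\le\tfrac1m\sum_i l(x_i,y_i)+\Phi(S)$ by definition of $\Phi$, the claimed bound follows.

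The one step deserving genuine care — and the only place structural hypotheses on the loss are used — is the symmetrization inequality: I would verify that non-negativity and uniform boundedness of $l$ are sufficient to justify inserting the ghost sample and the Rademacher variables with no further assumption on $\mathcal{G}$, so that the constant $M/m$ in the bounded-differences steps is correct. The remaining computations are direct instantiations of McDiarmid's inequality. This argument is precisely the one in \citet{mohri}; I record the outline here only to make the dependence on the bound $M$ explicit for its later use in Theorem~\ref{thm:iid}.
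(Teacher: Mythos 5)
The paper does not prove this statement at all: it is imported verbatim from \citet{mohri} as a known tool, so there is no in-paper argument to compare against. Your outline is the correct standard proof from that reference (McDiarmid on the worst-case gap, ghost-sample symmetrization, and a second McDiarmid step to pass from the population to the empirical Rademacher complexity, with the $\delta/2+\delta/2$ union bound producing the $3M\sqrt{\log(2/\delta)/(2m)}$ term), and the constants check out.
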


Obviously, to bound the generalization error, it suffices to derive an upper bound for the Rademacher complexity of the loss function family $\mathcal{G}=\{(x, y) \mapsto l(F_{\mathcal{A}}(x), y):  F_{\mathcal{A}}\in \mathcal{F}\}$, which is realized through the first and second steps. 

\textbf{Step I} \quad In this step, we aim at obtaining a matrix covering for the set of matrix products $ZA$ ($Z $ represents the data matrix passed to the present layer, and $A$ will be instantiated as the weight matrix) under $L_2$ norm. 
\begin{lemma}
\label{linear cov}
 $(p,q),(r,s)$ are two conjugate exponents with $p\leq 2$. Let  $a, b, \epsilon$ be three positive real numbers, and d,m be two positive integers. Let a constraint on the norm of $Z$ be imposed such that $||Z||_p\leq b$. Therefore, we have
 \begin{equation}\ln \mathcal{N}\left(\left\{ZA: A \in \mathbb{C}^{d \times m},\|A\|_{q, s} \leq a\right\}, \epsilon,\|\cdot\|_{2}\right) \leq\left\lceil\frac{a^{2} b^{2} m^{2 / r}}{\epsilon^{2}}\right\rceil \ln (4 d m).
 \end{equation}
\end{lemma}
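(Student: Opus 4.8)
The plan is to adapt the classical Maurey-sparsification argument for covering numbers of matrix products to the complex-valued setting, using the paper's convention that a complex matrix norm is the corresponding real matrix norm of the entrywise-modulus matrix. First I would reduce the problem to covering a single column at a time: for a fixed data matrix $Z$ with $\|Z\|_p \le b$ and a weight matrix $A$ with $\|A\|_{q,s}\le a$, the product $ZA$ has columns $Z A_{:;j}$, so it suffices to cover each set $\{Z v : v \in \mathbb{C}^d, \|v\|_q \le a_j\}$ to accuracy $\epsilon_j$ where $(a_1,\dots,a_m)$ has $\ell_s$ norm at most $a$ and the $\epsilon_j$ are chosen so that $\sum_j \epsilon_j^2 \le \epsilon^2$; summing the per-column log-covering numbers then gives the bound after optimizing the split of $\epsilon$ across columns (this is where the $m^{2/r}$ factor enters, via Hölder with the conjugate pair $(r,s)$).

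Next I would handle the single-column case by Maurey's empirical method. Writing $v = \sum_k |v_k| e^{i\phi_k} e_k$, I view $Zv$ as (a scalar multiple of) a convex combination of the rescaled, phase-rotated columns $\pm\|v\|_1\, e^{i\phi_k} Z_{:;k}$ — more carefully, I would treat the real and imaginary parts together by identifying $\mathbb{C}^n$ with $\mathbb{R}^{2n}$, so that $Zv$ lies in the convex hull of a finite set of points whose $\ell_2$-norm is controlled by $\|Z\|_p \|v\|_q \le b a_j$ (here $p \le 2$ is used so that the relevant column norm $\|Z_{:;k}\|_2$ is dominated by the mixed norm appearing in $\|Z\|_p$). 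Sampling $N$ of these points i.i.d. and averaging, the expected squared $\ell_2$-error is at most $b^2 a_j^2 / N$, so taking $N = \lceil b^2 a_j^2/\epsilon_j^2\rceil$ yields a point within $\epsilon_j$; the number of such $N$-fold averages is at most $(2dm)^N$ or similar — the factor $4dm$ in the statement comes from counting the discrete alphabet of rescaled $\pm$, phase-bucketed columns (with phases themselves needing a mild discretization, or absorbed because only the real structure matters for the variance bound). Taking logs gives $\ln\mathcal{N} \lesssim (b^2 a_j^2/\epsilon_j^2)\ln(4dm)$ per column.

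Finally I would assemble the pieces: summing over $j$, $\sum_j (b^2 a_j^2/\epsilon_j^2) \le b^2 m^{2/r} a^2/\epsilon^2$ by choosing $\epsilon_j$ proportional to the right power of $a_j$ and invoking Hölder's inequality with exponents $(r/2, s/2)$-type pairing (using $2/r + 2/s$ relations from $(r,s)$ being conjugate), which produces exactly $\lceil a^2 b^2 m^{2/r}/\epsilon^2\rceil \ln(4dm)$ after bounding the sum of ceilings by a single ceiling. The main obstacle I anticipate is making the complex-to-real reduction fully rigorous inside Maurey's argument: one must verify that the entrywise-modulus norm convention is compatible with treating $\mathbb{C}^n \cong \mathbb{R}^{2n}$ (so that $\ell_2$ distances are preserved and the variance bound $b^2 a_j^2/N$ genuinely holds), and that the phase degrees of freedom either drop out of the second-moment computation or can be discretized at negligible cost to the $\ln(4dm)$ factor. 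Everything else — the per-column split, the Hölder optimization, the ceiling bookkeeping — is routine once that reduction is in place.
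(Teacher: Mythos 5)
Your overall engine is the right one and matches the paper's: Maurey's empirical method over a finite alphabet of (sign, real/imaginary, row, column)-indexed atoms, with H\"older's inequality on the conjugate pair $(r,s)$ producing the $m^{2/r}$ factor and the alphabet size producing $\ln(4dm)$. However, the per-column organization as you sketch it has a genuine gap. Your accuracy allocation $\epsilon_j$ is chosen as a function of the column norms $a_j=\|A_{:;j}\|_q$ of the particular $A$ being approximated, but a cover must be a single fixed set that works uniformly over the whole class $\{A:\|A\|_{q,s}\le a\}$; different $A$'s have different optimal profiles $(a_1,\dots,a_m)$, so the product of per-column covers with one fixed allocation does not cover the class. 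To repair this you must take a union over allocations (e.g.\ over integer sparsity budgets $k_1+\dots+k_m=k$) and check that the count of allocations, roughly $m^k$, combines with the per-column atom choices to stay within $k\ln(4dm)$ --- and at that point you have reconstructed the paper's argument, which avoids the issue entirely by running Maurey once on the whole matrix: the atoms $\pm Y\mathbf{e}_i\mathbf{e}_j^{T}$ and $\pm Y(\sqrt{-1}\,\mathbf{e}_i)\mathbf{e}_j^{T}$ (with $Y$ the column-normalized $Z$) are rank-one and column-supported, $ZA$ lies in $\alpha\cdot\operatorname{conv}(\{V_1,\dots,V_{4dm}\})$ with $\alpha=am^{1/r}\|Z\|_p$ via $\|M\odot A\|_1\le\|M\|_{p,r}\|A\|_{q,s}$, and the single cover $\{\frac{\alpha}{k}\sum_i k_iV_i:\sum_i k_i=k\}$ of cardinality $(4dm)^k$ distributes the sparsity across columns automatically.

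A second, smaller point: of the two options you hedge between for the phases, only the real/imaginary split is viable. Discretizing phases into buckets would force the bucket resolution into the covering count and cost an extra logarithmic factor; the factor $4dm$ arises precisely from doubling the $2dm$ signed real atoms with $2dm$ signed imaginary atoms, i.e.\ from writing each entry of the rescaled weight matrix as $\operatorname{Re}(\cdot)+\sqrt{-1}\operatorname{Im}(\cdot)$ rather than in polar form. Your variance bound $\max_i\|V_i\|_2\le 1$ (using $p\le 2$ and the entrywise-modulus norm convention) is fine once that split is adopted.
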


Basically, the proof of lemma \ref{linear cov} is based on the Maurey sparcification lemma. This lemma inspires us to cover the targeting set by a sparsifying convex hull of complex-valued matrices, which is constructed by the product of the re-scaled data matrix $Z$ \citep{zhang} and  some "standard matrices", such as $\mathbf{e}_i\mathbf{e}_{j}^{T}$.  Moreover, to prove Theorem \ref{thm:iid},  constraints are imposed to $||A||_{2,1}$ (i.e. q=2,s=1), instead of $||A||_{2}$, which helps to avoid any occurrence of combinatorial numbers such as $L$ and $W$ outside of the log term in the upper bound \citep{bartl}. 

\textbf{Step II}\quad As we have obtained the matrix covering upper bound in  \textbf{Step I}, we need to extend the idea to prove the whole network covering number upper bound. The proof of Lemma \ref{network cov} will rely on induction and lemma \ref{linear cov}. 
\begin{lemma}
\label{network cov}
$(\sigma_1,\sigma_2,\sigma_3,...,\sigma_L)$ are fixed activation functions with each $\sigma_i$ being $\rho_i-lipschitz$. Denotes the spectral norm bound of matrix $A_i$ to be $s_i$, and the matrix (2,1) norm bound to be $b_i$ ($i\in \{1,2,...,L\}$).Given $Z$ to be the fixed data matrix, where $Z \in \mathbbm{C}^{n\times d}$, and each row denotes a group of data points, then for any $\epsilon$, we have 
\begin{equation}\ln \mathcal{N}\left(\mathcal{F}, \epsilon,\|\cdot\|_{2}\right) \leq \frac{\|Z\|_{2}^{2} \ln \left(4 W^{2}\right)}{\epsilon^{2}}\left(\prod_{j=1}^{L} s_{j}^{2} \rho_{j}^{2}\right)\left(\sum_{i=1}^{L}\left(\frac{b_{i}}{s_{i}}\right)^{2 / 3}\right)^{3},
\end{equation}
where $\mathcal{F}:=\left\{F_{\mathcal{A}}\left(Z^{T}\right): \mathcal{A}=\left(A_{1}, \ldots, A_{L}\right),\left\|A_{i}\right\|_{\sigma} \leq s_{i},\left\|A_{i}^{\top}\right\|_{2,1} \leq b_{i}\right\}$ is the family of outputs generated by feasible choices of complex-valued neural networks $\mathcal{F}_\mathcal{A}$, and $W$ denotes the maximal dimension of $\{d_0,d_1,,,,d_L\}.$
\end{lemma}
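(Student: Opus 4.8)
The plan is to prove Lemma~\ref{network cov} by induction on the depth, peeling off one layer at a time, invoking the single-layer matrix-covering bound of Lemma~\ref{linear cov} at each step, and then choosing the per-layer covering resolutions so that a fixed total approximation budget at the output is distributed across the layers in the way that minimizes the overall covering number. This last optimization is exactly what produces the $\big(\sum_i (b_i/s_i)^{2/3}\big)^3$ factor.

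First I would fix notation for the induction. For $0\le i\le L$ set $X_i=\sigma_i(A_i X_{i-1})$ with $X_0=Z^{\top}$, so that $\mathcal F$ is the set of all $X_L$ as $\mathcal A$ ranges over all tuples with $\|A_i\|_\sigma\le s_i$ and $\|A_i^{\top}\|_{2,1}\le b_i$ for every $i$, and let $\mathcal F_i$ denote the corresponding set of partial outputs $X_i$. Using the inequality $\|A_iX\|_2\le\|A_i\|_\sigma\|X\|_2$ (Frobenius controlled by spectral, valid verbatim for the complex norms of Section~3.3 since $\|A_iX\|_2^2=\sum_{\text{columns }x}\|A_i x\|_2^2$ and $\|A_i x\|_2\le\|A_i\|_\sigma\|x\|_2$), together with $\sigma_i(0)=0$ and $\rho_i$-Lipschitzness, every $X\in\mathcal F_{i-1}$ satisfies $\|X\|_2\le B_{i-1}:=\|Z\|_2\prod_{j<i}\rho_j s_j$. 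Since Euclidean projection onto the Frobenius ball of radius $B_{i-1}$ is a contraction, any $\epsilon_{i-1}$-cover of $\mathcal F_{i-1}$ may be assumed to lie inside that ball, which is what lets Lemma~\ref{linear cov} be applied at layer $i$ with a controlled data-norm parameter.

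Next I would execute the inductive step. Let $C_{i-1}$ be an $\epsilon_{i-1}$-cover of $\mathcal F_{i-1}$ inside the ball of radius $B_{i-1}$. For each fixed $\widehat X\in C_{i-1}$, apply Lemma~\ref{linear cov} to the family $\{A_i\widehat X:\|A_i^{\top}\|_{2,1}\le b_i\}$ — after transposing to put it in the form $ZA$, taking the conjugate pairs $(p,q)=(2,2)$ and $(r,s)=(\infty,1)$ so that $m^{2/r}=1$, with data-norm bound $b=B_{i-1}$, weight-norm bound $a=b_i$, and resolution $\tau_i$ — to get a $\tau_i$-cover of log-cardinality at most $\lceil b_i^2 B_{i-1}^2/\tau_i^2\rceil\ln(4W^2)$, bounding $4d_{i-1}d_i\le 4W^2$. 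Unioning these covers over $\widehat X\in C_{i-1}$ and pushing through $\sigma_i$, the triangle inequality $\|A_iX-\widehat A_i\widehat X\|_2\le\|A_i\|_\sigma\|X-\widehat X\|_2+\|A_i\widehat X-\widehat A_i\widehat X\|_2\le s_i\epsilon_{i-1}+\tau_i$ combined with Lipschitzness yields a cover of $\mathcal F_i$ at resolution $\epsilon_i:=\rho_i s_i\epsilon_{i-1}+\rho_i\tau_i$, whose log-cardinality exceeds that of $C_{i-1}$ by at most $\lceil b_i^2 B_{i-1}^2/\tau_i^2\rceil\ln(4W^2)$. Unrolling from $\epsilon_0=0$ gives a cover of $\mathcal F$ at resolution $\epsilon=\sum_{i=1}^L\alpha_i\tau_i$, where $\alpha_i:=\rho_i\prod_{j>i}\rho_j s_j$, with $\ln\mathcal N(\mathcal F,\epsilon,\|\cdot\|_2)\le\ln(4W^2)\sum_{i=1}^L\lceil\beta_i/\tau_i^2\rceil$ and $\beta_i:=b_i^2\|Z\|_2^2\prod_{j<i}(\rho_j s_j)^2$; the ceilings are discharged by the standard device of either restricting $\epsilon$ below the trivial-cover threshold or absorbing the additive terms into constants.

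Finally I would optimize the resolution allocation: for a prescribed $\epsilon$, minimize $\sum_i\beta_i/\tau_i^2$ subject to $\sum_i\alpha_i\tau_i=\epsilon$. A Lagrange-multiplier (equivalently, H\"older) computation gives the optimizer $\tau_i\propto(\beta_i/\alpha_i)^{1/3}$ and the value $\big(\sum_i\alpha_i^{2/3}\beta_i^{1/3}\big)^{3}/\epsilon^2$. Substituting the identity $\alpha_i^{2/3}\beta_i^{1/3}=\|Z\|_2^{2/3}\big(\prod_j\rho_j s_j\big)^{2/3}(b_i/s_i)^{2/3}$ and cubing collapses the sum to $\|Z\|_2^2\big(\prod_j\rho_j^2 s_j^2\big)\big(\sum_i(b_i/s_i)^{2/3}\big)^3$, which multiplied by $\ln(4W^2)/\epsilon^2$ is exactly the claimed bound. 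I expect the main obstacle to be this last step together with the error bookkeeping that feeds it: getting the spectral/Lipschitz propagation right so that the exponents in $\alpha_i$ and $\beta_i$ come out correctly, and carrying the $\|A_i^{\top}\|_{2,1}$ constraint through the transpose into the exact hypothesis of Lemma~\ref{linear cov} in the complex setting — the optimization is where the $\big(\sum_i(b_i/s_i)^{2/3}\big)^3$ term, and hence the eventual spectral complexity $R_{\mathcal A}$, is born, so it is the conceptual crux even though each individual manipulation is elementary.
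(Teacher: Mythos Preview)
Your proposal is correct and follows essentially the same route as the paper's proof: both establish an inductive covering lemma (the paper states it as a separate Lemma~6, you fold it into the main argument) that bounds $\ln\mathcal N(\mathcal F,\epsilon,\|\cdot\|_2)$ by a sum of per-layer log-covering numbers via the same triangle-inequality/Lipschitz propagation, invoke Lemma~\ref{linear cov} with $(q,s)=(2,1)$ at each layer, and then allocate the per-layer resolutions. The only cosmetic difference is that the paper simply \emph{declares} the optimal allocation $\epsilon_i\propto(b_i/s_i)^{2/3}$ and verifies it works, whereas you derive it by Lagrange/H\"older optimization of $\sum_i\beta_i/\tau_i^2$ under $\sum_i\alpha_i\tau_i=\epsilon$; your treatment is more explanatory about where the $2/3$ exponent comes from, but the resulting bound and the mechanics are identical.
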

 In general, we separate the proof of this lemma in two parts. The first part is to find out the relationship between the whole network upper bound and the matrix covering bounds of the previous $L$ layers, which is handled in Appendix B.3 Lemma 6. The second part is to combining Lemma \ref{linear cov} and Lemma 6, which together gives Lemma \ref{network cov} through the induction technique.
 
 \textbf{Step III} \quad Since only deriving a bound for the covering number of the whole network $\mathcal{N}(\mathcal{F}, \epsilon, \left\|\cdot\right\|_2)$ is not enough, we still have to derive an upper bound of empirical Rademacher complexity of the loss function family ($ \hat{\mathfrak{R}}_{S}(\mathcal{G})$). It's natural to think of connecting these two concept via Dudley Entropy Integral.  However, a little preparation work need to be done to satisfy the conditions of using standard Dudley Entropy Integral. 
 
 As the standard Dudley Entropy Integral only illustrates the relation between $\mathcal{N}(\mathcal{G}, \epsilon, \left\|\cdot\right\|_2)$ and $\hat{\mathfrak{R}}_{S}(\mathcal{G})$. Hence, to begin with, Lemma \ref{loss family} upper bounds $\mathcal{N}(\mathcal{G}, \epsilon, \left\|\cdot\right\|_2)$ by $\mathcal{N}(\mathcal{F}, \epsilon, \left\|\cdot\right\|_2)$

\begin{lemma}
\label{loss family}
Given family $\mathcal{F}:=\left\{F_{\mathcal{A}}(Z): \mathcal{A} \in \mathcal{B}_{1} \times \cdots \times \mathcal{B}_{L}\right\}$, and family $\mathcal{G}:=\left\{(z, y) \mapsto l\left(F_{\mathcal{A}}(z), y\right): F_{\mathcal{A}} \in \mathcal{F}\right\}$, then the covering number of these two families satisfy
\begin{equation}
  \mathcal{N}(\mathcal{F},\epsilon,\left\|\cdot\right\|_2)\geq\mathcal{N}(\mathcal{G},\epsilon,\left\|\cdot\right\|_2),
\end{equation}
if we let $l\left(F_{\mathcal{A}}(z), y\right)=\left\|\mathcal{F}_\mathcal{A}(z)-y\right\|_2.$
\end{lemma}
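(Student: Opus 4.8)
The plan is to establish the covering number inequality $\mathcal{N}(\mathcal{F},\epsilon,\|\cdot\|_2)\geq\mathcal{N}(\mathcal{G},\epsilon,\|\cdot\|_2)$ by showing that any $\epsilon$-cover of $\mathcal{F}$ induces an $\epsilon$-cover of $\mathcal{G}$ of no larger cardinality. The key observation is that the map sending a network function $F_{\mathcal{A}}$ to the loss function $(z,y)\mapsto\|F_{\mathcal{A}}(z)-y\|_2$ is $1$-Lipschitz with respect to the relevant norms, since subtracting the fixed label $y$ and taking the norm $\|\cdot\|_2$ are both non-expansive operations. Concretely, for two parameter choices $\mathcal{A}$ and $\hat{\mathcal{A}}$, the reverse triangle inequality gives $\bigl|\,\|F_{\mathcal{A}}(z)-y\|_2-\|F_{\hat{\mathcal{A}}}(z)-y\|_2\,\bigr|\leq\|F_{\mathcal{A}}(z)-F_{\hat{\mathcal{A}}}(z)\|_2$ pointwise in $(z,y)$.

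First I would fix a minimal $\epsilon$-cover $\{F_{\mathcal{A}^{(1)}},\dots,F_{\mathcal{A}^{(N)}}\}$ of $\mathcal{F}$ in the $\|\cdot\|_2$ metric, so $N=\mathcal{N}(\mathcal{F},\epsilon,\|\cdot\|_2)$. Then I would define the candidate cover of $\mathcal{G}$ to be the family of loss functions $g_k(z,y):=\|F_{\mathcal{A}^{(k)}}(z)-y\|_2$ for $k=1,\dots,N$; note these need not themselves lie in $\mathcal{G}$, but that is harmless for a covering argument — we only need them to be centers. Given an arbitrary $g\in\mathcal{G}$, say $g(z,y)=\|F_{\mathcal{A}}(z)-y\|_2$, pick the cover element $F_{\mathcal{A}^{(k)}}$ with $\|F_{\mathcal{A}}-F_{\mathcal{A}^{(k)}}\|_2\leq\epsilon$ (distance measured on the data matrix $Z$, consistent with how $\mathcal{N}(\mathcal{F},\cdot,\cdot)$ is defined in Lemma \ref{network cov}). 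Applying the pointwise Lipschitz bound above and then taking the appropriate $\|\cdot\|_2$ aggregation over the sample yields $\|g-g_k\|_2\leq\|F_{\mathcal{A}}-F_{\mathcal{A}^{(k)}}\|_2\leq\epsilon$, so $\{g_1,\dots,g_N\}$ is an $\epsilon$-cover of $\mathcal{G}$, giving $\mathcal{N}(\mathcal{G},\epsilon,\|\cdot\|_2)\leq N$.

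The main obstacle, and the step deserving the most care, is making sure the norms on the two sides are genuinely compatible: the cover of $\mathcal{F}$ is a cover of vectors of network outputs $F_{\mathcal{A}}(Z^T)\in\mathbb{C}^{d_L\times n}$ evaluated at the data, whereas the cover of $\mathcal{G}$ should be a cover of real-valued loss vectors $(l(F_{\mathcal{A}}(z_i),y_i))_{i=1}^n\in\mathbb{R}^n$. I would spell out that the "distance" used for $\mathcal{N}(\mathcal{G},\cdot,\cdot)$ is the Euclidean distance on these loss vectors, and that the per-coordinate contraction $|l(F_{\mathcal{A}}(z_i),y_i)-l(F_{\mathcal{A}^{(k)}}(z_i),y_i)|\leq\|F_{\mathcal{A}}(z_i)-F_{\mathcal{A}^{(k)}}(z_i)\|_2$ survives the $\ell_2$ aggregation over $i$ because squaring and summing is monotone. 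A secondary subtlety is the complex-to-real passage in the matrix norm convention of equation (8)–(9); since $\|\cdot\|_2$ on a complex vector is defined as the Frobenius/Euclidean norm of the entrywise moduli, the reverse triangle inequality and monotonicity still hold verbatim, so no extra factor is incurred. Once these compatibility points are nailed down, the argument is essentially the standard "composition with a $1$-Lipschitz map does not increase covering numbers" fact, and the proof closes immediately.
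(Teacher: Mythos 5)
Your proposal is correct and follows essentially the same route as the paper's own proof: take a minimal $\epsilon$-cover of $\mathcal{F}$, map each center $h$ to the loss $\left\|h(Z)-Y\right\|_2$, and use the reverse triangle inequality to show the image is an $\epsilon$-cover of $\mathcal{G}$ of the same cardinality. Your added care about the per-sample $\ell_2$ aggregation of the loss vector is a welcome tightening of a step the paper's proof glosses over, but it does not change the argument.
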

Moreover, since the range of the loss function we adapted does not lies in $[0,1]$, Lemma \ref{scaling} investigates the covering number after rescaling. 
\begin{lemma}
\label{scaling}
If a coefficient, say $\alpha>0$, is multiplied to the targeting set $\mathcal{G}$ and distant constant $\epsilon$, then the covering number shall remain unchanged, i.e.,
  \begin{equation}
   \mathcal{N}(\mathcal{G},\epsilon,\left\|\cdot\right\|_2)=\mathcal{N}(\alpha\mathcal{G},\alpha\cdot\epsilon,\left\|\cdot\right\|_2).   
  \end{equation}
  Here $\alpha \mathcal{G}$ represents a set which is obtained by scaling $\alpha$ to each element of $\mathcal{G}$.
 \end{lemma}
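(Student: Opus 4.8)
The plan is to prove Lemma~\ref{scaling} directly from the definition of a covering number, exploiting the fact that scaling by a positive constant $\alpha$ is a bijection of the metric space that multiplies all pairwise $L_2$ distances by exactly $\alpha$. Concretely, recall that $\mathcal{N}(\mathcal{G},\epsilon,\|\cdot\|_2)$ is the cardinality of a smallest finite set $C$ (a "cover") such that every $g\in\mathcal{G}$ lies within $\|\cdot\|_2$-distance $\epsilon$ of some $c\in C$. I would first show $\mathcal{N}(\alpha\mathcal{G},\alpha\epsilon,\|\cdot\|_2)\le\mathcal{N}(\mathcal{G},\epsilon,\|\cdot\|_2)$: take an optimal $\epsilon$-cover $C$ of $\mathcal{G}$ of minimal size, and consider $\alpha C:=\{\alpha c: c\in C\}$. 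For any element $\alpha g\in\alpha\mathcal{G}$, pick $c\in C$ with $\|g-c\|_2\le\epsilon$; then $\|\alpha g-\alpha c\|_2=\alpha\|g-c\|_2\le\alpha\epsilon$, so $\alpha C$ is an $\alpha\epsilon$-cover of $\alpha\mathcal{G}$ with $|\alpha C|\le|C|=\mathcal{N}(\mathcal{G},\epsilon,\|\cdot\|_2)$.

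For the reverse inequality I would run the identical argument with the roles reversed, using that $\mathcal{G}=\alpha^{-1}(\alpha\mathcal{G})$ and that $\alpha^{-1}>0$: an optimal $\alpha\epsilon$-cover $C'$ of $\alpha\mathcal{G}$ yields $\alpha^{-1}C'$, which is an $\epsilon$-cover of $\mathcal{G}$ of size at most $|C'|=\mathcal{N}(\alpha\mathcal{G},\alpha\epsilon,\|\cdot\|_2)$. Combining the two inequalities gives the claimed equality. The one point that needs a word of care is that $\|\cdot\|_2$ here is the entrywise (Frobenius-type) norm on complex matrices as fixed in Section~3.3, so positive homogeneity $\|\alpha A\|_2=\alpha\|A\|_2$ for $\alpha>0$ must be invoked explicitly; this is immediate from the definition in equation~(9) together with $|\alpha A_{i,j}|=\alpha|A_{i,j}|$.

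Honestly, there is no real obstacle here — the lemma is essentially a restatement of the scale-covariance of covering numbers under dilations, and the "hard part," if any, is purely bookkeeping: making sure the cover need not be a subset of the set being covered (or, if the paper's convention requires internal covers, noting that $\alpha C\subseteq\alpha\mathcal{G}$ whenever $C\subseteq\mathcal{G}$, so the argument survives either convention), and that the infimum defining $\mathcal{N}$ is attained (or, in the degenerate case where no finite cover exists, that both sides are $+\infty$ simultaneously). I would state the proof in two short displayed chains of inequalities, one for each direction, and conclude.
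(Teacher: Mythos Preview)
Your proposal is correct and follows essentially the same approach as the paper's own proof: take a minimal cover, scale it by $\alpha$ (respectively $\alpha^{-1}$), and use the positive homogeneity $\|\alpha g - \alpha h\|_2 = \alpha\|g-h\|_2$ to conclude each inequality. If anything, your write-up is more careful than the paper's (which dispatches the reverse direction with a one-line ``vice versa''), so there is nothing to add.
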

With the help of Lemma \ref{loss family} and Lemma \ref{scaling},  the Rademacher complexity of $\mathcal{G}$ can be bounded through Dudley Entropy Integral. We prove Theorem \ref{thm:iid} through directly substituting $ \hat{\mathfrak{R}}_{S}(\mathcal{G})$ in Theorem \ref{thm:non iid} with the value of upper bound we obtained. The detained proof is exhibited in Appendix B.4.

\section{Experimental Results}
In this section, we present experimental results of training complex-valued convolutional neural networks by SGD on six different datasets: MNIST, FashionMNIST, CIFAR-10, CIFAR-100, Tiny ImageNet, and IMDB.

Before presenting the experimental result, a recap of the two upper bounds we derived in Theorem \ref{thm:iid} and Theorem \ref{thm:non iid} shall be given. In both i.i.d. data case and sequential data case, we show that the upper bound we derive scales with the spectral complexity of this complex-valued neural network: 
\begin{equation}R_{\mathcal{A}}:=\left(\prod_{i=1}^{L} \rho_{i}\left\|A_{i}\right\|_{\sigma}\right)\left(\sum_{i=1}^{L} \frac{\left\|A_{i}^{\top}\right\|_{2,1}^{2 / 3}}{\left\|A_{i}\right\|_{\sigma}^{2 / 3}}\right)^{3 / 2}.
\end{equation}
The formula for the spectral norm $R_{\mathcal{A}}$ consists of two parts: the lipschitz constant of this neural network $(\prod_{i=1}^{L} \rho_{i}\left\|A_{i}\right\|_{\sigma})$ and another factor related to the sum of quotients of weight matrix norms $((\sum_{i=1}^{L} \frac{\left\|A_{i}^{\top}\right\|_{2,1}^{2 / 3}}{\left\|A_{i}\right\|_{\sigma}^{2 / 3}})^{3 / 2})$. As in the training process, the part which dominates the change of $R_{\mathcal{A}}$ is the first part, the lipschitz constant of the neural network, and the lipschitz constants of activation functions ($\rho_i$) remain unchanged. Therefore, we use the change of the spectral norm product ( $\prod_{i=1}^{L} \left\|A_{i}\right\|_{\sigma} $) to simulate the changing trend of $R_{\mathcal{A}}$.

\begin{figure}[t]
\centering
\subfigure[]{
\begin{minipage}[b]{0.3\textwidth}
    		\includegraphics[width=0.99\columnwidth]{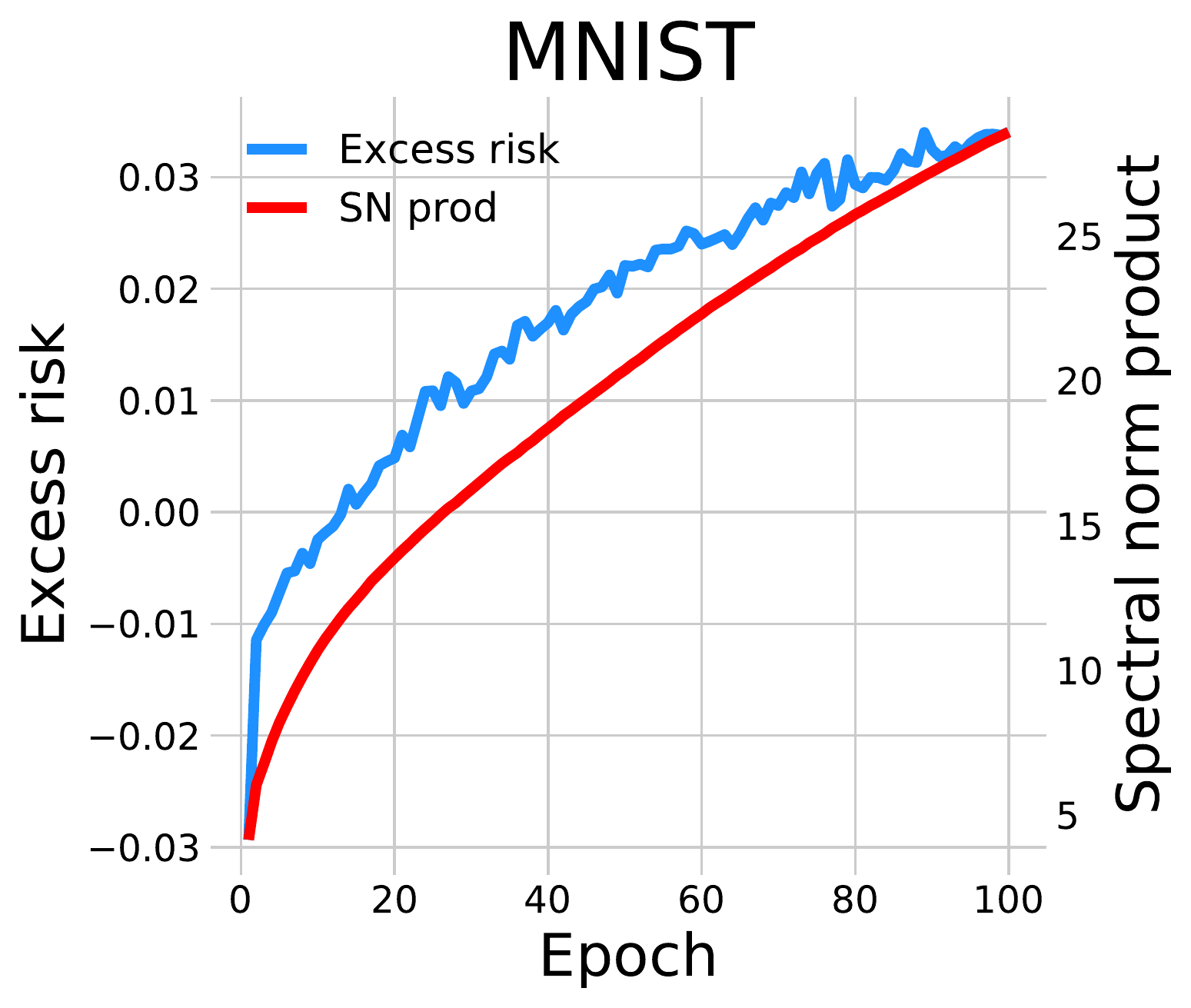}
    		\end{minipage}
		\label{figure:mnist}   
    	}
\subfigure[]{
\begin{minipage}[b]{0.3\textwidth}
    		\includegraphics[width=0.99\columnwidth]{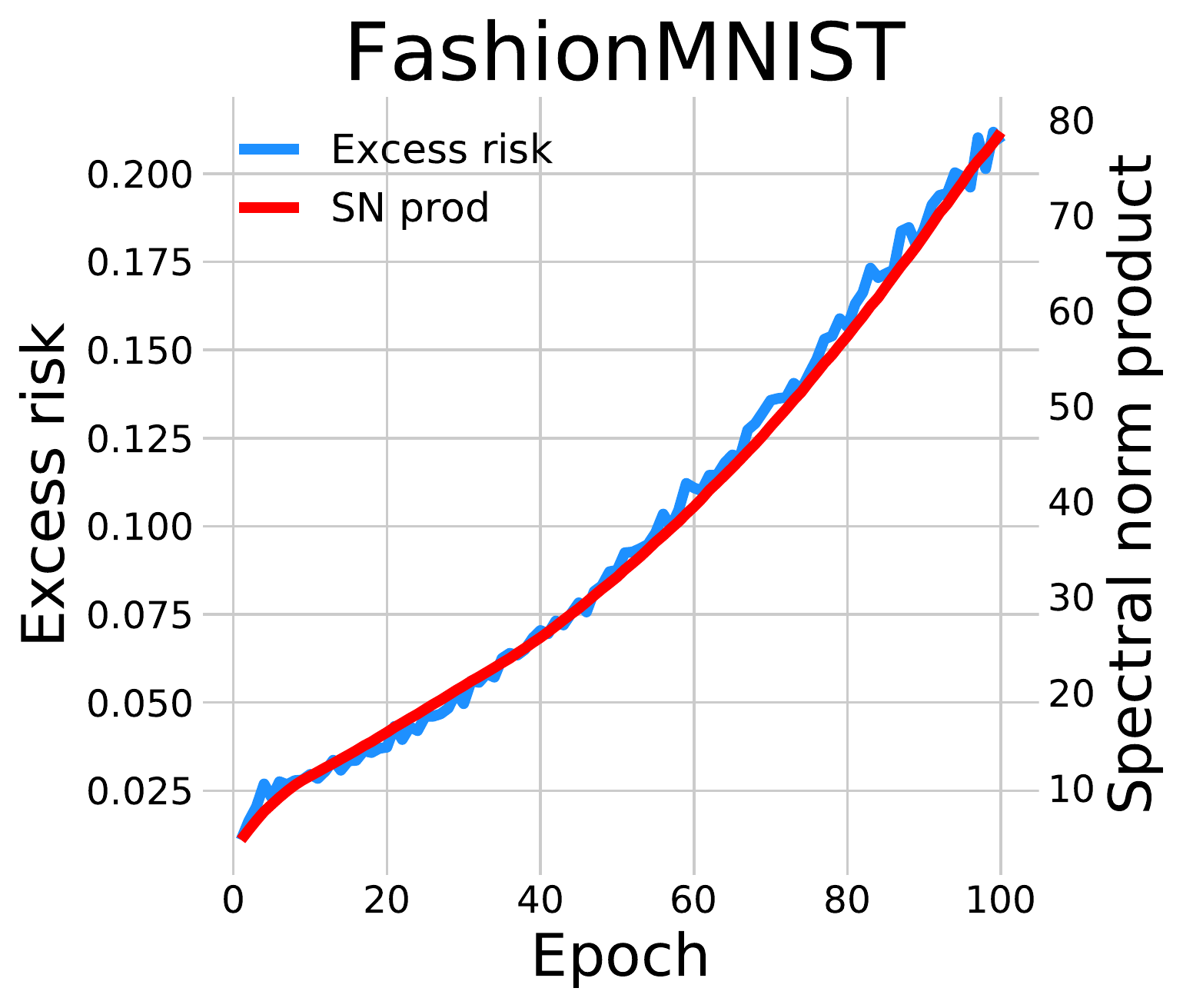}
    		\end{minipage}
		\label{figure:fashionmnist}   
    	}
\subfigure[]{
\begin{minipage}[b]{0.3\textwidth}
    		\includegraphics[width=0.99\columnwidth]{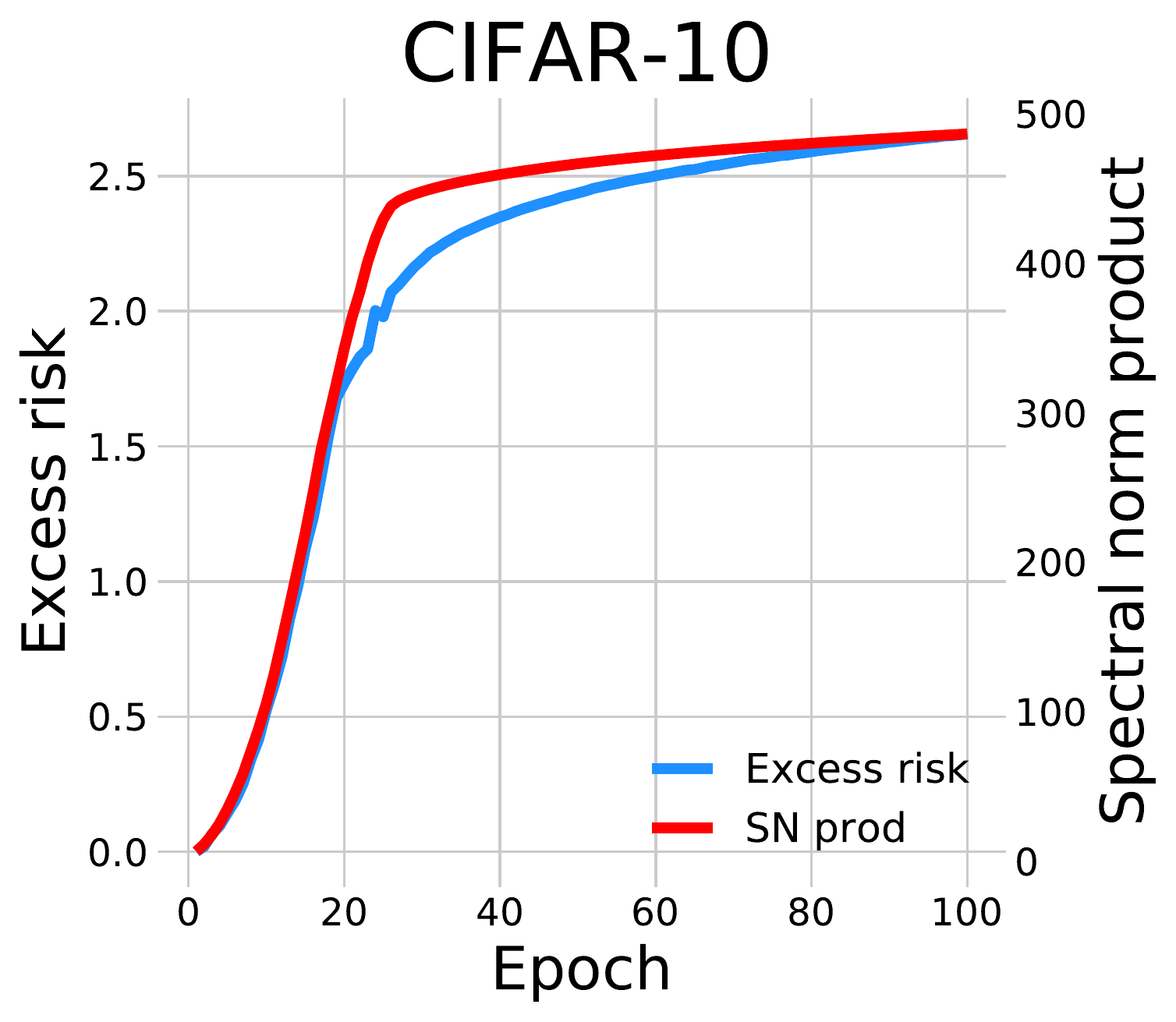}
    		\end{minipage}
		\label{figure:cifar10}   
    	}
\vfill
\subfigure[]{
\begin{minipage}[b]{0.3\textwidth}
    		\includegraphics[width=0.99\columnwidth]{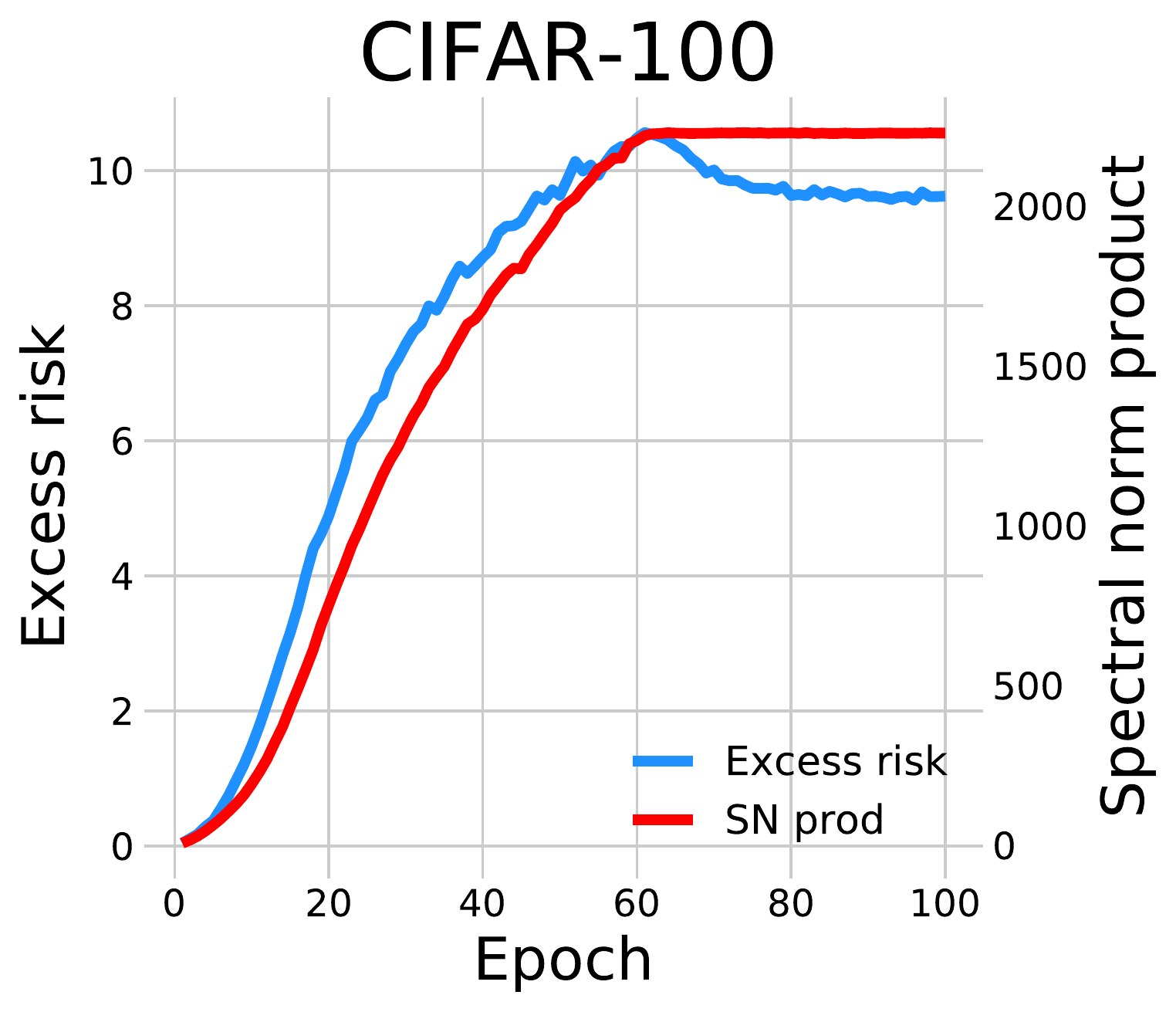}
    		\end{minipage}
		\label{figure:cifar100}   
    	}
\subfigure[]{
\begin{minipage}[b]{0.3\textwidth}
    		\includegraphics[width=0.99\columnwidth]{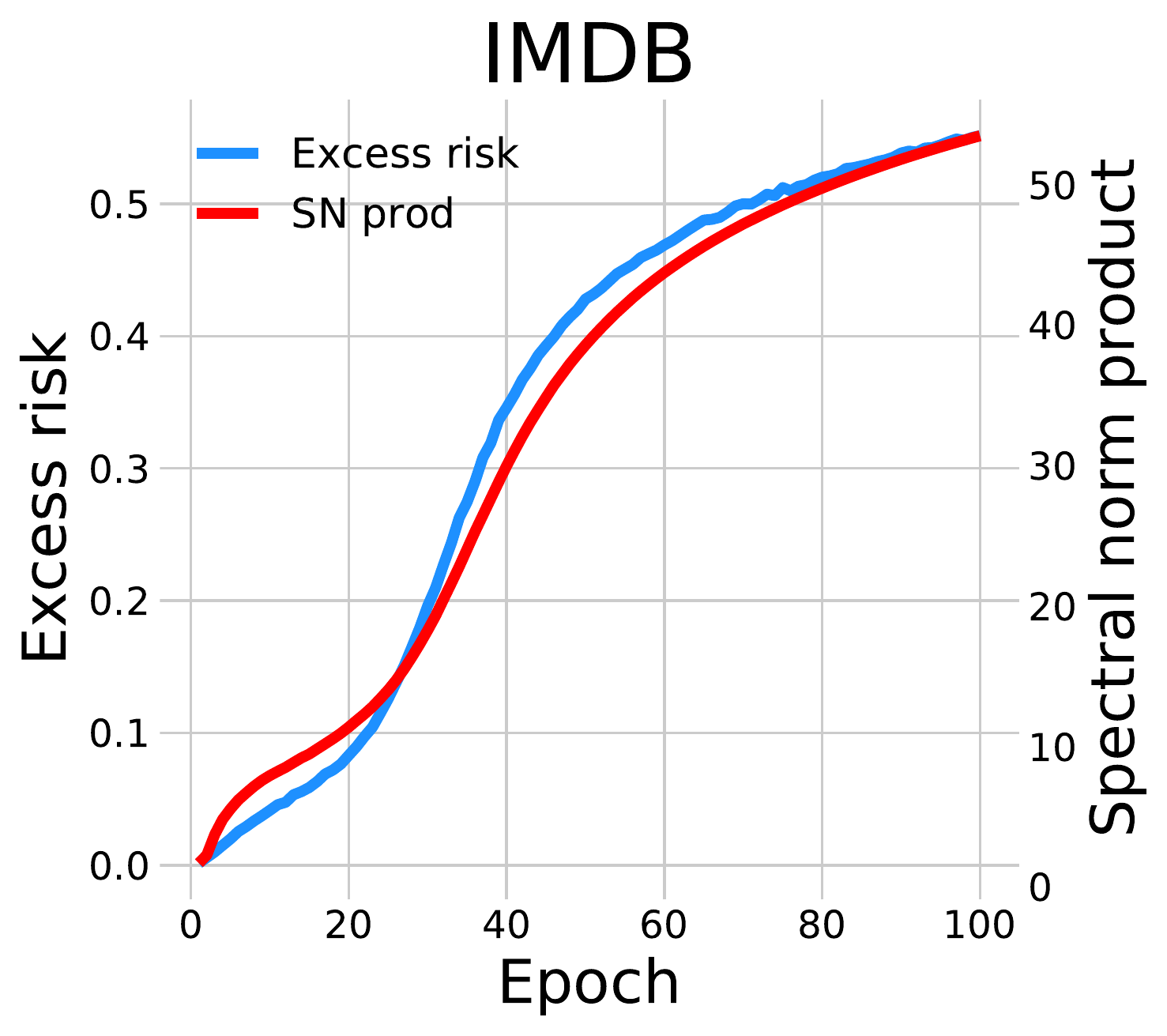}
    		\end{minipage}
		\label{figure:imdb}   
    	}
\subfigure[]{
\begin{minipage}[b]{0.3\textwidth}
    		\includegraphics[width=0.99\columnwidth]{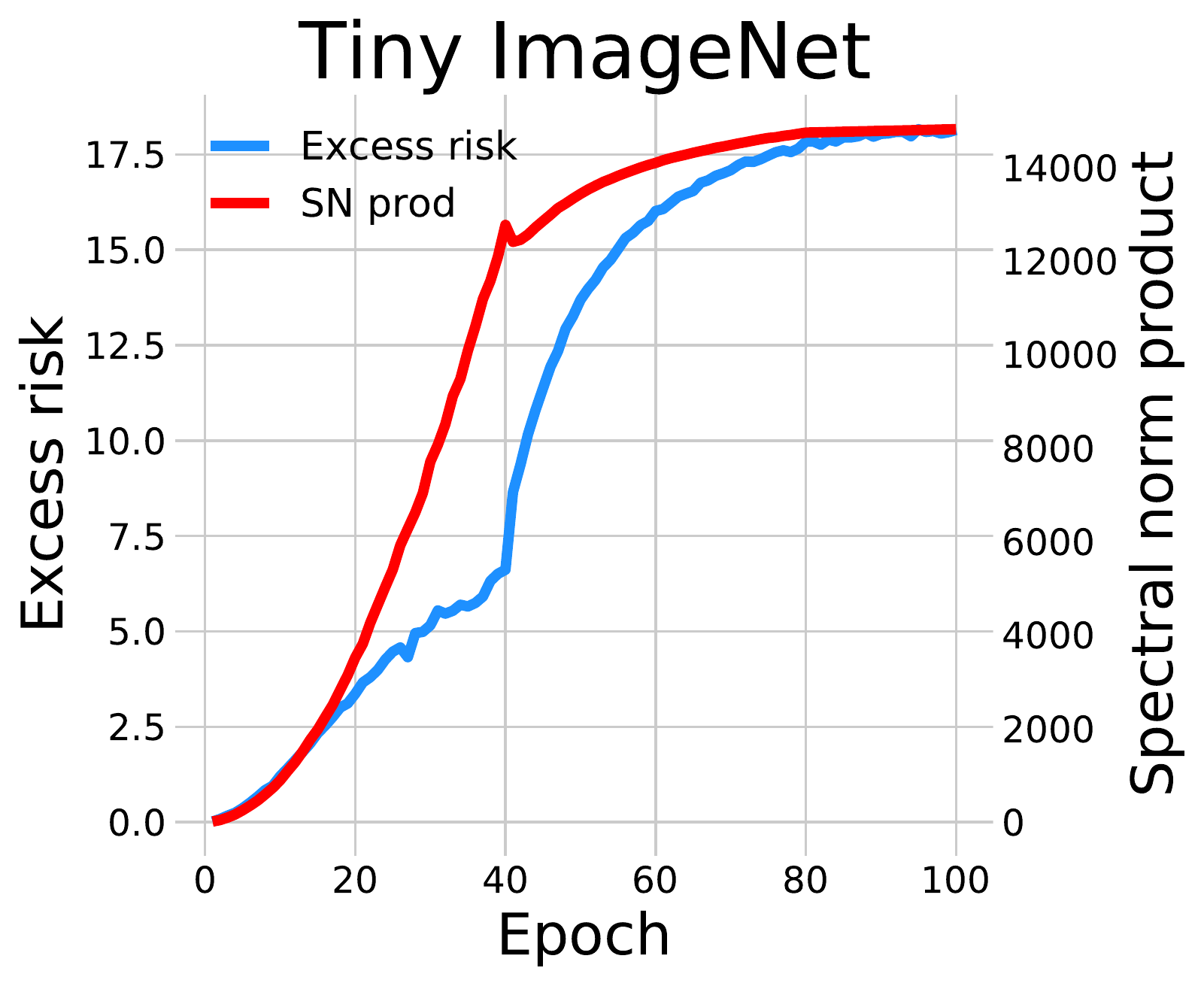}
    		\end{minipage}
		\label{figure:tinyimagenet}   
    	}
\caption{Plots of excess risk and spectral norm product (SN prod) as functions of epoch. The right y-axis denotes the spectral norm product, and the left y-axis denotes the excess risk.}
\label{figure:experiment}
\end{figure}

\subsection{Spectral Norm of the Weight Matrix}
We first show how to calculate the spectral norm of the complex weight matrix in each convolutional layer. 

Considering the complex-valued kernel $W=X+Yi$ in each layer, where $X$ and $Y$ are two real-valued kernels. Since each convolutional kernel is corresponding to a linear transformation weight matrix \citep{guo2019frobenius}, therefore we can derive the real-valued weight matrices of kernels $X$ and $Y$, denoted by $C$ and $D$. Hence, the complex-valued weight matrix $A$ of each layer can be expressed as $C+Di$. Then, by definition, the spectral norm of the complex-valued matrix $A$ is: 
\begin{align}
\|A\|_{\sigma}:&=\sup _{\|v\|_{2}=1}\|A v\|_{2}=\sqrt{\lambda_{\max }\left(A^{*} A\right)} \nonumber\\
&=\sqrt{\lambda_{\max }\left(C^{T}C+D^{T}D+(C^{T}D-CD^{T})i\right)}.
\end{align}
Here, since $A^{*}A$ is a Hermitian matrix, therefore it will only have real eigenvalues.

\subsection{Results}

The architectures of the complex-valued neural networks we used are described in Appendix E.2.
The datasets we used are MNIST, FashionMNIST, CIFAR-10, CIFAR-100, Tiny ImageNet, and IMDB.  Descriptions for these datasets are presented in Appendix E.1. We trained the CVNNs by SGD on MNIST, FashionMNIST, CIFAR-10, CIFAR-100 and Tiny ImageNet to investigate the generalization bound derived in Theorem \ref{thm:iid}, and we trained the CVNN on IMDB to investigate the generalization bound derived in Theorem \ref{thm:non iid} when training data are sequential. Results are shown in Figure \ref{figure:experiment}.

In Figure \ref{figure:experiment}, the plot of excess risk and spectral norm product as functions of epoch is illustrated. Additionally, we perform Spearman rank-order correlation test on all the excess risks and spectral norm products of MNIST, FashionMNIST, CIFAR-10, CIFAR-100, IMDB and Tiny ImageNet.  The Spearman’s rank-order correlation coefficients (sccs) and p values show that the correlation between the spectral norm product and the generalization ability is statistically significant ($p<0.005$\footnote{
The definition of “statistically significant” has various versions, such as $p < 0.05$ and $p < 0.01$. This paper
uses a more rigorous one ($p < 0.005$).}),  as Table \ref{tabl} demonstrates. The result strongly supports our theoretical discoveries. 

\begin{table}[t]
	
		\centering
	\caption{SCC and p values of the spectral norm product and excess risk}
	\label{tabl}%
	\begin{tabular}{cccccccccccc}
	     \toprule 
		 
	     \multicolumn{2}{c}{CIFAR-10}
		&\multicolumn{2}{c}{CIFAR-100} 
		&\multicolumn{2}{c}{MNIST}
	\\
        \cmidrule(lr){1-2} 
        \cmidrule(lr){3-4} 
        \cmidrule(lr){5-6} 
        
		 SCC&p&SCC&p&SCC&p\\
	    \cmidrule(lr){1-2} 
        \cmidrule(lr){3-4} 
        \cmidrule(lr){5-6} 
       0.99&$3.703\times 10^{-228}$&0.80&$4.124\times10^{-23}$&0.99&$9.044\times10^{-142}$\\
        \toprule
        \multicolumn{2}{c}{IMDB} 	
		&\multicolumn{2}{c}{FashionMNIST}
		&\multicolumn{2}{c}{Tiny ImageNet}\\
		\cmidrule(lr){1-2} 
        \cmidrule(lr){3-4} 
        \cmidrule(lr){5-6} 
        SCC&p&SCC&p&SCC&p\\
          \cmidrule(lr){1-2} 
        \cmidrule(lr){3-4} 
        \cmidrule(lr){5-6} 
       0.99&$6.118\times 10^{-194}$&0.99&$3.703\times10^{-142}$&0.99&$4.060\times10^{-125}$\\
	\bottomrule
	\end{tabular}%

\end{table}%

\section{Conclusions}
This work presents two complex-valued neural network generalization bounds under i.i.d. data case and sequential data case. These bounds scale with the spectral complexity,  which contains the spectral norm product of weight matrices as a factor, and are proved from
both theoretical and empirical aspects in this paper. We hope that our work can provide theoretical evidence for the generalization ability of complex-valued neural networks, and stimulate more investigation on other properties of complex-valued neural networks.

\bibliography{mybibfile}
\appendix
\section*{Appendices}
\addcontentsline{toc}{section}{Appendices}
\renewcommand{\thesubsection}{\Alph{subsection}}
\subsection{Lipschitz Properties of Several Activation Function}
In this section, our goal is to prove three types of activation functions that are widely used in complex-valued neural networks are lipschitz continuous. 

The first one is from \citet{nitta2002redundancy},
\begin{equation*}
\sigma_1(z)=tanh(Re(z))+itanh(Im(z)).
\end{equation*}
This activation function applies hyperbolic tangent function on both the real part and the imaginary part. Since the derivative of the hyperbolic tangent function is upper bounded by 1, hence we can see that $\sigma_1$ is 1-lipschitz in each coordinate, if we view the real part and imaginary part as different coordinates. Then we have 
\begin{align*}
 &\left\|\sigma_{1}\left(z_{1}\right)-\sigma_{1}\left(z_{1}^{\prime}\right)\right\|_{p}\nonumber\\ =&\left[\left[\tanh \left(\operatorname{Re}\left(z_{1}\right)\right)-\tanh \left(\operatorname{Re}\left(z_{1}^{\prime}\right)\right)\right]^{p} +\left[\tanh \left(\operatorname{Im}\left(z_{1}\right)\right)-\tanh \left(\operatorname{Im}\left(z_{1}^{\prime}\right)\right)\right]^{p }\right]^{\frac{1}{p}} \\
 \leq&\left[\left(\operatorname{Re}\left(z_{1}\right)-\operatorname{Re}\left(z_{1}^{\prime}\right)\right)^{p}+\left(\operatorname{Im}\left(z_{1}\right)-\operatorname{Im}\left(z_{1}^{\prime}\right)\right)^{p}\right]^{\frac{1}{p}} \\
=&\left\|z_{i}-z_{i}^{\prime}\right\|_{p}.
\end{align*}
The first inequality holds because the hyperbolic tangent function is 1-lipschitz. 

The second type of activation function is the $\mathbb{C}ReLU$ function from \citet{trabe},
\begin{equation*}\sigma_2(z)=ReLU(Re(z))+iReLU(Im(z)).\end{equation*}
This function also operates separately on both the real part and the imaginary part. The proof process is quite similar as the first one, since the ReLU function is also 1-lipschitz.
\begin{align*}
&\left\|\sigma_{1}\left(z_{1}\right)-\sigma_{1}\left(z_{1}^{\prime}\right)\right\|_{p}\nonumber\\ =&\left[\left[\tanh \left(\operatorname{Re}\left(z_{1}\right)\right)-\tanh \left(\operatorname{Re}\left(z_{1}^{\prime}\right)\right)\right]^{p}+\left[\tanh \left(\operatorname{Im}\left(z_{1}\right)\right)-\tanh \left(\operatorname{Im}\left(z_{1}^{\prime}\right)\right)\right]^{p}\right]^{\frac{1}{p}} \nonumber\\
\leq& \left[\left(\operatorname{Re}\left(z_{1}\right)-\operatorname{Re}\left(z_{1}^{\prime}\right)\right)^{p}+\left(\operatorname{Im}\left(z_{1}\right)-\operatorname{Im}\left(z_{1}^{\prime}\right)\right)^{p}\right]^{\frac{1}{p}} \\
=&\left\|z_{i}-z_{i}^{\prime}\right\|_{p}.
\end{align*}                                

The third type of activation function is 
\begin{equation*}
\sigma_3(z)=tanh(|z|)\exp(i\theta),
\end{equation*} where $\theta=arg(z)$. 
If we write $z=x+yi$, and use vector notation to represent the real part and imaginary part of the operation, we will get
\begin{equation*}
\begin{bmatrix}
Re(\sigma_3(z))\\
Im(\sigma_3(z))
\end{bmatrix}=
\begin{bmatrix}
tanh[(x^2+y^2)^\frac{1}{2}]\frac{x}{(x^2+y^2)^\frac{1}{2}}\\
tanh[(x^2+y^2)^\frac{1}{2}]\frac{y}{(x^2+y^2)^\frac{1}{2}}\\
\end{bmatrix}.
\end{equation*}

Notice that 

    \begin{equation*}
    \left|tanh[(x^2+y^2)^\frac{1}{2}]\frac{1}{(x^2+y^2)^\frac{1}{2}}\right| \leq 1.
    \end{equation*}

Hence, we have the following inequality
\begin{align*}
 &\left|tanh[(x_1^2+y_1^2)^\frac{1}{2}]\frac{x_1}{(x_1^2+y_1^2)^\frac{1}{2}}- tanh[(x_2^2+y_2^2)^\frac{1}{2}]\frac{x_2}{(x_2^2+y_2^2)^\frac{1}{2}}\right|\\
 \leq&\left|tanh[(x_1^2+y_1^2)^\frac{1}{2}]\frac{x_1}{(x_1^2+y_1^2)^\frac{1}{2}}- tanh[(x_1^2+y_1^2)^\frac{1}{2}]\frac{x_2}{(x_1^2+y_1^2)^\frac{1}{2}}\right|\nonumber\\
 &+ \left|tanh[(x_1^2+y_1^2)^\frac{1}{2}]\frac{x_2}{(x_1^2+y_1^2)^\frac{1}{2}}- tanh[(x_2^2+y_2^2)^\frac{1}{2}]\frac{x_2}{(x_2^2+y_2^2)^\frac{1}{2}}\right|\\
 =& \left| \frac{tanh[(x_1^2+y_1^2)^\frac{1}{2}]}{(x_1^2+y_1^2)^\frac{1}{2}}\right|\left|x_1-x_2\right|\nonumber\\
 &+\left| \frac{tanh[(x_1^2+y_1^2)^\frac{1}{2}]}{(x_1^2+y_1^2)^\frac{1}{2}}-\frac{tanh[(x_2^2+y_2^2)^\frac{1}{2}]}{(x_2^2+y_2^2)^\frac{1}{2}}\right||x_2|.
 \end{align*}
 Noted that, assume $g(x)=\frac{tanh(x)}{x}$, then through calculation, we have $\left|g'(x)\right|$ bounded by 1. Hence, $g(x)$ is 1-lipschitz.
 
 Therefore, we have 
\begin{align*}
   &\left| \frac{tanh[(x_1^2+y_1^2)^\frac{1}{2}]}{(x_1^2+y_1^2)^\frac{1}{2}}-\frac{tanh[(x_2^2+y_2^2)^\frac{1}{2}]}{(x_2^2+y_2^2)^\frac{1}{2}}\right||x_2|
   \nonumber\\
\leq &\left|(x_1^2+y_1^2)^\frac{1}{2}-(x_2^2+y_2^2)^\frac{1}{2}\right||x_2|\\
\leq &\left|\frac{x_1^2-x_2^2+y_1^2-y_2^2}{(x_1^2+y_1^2)^\frac{1}{2}+(x_2^2+y_2^2)^\frac{1}{2}}\right||x_2|\\
\leq &|x_2|\left|\frac{(x_1+x_2)}{(x_1^2+y_1^2)^\frac{1}{2}+(x_2^2+y_2^2)^\frac{1}{2}}\right| |x_1-x_2|\nonumber\\
+&|x_2|\left|\frac{(y_1+y_2)}{(x_1^2+y_1^2)^\frac{1}{2}
+(x_2^2+y_2^2)^\frac{1}{2}}\right| |y_1-y_2|\\
\leq &\alpha|x_1-x_2|+\alpha|y_1-y_1|
\end{align*}
for some constant $\alpha$ such that $|x_2|\leq \alpha$.

Then, we can bound the first coordinate by 
\begin{align*} 
&\left|tanh[(x_1^2+y_1^2)^\frac{1}{2}]\frac{x_1}{(x_1^2+y_1^2)^\frac{1}{2}}- tanh[(x_2^2+y_2^2)^\frac{1}{2}]\frac{x_2}{(x_2^2+y_2^2)^\frac{1}{2}}\right|\nonumber\\
\leq &(\alpha+1)|x_1-x_2|+\alpha|y_1-y_1|.
\end{align*}
Without loss of generality, the second coordinate is bounded by \begin{equation*}
\alpha|x_1-x_2|+(\alpha+1)|y_1-y_1|.
\end{equation*} 

Finally, we have 
\begin{align*}
&\left\|\sigma_3(z_1)-\sigma_3(z_2)\right\|_p \nonumber\\
\leq &\left(\left((\alpha+1)|x_1-x_2|+\alpha|y_1-y_2|\right)^p+\left(\alpha|x_1-x_2|+(\alpha+1)|y_1-y_2|\right)^p\right)^\frac{1}{p}\\
\leq &\left(M|x_1-x_2|^p+M|y_1-y_2|^p\right)^\frac{1}{p}=(2\alpha+1)\left\|z_1-z_2\right\|_p,
\end{align*}
where $M=(2\alpha+1)^p$, $z_1=x_1+iy_1$ and $z_2=x_2+iy_2$. 

Hence, we have proved that $\sigma_3(z)=tanh(|z|)exp(i\theta)$ is lipschitz continuous.

\subsection{Proof of Theorem 1}
\subsubsection{Proof of Lemma 1}
Before proving lemma 1, we first introduce the Maurey's sparsification lemma \citep{pisie,bartl}. 
\begin{lemma}[Maurey's sparsification lemma \citep{pisie}] In a Hilbert space $\mathcal{H}$ equipped with norm $||\cdot||$, consider $f\in \mathcal{H}$ such that $f=\sum\limits_{i=1}^{n}\alpha_i g_i$ where $g_i\in \mathcal{H}$, $\alpha_i$ are positive real numbers, and $\alpha=\sum\limits_{i=1}^{n}\alpha_i\neq 0$. Then, for any positive integer k, there always exist non negative integers $k_1, k_2, ..., k_n$ such that $\sum\limits_{i=1}^{n}k_i=k$ such that  
\begin{equation*}\left\|f-\frac{\alpha}{k} \sum_{i=1}^{n} k_{i} g_{i}\right\|^{2} \leq \frac{\alpha}{k} \sum_{i=1}^{n} \alpha_{i}\left\|g_{i}\right\|^{2} \leq \frac{\alpha^{2}}{k} \max _{i}\left\|g_{i}\right\|^{2}
\end{equation*}
i.e.
\begin{equation*}\left\|\sum_{i=1}^{n}\frac{\alpha_i}{\alpha}g_i-\sum_{i=1}^{n}\frac{  k_{i} }{k}g_{i} \right\|^{2} \leq \sum_{i=1}^{n}\frac{ \alpha_{i}}{k}\left\|g_{i}\right\|^{2}  \leq \frac{\alpha}{k} \max _{i}\left\|g_{i}\right\|^{2}.
\end{equation*}

\end{lemma}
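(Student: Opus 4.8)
The plan is to prove Maurey's sparsification lemma by the probabilistic method: realize the ``sparsified'' element $\frac{\alpha}{k}\sum_{i=1}^{n} k_i g_i$ as the (rescaled) sample mean of a suitable $\mathcal{H}$-valued random variable, show that its \emph{expected} squared distance to $f$ is already below the target bound, and then conclude by the usual ``there must be a realization at least as good as the mean'' argument.

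First I would introduce the random element $V$ of $\mathcal{H}$ that takes the value $g_i$ with probability $\alpha_i/\alpha$; this is a legitimate probability distribution because the $\alpha_i$ are positive and sum to $\alpha\neq 0$. Then $\mathbb{E}[V]=\sum_{i=1}^{n}\frac{\alpha_i}{\alpha}g_i=\frac{1}{\alpha}f$. Drawing i.i.d.\ copies $V_1,\dots,V_k$ of $V$ and setting $\bar V=\frac{1}{k}\sum_{j=1}^{k}V_j$, we still have $\mathbb{E}[\bar V]=\frac{1}{\alpha}f$, while $\alpha\bar V=\frac{\alpha}{k}\sum_{i=1}^{n}k_i g_i$ with $k_i:=\#\{\,j: V_j=g_i\,\}$ a choice of non-negative integers summing to $k$. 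So $\alpha\bar V$ ranges exactly over the admissible family in the statement.

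Next I would bound the second moment using the inner-product structure of $\mathcal{H}$: by independence the cross terms cancel, so
\[
\mathbb{E}\,\|\bar V-\mathbb{E}[V]\|^2=\frac{1}{k^2}\sum_{j=1}^{k}\mathbb{E}\,\|V_j-\mathbb{E}[V]\|^2=\frac{1}{k}\big(\mathbb{E}\,\|V\|^2-\|\mathbb{E}[V]\|^2\big)\le\frac{1}{k}\,\mathbb{E}\,\|V\|^2=\frac{1}{k}\sum_{i=1}^{n}\frac{\alpha_i}{\alpha}\|g_i\|^2 .
\]
Multiplying by $\alpha^2$ gives
\[
\mathbb{E}\,\|f-\alpha\bar V\|^2=\alpha^2\,\mathbb{E}\,\|\alpha^{-1}f-\bar V\|^2\le\frac{\alpha}{k}\sum_{i=1}^{n}\alpha_i\|g_i\|^2\le\frac{\alpha^2}{k}\max_i\|g_i\|^2 ,
\]
where the last inequality holds because $\sum_{i}\frac{\alpha_i}{\alpha}\|g_i\|^2$ is a convex combination of the $\|g_i\|^2$. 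Since a non-negative random quantity cannot always exceed its mean, there is a realization of the counts $(k_1,\dots,k_n)$ for which $\|f-\frac{\alpha}{k}\sum_{i}k_i g_i\|^2$ is at most this right-hand side, which is precisely the asserted inequality (equivalently, in the normalized ``i.e.'' form after dividing by $\alpha^2$).

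The argument is essentially routine, so the only things that need care are: (i) checking that $\alpha\bar V$ genuinely sweeps out the finite family $\{\frac{\alpha}{k}\sum_i k_i g_i: k_i\ge 0,\ \sum_i k_i=k\}$, so that ``a realization of $\bar V$'' corresponds to an admissible tuple of integers; and (ii) the variance identity $\mathbb{E}\,\|\bar V-\mathbb{E}[V]\|^2=\frac{1}{k}\big(\mathbb{E}\,\|V\|^2-\|\mathbb{E}[V]\|^2\big)$, which is where the Hilbert-space hypothesis is actually used — the cancellation of the cross terms $\langle V_i-\mathbb{E}[V],V_j-\mathbb{E}[V]\rangle$ requires the norm to come from an inner product. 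Everything else is bookkeeping, and I would not expect any genuine obstacle beyond these two points.
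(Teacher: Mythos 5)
Your proposal is correct and follows essentially the same route as the paper: both define an $\mathcal{H}$-valued random variable with law $\alpha_i/\alpha$ on the $g_i$, compute the second moment of the empirical mean of $k$ i.i.d.\ copies via the inner-product cancellation of cross terms, and conclude by the ``some realization is at most the mean'' argument (your version merely carries the factor $\alpha$ outside rather than baking it into the random variable as the paper does with $W_1=\alpha g_i$). The two points you flag for care — that the realizations of $\alpha\bar V$ sweep exactly the admissible integer tuples, and that the variance identity is where the Hilbert-space hypothesis enters — are exactly the content of the paper's proof.
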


\begin{proof}
Define k i.i.d random variable $W_1,W_2,...,W_k$ such that $P(W_1=\alpha g_i)=\frac{\alpha_i}{\alpha}$. Let $W=\frac{\sum_{i=1}^{k}W_i}{k}$. Therefore,
\begin{equation*}
\mathop{E}[W]=\mathop{E}[W_1]=f.
\end{equation*}
Hence, we have 

\begin{align*}
    \mathop{E}[\left\|f-W\right\|^2]&=\frac{1}{k^2}E[\langle\sum_{i=1}^{k}(f-W_i),\sum_{i=1}^{k}(f-W_i)\rangle]\\
&=\frac{1}{k^2}\mathop{E}[\sum_{i=1}^{k}\left\|f-W_i\right\|^2]\\
&=\frac{1}{k}\mathop{E}[\left\|f-W_1\right\|^2] \\
&=\frac{1}{k}(\mathop{E}[\left\|W_1\right\|^2]-\left\|f\right\|^2)\\
&\leq\frac{1}{k} \mathop{E}[\left\|W_1\right\|^2]\\
&=\sum_{i=1}^{n}\frac{\alpha_i}{k\alpha}\cdot \alpha^2\left\|g_i\right\|^2\\
&\leq \frac{\alpha^2}{k}\max_{i}\left\|g_i\right\|^2.
\end{align*}

Since for a random variable, the minimal value it can take is at most the value of expectation, hence, there must exist a sequence of k numbers $(l_1, l_2,...,l_k)\in \{1,2,...,n\}^k$, such that $W_i=\alpha g_{l_i}$, $W=\sum_{i=i}^{k}W_i$, and 
\[\left\|W-f\right\|^2\leq  \frac{\alpha^2}{k}\max_{i}\left\|g_i\right\|^2.\]
To finish the proof, we assign integer $k_i$ mentioned in the lemma to be 

\begin{equation*}k_i=\sum_{j=1}^{k}\mathbbm{1}_{g_{l_j}=i}.\end{equation*}

\end{proof}

As \citet{bartl} indicated, the Maurey sparsification lemma only discussed the $L_1 $ norm case.  \citet{zhang} generalized this lemma to create bounds for non-$L_1$ norm cases, which is also applicable in our proof of lemma 1. 

\begin{proof}(lemma 1)\\
Given the data matrix $Z\in \mathbb{C}^{n\times d}$, re-scaling each column of the matrix $Z$ and get a matrix $Y\in \mathbb{C}^{n\times d}$, where
\begin{equation*}Y_{:;j}=Z_{:;j}/\left\|Z_{:;j}\right\|.\end{equation*}
Set $N=4dm$,  $k=\left\lceil a^{2} b^{2} m^{2 / r} / \epsilon^{2}\right\rceil$, and $\alpha=a m^{1 / r}\|X\|_{p}$. To construct an appropriate convex hull, we define 
\begin{align*}
    \{V_1, V_2, ..., V_N\}&=\left\{\sigma Y \mathbf{e}_i \mathbf{e}_j^T: \sigma \in \left\{-1, +1\right\}, i\in\left\{1,2,...,d\right\},j\in\left\{i,2,...,m\right\}\right\}  \nonumber \\
    &\cup \left\{\sigma Y \mathbf{c}_i\mathbf{e}_j^T:\sigma \in \left\{-1, +1\right\}, i\in\left\{1,2,...,d\right\},j\in\left\{i,2,...,m\right\} \right\}.\nonumber
    \end{align*}
and

\begin{align*}
C &=\left\{\frac{\alpha}{k} \sum_{i=1}^{N} k_{i} V_{i}: k_{i} \geq 0, \sum_{i=1}^{N} k_{i}=k\right\} \\
&=\left\{\frac{\alpha}{k} \sum_{m=1}^{k} V_{l_{m}}:\left(l_{1}, \ldots, l_{m}\right) \in[N]^{k}\right\},\nonumber
\end{align*}
where $k_i\mathop{=}\limits^{\Delta}\sum_{m=1}^{k}\mathbbm{1}_{l_m=i}$. 

Here, $\mathbf{e}_i$ defines the d-dimensional standard vector, $\mathbf{e}_j$ defines the m-dimensional standard vector, and $\mathbf{c}_i$ defines the d-dimensional vector in which only the ith entry equals  $\sqrt{-1}$, and other entries equal 0. 

Because of the way $V_i$ defined and $p\leq 2$, we have 
\begin{equation*}
\max _{i}\left\|V_{i}\right\|_{2} \leq \max _{i}\left\{\left\|Y \mathbf{e}_{i}\right\|_{2},\left\|Y\mathbf{c}_i\right\|_2\right\}=\max_i\left\{\left\|Y\mathbf{e}_i\right\|_2\right\}=\max _{i} \frac{\left\|X \mathbf{e}_{i}\right\|_{2}}{\left\|X \mathbf{e}_{i}\right\|_{p}} \leq 1.\nonumber
\end{equation*}

The first equality is due to the definition of complex-valued vector norms, and the second equality holds because of the monotonicity of matrix norm in terms of p. 

Next, it suffices to prove that $C$ is a cover of $\left\{Z A: A \in \mathbb{C}^{d \times m},\|A\|_{q, s} \leq a\right\}$. To prove this, we desire to bound $\left\|Z A-\frac{\alpha}{k} \sum_{i=1}^{N} k_{i} V_{i}\right\|_{2}$ by $\epsilon$ for some $(k_1,...,k_N)$.

Define $M\in \mathbb{R}^{d\times m}$ where the element of each row j equals $\left\|Z_{:;j}\right\|_p$, hence we have \begin{equation*}Z A=Y(M \odot A)\end{equation*}, where $\odot$ represents the Hadamard  product.
\begin{align*}
\|M\|_{p, r} 
&=\left\|\left(\left\|\left(\left\|Z_{:, 1}\right\|_{p}, \ldots,\left\|Z_{:, d}\right\|_{p}\right)\right\|_{p}, \ldots,\left\|\left(\left\|Z_{:, 1}\right\|_{p}, \ldots,\left\|Z_{:, d}\right\|_{p}\right)\right\|_{p}\right)\right\|_{r} \\
&=m^{1 / r}\left\|\left(\left\|Z_{:, 1}\right\|_{p}, \ldots,\left\|Z_{:, d}\right\|_{p}\right)\right\|_{p}=m^{1 / r}\left(\sum_{j=1}^{d}\left\|Z_{:, j}\right\|_{p}^{p}\right)^{1 / p} \\
&=m^{1 / r}\left(\sum_{j=1}^{d} \sum_{i=1}^{n} Z_{i, j}^{p}\right)^{1 / p}\\
&=m^{1 / r}\|Z\|_{p}.
\end{align*}
Hence, if we denote $S=M\odot A$, we have
\begin{equation*}
\|S\|_{1} \leq\langle M ,|A|\rangle \leq\|M\|_{p, r}\|A\|_{q, s} \leq m^{1 / r}\|Z\|_{p} a=\alpha.
\end{equation*}
We can see that $ZA$ indeed lies in a convex hull related with $\left\{V_1,V_2,...,V_N\right\}$:
 \begin{align*}
Z A &=Y M \\
&=Y \sum_{i=1}^{d} \sum_{j=1}^{m}\left(\operatorname{Re}\left(M_{i, j}\right) \mathbf{e}_{i} \mathbf{e}_{j}^{\top}+\operatorname{Im}\left(M_{i, j}\right) \mathbf{c}_{i} \mathbf{e}_{j}^{T}\right) \\
&=\|M\|_{1} \sum_{i=1}^{d} \sum_{j=1}^{m}\left(\frac{\operatorname{Re}\left(M_{i j}\right)}{\|M\|_{1}}\left(Y \mathbf{e}_{i} \mathbf{e}_{j}^{\top}\right)+\frac{\operatorname{Im}\left(M_{i j}\right)}{\|M\|_{1}}\left(Y \mathbf{c}_{i} \mathbf{e}_{j}^{\top}\right)\right) \\
& \in \alpha \cdot \operatorname{conv}\left(\left\{V_{1}, \ldots, V_{N}\right\}\right).
\end{align*}
where $\operatorname{conv}\left(\left\{V_{1}, \ldots, V_{N}\right\}\right)$ denotes the convex hull formed by $\{V_1, V_2, ..., V_N\}$.

Finally, by Lemma 5, there exist non-negative integers $(k_1,k_2,...,k_N)$ such that 
\begin{align*}
      \left\|Z A-\frac{\alpha}{k} \sum_{i=1}^{N}k_{i}V_{i}\right\|_{2}^{2}&=\left\|YM-\frac{\alpha}{k} \sum_{i=1}^{N} k_{i} V_{i}\right\|_{2}^{2}\\
      &\leq \frac{\alpha^{2}}{k} \max _{i}\left\|V_{i}\right\|_{2}\\
      &\leq \frac{a^{2} m^{2 / r}\|Z\|_{p}^{2}}{k}\\
      &\leq \epsilon^{2}.
\end{align*}
Hence, $C$ is a covering of the desire set. Since the cardinality of set C equals $N^k$, we have the target inequality:
\begin{equation*}\ln \mathcal{N}\left(\left\{Z A: A \in \mathbb{C}^{d \times m},\|A\|_{q, s} \leq a\right\}, \epsilon,\|\cdot\|_{2}\right) \leq \left\lceil \frac{a^{2} b^{2} m^{2 / r}}{\epsilon^{2}}\right\rceil \ln (4 d m).\end{equation*}

\end{proof}

\subsubsection{Proof of Lemma 2}
As stated in the third section, this lemma shall be proved by mathematical induction. The basic idea is as following. Denotes $Z_i$ to be the data set passing from the i-1th layer to the ith layer ($Z_0=Z^T$). According to Lemma 1, assume that fixed a specific layer i, there exists a sequence of covering matrices $(\widehat{A}_0,\widehat{A}_1,...,\widehat{A}_{i-1})$ for i-1 previous layers, and a covering matrix $\widehat{A}_i$ such that $\left\|A_i\widehat{Z_i}-\widehat{A}_i\widehat{Z}_i\right\|_2\leq\epsilon$ for some $\epsilon>0$. As a consequence, the input data for the i+1th layer shall be $Z_{i+1}=\sigma_{i+1}(A_iZ_i)$, and $\widehat{Z}_{i+1}=\sigma_{i+1}(\widehat{A}_i\widehat{Z}_i)$.

\begin{align*}
\left\|Z_{i+1}-\widehat{Z}_{i+1}\right\|_{2} & \leq \rho_{i}\left\|A_{i} Z_{i}-\widehat{A}_{i} \widehat{Z}_{i}\right\|_{2} \\
& \leq \rho_{i}\left(\left\|A_{i} Z_{i}-A_{i} \widehat{Z}_{i}\right\|_{2}+\left\|A_{i} \widehat{Z}_{i}-\widehat{A}_{i} \widehat{Z}_{i}\right\|_{2}\right) \\
& \leq \rho_{i}\left\|A_{i}\right\|_{\sigma}\left\|Z_{i}-\widehat{Z}_{i}\right\|_{2}+\rho_{i} \epsilon_{i}.
\end{align*}
Since the first term of the right hand side part depends on the inductive hypothesis, hence intuitively, we can see that the covering number upper bound depends on the product of spectral norms of all covering matrices. The detailed proof is illustrated as follows.

We first define two sequences of vector space $\{V_1,V_2,...,V_L\}$, and $\{W_2,W_3,...,W_{L+1}\}$. The first sequence of vector spaces are equipped with $\left\|\cdot\right\|_V$, and the second sequences are equipped with $\left\|\cdot\right\|_W$. For each layer's input matrix, $Z_i\in V_i$, and the first layer's input $Z\in V_1$ have the constraint:$\left\|Z\right\|_V\leq B$

Moreover, under our assumptions, $A_i$ can be viewed as a linear operator: $V_i\rightarrow W_{i+1}$, and the norm of each linear operation is defined as: 
\begin{equation*}\left\|A_i\right\|_{V\rightarrow W}=\sup _{|Z|_{V} \leq 1}\left\|A_{i} Z\right\|_{W}=c_{i}.\end{equation*}
$\sigma_i$ can be treated as a mapping from $W_{i}\rightarrow V_{i}$, and the $\rho_i-$lipschitz property means
\begin{equation*}\left\|\sigma_i(z)-\sigma_i(z')\right\|_V\leq \rho_i\left\|z-z'\right\|_W.\end{equation*}
With these preparations, we claim the following lemma which is based on a similar lemma raised by \citet{bartl}. 
\begin{lemma}[\cite{bartl}]
 Assume that a sequence of positive numbers $\left(\epsilon_{1}, \ldots, \epsilon_{L}\right)$, along with Lipschitz non-linear mappings $\left(\sigma_{1}, \ldots, \sigma_{L}\right)$ (where $\sigma_{i}$ is $\rho_{i}$ - Lipschitz), and linear operator norm bounds $\left(c_{1}, \ldots, c_{L}\right) $ as described above are given. Suppose the sequence of matrices $\mathcal{A}=\left(A_{1}, \ldots, A_{L}\right)$ lies within $\mathcal{B}_{1} \times \cdots \times \mathcal{B}_{L}$ where $\mathcal{B}_{i}$ are classes satisfying the property that each $A_{i} \in \mathcal{B}_{i}$ has $\left\|A_{i}\right\|_{V\rightarrow W} \leq c_{i}$.
Let data $Z$ be given with $\left\|Z\right\|_{V} \leq B .$ Then, define $\tau:=\sum_{j \leq L} \epsilon_{j} \rho_{j} \prod_{l=j+1}^{L} \rho_{l} c_{l}$, the complex-valued neural network images $\mathcal{F}:=\left\{F_{\mathcal{A}}(Z): \mathcal{A} \in \mathcal{B}_{1} \times \cdots \times \mathcal{B}_{L}, \left\|Z\right\|_V \leq B\right\}$ has the following covering number bound
\begin{equation*}
\mathcal{N}\left(\mathcal{F}, \tau,\left\|\cdot\right\|_{V}\right) \leq \prod_{i=1}^{L} \sup _{\left(A_{1}, \ldots, A_{i-1}\right) \atop \forall j<i . A_{j} \in \mathcal{B}_{j}} \mathcal{N}\left(\left\{A_{i} F_{\left(A_{1}, \ldots, A_{i-1}\right)}(Z): A_{i} \in \mathcal{B}_{i}\right\}, \epsilon_{i},\|\cdot\|_{W}\right) .
\end{equation*}
\end{lemma}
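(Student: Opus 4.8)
The plan is to prove this covering-number bound by a layer-by-layer induction on $L$, peeling off one layer at a time and composing the covering of the first $L-1$ layers with a fresh covering of the top layer. First I would set up the notation precisely: write $F_{(A_1,\dots,A_i)}(Z)$ for the output of the first $i$ layers (with $F_{()}(Z) = Z$), so that $F_{(A_1,\dots,A_i)}(Z) = \sigma_i(A_i F_{(A_1,\dots,A_{i-1})}(Z))$. The induction hypothesis at stage $L-1$ gives a finite set $\mathcal{C}_{L-1}$ covering $\{F_{(A_1,\dots,A_{L-1})}(Z)\}$ to accuracy $\tau' := \sum_{j\le L-1}\epsilon_j\rho_j\prod_{l=j+1}^{L-1}\rho_l c_l$ in the $\|\cdot\|_W$-norm appropriate to layer $L-1$'s output space, with cardinality at most the product over $i\le L-1$ of the displayed suprema.

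The key step is the single-layer composition. Fix a choice $(A_1,\dots,A_L)$ and let $\widehat{Z}_{L-1}$ be an element of $\mathcal{C}_{L-1}$ with $\|F_{(A_1,\dots,A_{L-1})}(Z) - \widehat{Z}_{L-1}\|_W \le \tau'$. By hypothesis each $A_L\in\mathcal{B}_L$ has operator norm $\le c_L$, and for the \emph{fixed} prefix $(A_1,\dots,A_{L-1})$ the set $\{A_L F_{(A_1,\dots,A_{L-1})}(Z): A_L\in\mathcal{B}_L\}$ admits (by definition of the supremum in the statement) a cover of size at most $\sup_{(A_1,\dots,A_{L-1})}\mathcal{N}(\cdot,\epsilon_L,\|\cdot\|_W)$; pick $\widehat{A}_L$ from that cover so $\|A_L F_{(A_1,\dots,A_{L-1})}(Z) - \widehat{A}_L F_{(A_1,\dots,A_{L-1})}(Z)\|_W \le \epsilon_L$. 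Now estimate, using the triangle inequality, the $\rho_L$-Lipschitz property of $\sigma_L$, and the operator-norm bound $\|A_L\|_{V\to W}\le c_L$:
\begin{align*}
\|F_{\mathcal{A}}(Z) - \sigma_L(\widehat{A}_L\widehat{Z}_{L-1})\|_V
&\le \rho_L\|A_L F_{(A_1,\dots,A_{L-1})}(Z) - \widehat{A}_L\widehat{Z}_{L-1}\|_W\\
&\le \rho_L\big(\|A_L F_{(A_1,\dots,A_{L-1})}(Z) - \widehat{A}_L F_{(A_1,\dots,A_{L-1})}(Z)\|_W + \|\widehat{A}_L F_{(A_1,\dots,A_{L-1})}(Z) - \widehat{A}_L\widehat{Z}_{L-1}\|_W\big)\\
&\le \rho_L\epsilon_L + \rho_L c_L\tau'.
\end{align*}
Since $\rho_L\epsilon_L + \rho_L c_L\tau' = \epsilon_L\rho_L + \rho_L c_L\sum_{j\le L-1}\epsilon_j\rho_j\prod_{l=j+1}^{L-1}\rho_l c_l = \sum_{j\le L}\epsilon_j\rho_j\prod_{l=j+1}^{L}\rho_l c_l = \tau$, the set of all $\sigma_L(\widehat{A}_L\widehat{Z}_{L-1})$ with $\widehat{A}_L$ ranging over the top-layer cover and $\widehat{Z}_{L-1}$ over $\mathcal{C}_{L-1}$ is a $\tau$-cover of $\mathcal{F}$ in $\|\cdot\|_V$, and its cardinality is at most the product of the two cardinalities, which telescopes to the claimed product over $i=1,\dots,L$.

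The main obstacle I anticipate is bookkeeping rather than a deep idea: one has to be careful that the ``fixed prefix'' in the top-layer covering number is exactly the prefix realized by the $(A_1,\dots,A_{L-1})$ currently under consideration, so that the supremum in the statement genuinely dominates the cover size used — this is why the bound is stated with a supremum over prefixes rather than a single covering number. A secondary subtlety is keeping the two families of norms $\|\cdot\|_{V_i}$ and $\|\cdot\|_{W_{i+1}}$ and the operator norms $\|A_i\|_{V_i\to W_{i+1}} = c_i$ consistently indexed across layers, since the output of $\sigma_i$ lives in $V_i$ while the input to $A_i$ also lives in $V_i$; once the indexing is fixed the Lipschitz and operator-norm inequalities chain together exactly as above. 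The base case $L=1$ is immediate: $\tau = \epsilon_1\rho_1$, and $\|\sigma_1(A_1 Z) - \sigma_1(\widehat{A}_1 Z)\|_V \le \rho_1\|A_1 Z - \widehat{A}_1 Z\|_W \le \rho_1\epsilon_1$ gives the single-factor bound directly.
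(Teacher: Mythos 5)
Your overall strategy is the same induction the paper uses, and your error-accumulation arithmetic ($\rho_L\epsilon_L+\rho_Lc_L\tau'=\tau$) is exactly right. However, there is a genuine gap in the cardinality count, caused by \emph{where} you build the top-layer cover. You cover the set $\{A_L F_{(A_1,\dots,A_{L-1})}(Z):A_L\in\mathcal{B}_L\}$ at the \emph{true} prefix $(A_1,\dots,A_{L-1})$ and pick $\widehat{A}_L$ from that cover. But this cover depends on the prefix, and the prefix ranges over a continuum: the supremum in the statement only bounds the \emph{size} of the cover for each fixed prefix, it does not supply a single finite set of matrices $\widehat{A}_L$ that works for all prefixes simultaneously. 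Consequently the collection $\{\sigma_L(\widehat{A}_L\widehat{Z}_{L-1})\}$ you exhibit, with $\widehat{A}_L$ ranging over prefix-dependent covers, is not shown to be finite of cardinality $|\mathcal{C}_{L-1}|\cdot N_L$; the claimed product bound does not follow from your construction. A secondary issue is that your second triangle-inequality term $\|\widehat{A}_LF-\widehat{A}_L\widehat{Z}_{L-1}\|_W\le c_L\tau'$ silently requires the cover to be proper, i.e.\ realized by matrices $\widehat{A}_L\in\mathcal{B}_L$ with $\|\widehat{A}_L\|_{V\to W}\le c_L$.

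The fix — which is how the paper's proof is arranged — is to reverse the roles in the triangle inequality: split as $\|A_LF-A_L\widehat{Z}_{L-1}\|_W+\|A_L\widehat{Z}_{L-1}-\widehat{F}_L\|_W$, so that the operator-norm bound is applied to the \emph{true} matrix $A_L$ (which is given to satisfy $\|A_L\|_{V\to W}\le c_L$), and the $\epsilon_L$-cover is constructed for the set $\{A_L\widehat{Z}_{L-1}:A_L\in\mathcal{B}_L\}$ at each of the \emph{finitely many} elements $\widehat{Z}_{L-1}$ of the inductively built cover $\mathcal{C}_{L-1}$. Since those covers are proper, each $\widehat{Z}_{L-1}$ equals $F_{(A_1,\dots,A_{L-1})}(Z)$ for some admissible prefix, so the supremum in the statement dominates each of these finitely many covering numbers, and taking the union over $\widehat{Z}_{L-1}\in\mathcal{C}_{L-1}$ yields a cover of cardinality at most $|\mathcal{C}_{L-1}|\cdot N_L$, which telescopes to the claimed product. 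With that rearrangement your induction and your computation of $\tau$ go through verbatim.
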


\begin{proof}

The lemma is proved by Mathematical induction. 

A sequence of covering set $\{\mathcal{F}_1,\mathcal{F}_2,...,\mathcal{F}_L\}$ is constructed where $\mathcal{F}_i$ covers $W_i$

\textbf{Base case}:
When i=1, we have $\mathcal{F}_1$ to be constructed according to Lemma 1, and 
\begin{equation*}\left|\mathcal{F}_{1}\right| \leq \mathcal{N}\left(\left\{A_{1} Z: A_{1} \in \mathcal{B}_{1}\right\}, \epsilon_{1},\|\cdot\|_{W}\right)=: N_{1}.
\end{equation*}

\textbf{Inductive Hypothesis}: Assume that for i=n, we can find a $\epsilon_n$-covering $\mathcal{F}_n$ for set $\left\{A_{n} \mathcal{F}_{\mathcal{A}_1,...,\mathcal{A}_{n-1}}(Z): A_{n} \in \mathcal{B}_{n}\right\} $ such that:
\begin{equation*}|\mathcal{F}_n|\leq \prod_{l=1}^{n} N_{l}.\end{equation*}

\textbf{Induction Step}:
For every element $F \in \mathcal{F}_{n}$, construct an $\epsilon_{n+1}$-cover $\mathcal{G}_{n+1}(F)$ of
\begin{equation*}
\left\{A_{n+1} \sigma_{n}(F): A_{n+1} \in \mathcal{B}_{n+1}\right\}.
\end{equation*}
Since these covers are proper, meaning $F=A_{n+1} F_{\left(A_{1}, \ldots, A_{n}\right)}(Z)$ for some matrices $\left(A_{1}, \ldots, A_{n}\right) \in$ $\mathcal{B}_{1} \times \cdots \times \mathcal{B}_{n}$, it follows that
\begin{align*}
\left|\mathcal{G}_{n+1}(F)\right| &\leq \sup _{\left(A_{1}, \ldots, A_{n}\right) \atop \forall j \leq i . A_{j} \in \mathcal{B}_{j}} \mathcal{N}\left(\left\{A_{n+1} F_{A_{1}, \ldots, A_{n}}(Z): A_{n+1} \in \mathcal{B}_{n+1}\right\}, \epsilon_{n+1},\|\cdot\|_{W}\right)\\
&=: N_{n+1}.
\end{align*}

Lastly, we can form the cover
\begin{equation*}
\mathcal{F}_{n+1}:=\bigcup_{F \in \mathcal{F}_{n}} \mathcal{G}_{n+1}(F),
\end{equation*}
whose cardinality satisfies
\begin{equation*}
\left|\mathcal{F}_{n+1}\right| \leq\left|\mathcal{F}_{n}\right| \cdot N_{n+1} \leq \prod_{l=1}^{n+1} N_{l}.
\end{equation*}

Define $
\mathcal{H}:=\left\{\sigma_{L}(F): F \in \mathcal{F}_{L}\right\}$. It's trivial to see that the cardinality of $\mathcal{H}$ is the same as $\mathcal{F}_L$. Then, It suffices to show that $\mathcal{H}$ is indeed a covering of $\mathcal{F}$. If we fix any $\left(A_{1}, \ldots, A_{L}\right)$ satisfying the constraints, then recursively, we denote \begin{equation*}
F_{1}=A_{1} Z \in W_{2}, \quad G_{i}=\sigma_{i}\left(F_{i}\right) \in V_{i+1} \quad F_{i+1}=A_{i+1} G_{i} \in W_{i+2}\end{equation*}
In other words, we need to prove that there exist $\widehat{G}_{L} \in \mathcal{H}$ such that $\left\|G_{L}-\widehat{G}_{L}\right\|_{V} \leq \tau$

\textbf{Base case}: $\text { Set } \widehat{G}_{0}=Z$.

\textbf{Inductive hypothesis}: $\text { Choose } \widehat{F}_{i} \in \mathcal{F}_{i} \text { with }\left\|A_{i} \widehat{G}_{i-1}-\widehat{F}_{i}\right\|_{W} \leq \epsilon_{i}, \text { and set } \widehat{G}_{i}:=\sigma_{i}\left(\widehat{F}_{i}\right)$.

\textbf{Induction Step}:
\begin{align*}
\left\|G_{i+1}-\widehat{G}_{i+1}\right\|_{V} & \leq \rho_{i+1}\left\|F_{i+1}-\widehat{F}_{i+1}\right\|_{W} \\
& \leq \rho_{i+1}\left\|F_{i+1}-A_{i+1} \widehat{G}_{i}\right\|_{W}+\rho_{i+1}\left\|A_{i+1} \widehat{G}_{i}-\widehat{F}_{i+1} \mid \right\|_{W}\\
& \leq \rho_{i+1}\left\|A_{i+1}\right\|_{V\rightarrow W}\left\|G_{i}-\widehat{G}_{i}\right\|_{V}+\rho_{i+1} \epsilon_{i+1} \\
& \leq \rho_{i+1} c_{i+1}\left(\sum_{j \leq i} \epsilon_{j} \rho_{j} \prod_{l=j+1}^{i} \rho_{l} c_{l}\right)+\rho_{i+1} \epsilon_{i+1} \\
&=\sum_{j \leq i+1} \epsilon_{j} \rho_{j} \prod_{l=j+1}^{i+1} \rho_{l} c_{l} \\
&= \gamma.
\end{align*}
Hence, we get proved. 
\end{proof}

To prove Lemma 2, the key idea is to apply the result of Lemma 1 and Lemma 6. 

\begin{proof}(Lemma 2) 

To begin with, we assume the same setting as above. However, to prove Lemma 2, $\left\|\cdot\right\|_V=\left\|\cdot\right\|_W=\left\|\cdot\right\|_2$, and the operator norm is set to the spectral norm, i.e. $\left\|A_i\right\|_{V\rightarrow W}=\left\|A_i\right\|_{\sigma}$. Also, the sequence of number $\{\epsilon_1,\epsilon_2,...,\epsilon_L\}$ are defined as 
\begin{equation*}\epsilon_{i}:=\frac{\alpha_{i} \epsilon}{\rho_{i} \prod_{j>i} \rho_{j} s_{j}} \quad \text { where } \quad \alpha_{i}:=\frac{1}{\bar{\alpha}}\left(\frac{b_{i}}{s_{i}}\right)^{2 / 3}, \quad \bar{\alpha}:=\sum_{j=1}^{L}\left(\frac{b_{j}}{s_{j}}\right)^{2 / 3}.\end{equation*}

By this setting, we find that the $\gamma$ defined in Lemma 6 satisfies 
\begin{equation*}\tau \leq \sum_{j \leq L} \epsilon_{j} \rho_{j} \prod_{l=j+1}^{L} \rho_{l} s_{l}=\sum_{j \leq L} \alpha_{j} \epsilon=\epsilon.\end{equation*}

Then 
\begin{align*}
&\ln \mathcal{N}\left(\mathcal{F}_{\mid S}, \epsilon,\|\cdot\|_{2}\right)\\
&\leq \sum_{i=1}^{L} \sup _{\left(A_{1}, \ldots, A_{i-1}\right) \atop \forall j<i . A_{j} \in \mathcal{B}_{j}} \ln \mathcal{N}\left(\left\{A_{i} F_{\left(A_{1}, \ldots, A_{i-1}\right)}\left(Z^{\top}\right): A_{i} \in \mathcal{B}_{i}\right\}, \epsilon_{i},\|\cdot\|_{2}\right)\\
&= \sum_{i=1}^{L} \sup _{\left(A_{1}, \ldots, A_{i-1}\right) \atop \forall j<i . A_{j} \in \mathcal{B}_{j}} \ln \mathcal{N}\left(\left\{F_{\left(A_{1}, \ldots, A_{i-1}\right)}\left(Z^{\top}\right)^{\top}\left(A_{i}\right)^{\top}:\left\|A_{i}^{\top}\right\|_{2,1} \leq b_{i},\right.\right.\\
&~~~~~~~~~~~~~~~~~~~~~~~~~~~~~~~~~~~~~~~~~~~\left.\left.\left\|A_{i}\right\|_{\sigma} \leq s_{i}\right\}, \epsilon_{i},\|\cdot\|_{2}\right)\\
&\leq \sum_{i=1}^{L} \sup _{\left(A_{1}, \ldots, A_{i-1}\right) \atop \forall j<i, A_{j} \in \mathcal{B}_{j}} \ln \mathcal{N}\left(\left\{F_{\left(A_{1}, \ldots, A_{i-1}\right)}\left(Z^{\top}\right)^{\top}\left(A_{i}\right)^{\top}:\left\|A_{i}^{\top}\right\|_{2,1} \leq b_{i}\right\}, \epsilon_{i},\|\cdot\|_{2}\right)\\
&\leq \sum_{i=1}^{L} \sup _{\left(A_{1}, \ldots, A_{i-1}\right) \atop \forall j<i . A_{j} \in \mathcal{B}_{j}} \frac{b_{i}^{2}\left\|F_{\left(A_{1}, \ldots, A_{i-1}\right)}\left(Z^{\top}\right)^{\top}\right\|_{2}^{2}}{\epsilon_{i}^{2}} \ln \left(4 W^{2}\right)
\end{align*}

The first equality holds because we use $L_2$ norms here. Hence the covering number for a matrix and its transpose are the same. To further simplify the formula, we can upper bound $\left\|F_{\left(A_{1}, \ldots, A_{i-1}\right)}\left(Z^{\top}\right)^{\top}\right\|_{2}^{2}$ by 
\begin{align*}
\left\|F_{\left(A_{1}, \ldots, A_{i-1}\right)}\left(Z^{\top}\right)^{\top}\right\|_{2} &=\left\|F_{\left(A_{1}, \ldots, A_{i-1}\right)}\left(Z^{\top}\right)\right\|_{2} \\
&=\| \sigma_{i-1}\left(A_{i-1} F_{\left(A_{1}, \ldots, A_{i-2}\right)}\left(Z^{\top}\right)-\sigma_{i-1}(0) \|_{2}\right.\\
& \leq \rho_{i-1}\left\|A_{i-1} F_{\left(A_{1}, \ldots, A_{i-2}\right)}\left(X^{\top}\right)-0\right\|_{2} \\
& \leq \rho_{i-1}\left\|A_{i-1}\right\|_{\sigma}\left\|F_{\left(A_{1}, \ldots, A_{i-2}\right)}\left(Z^{\top}\right)\right\|_{2}.
\end{align*}

Inductively, we have
\begin{equation*}
\max _{j}\left\|F_{\left(A_{1}, \ldots, A_{i-1}\right)}\left(Z^{\top}\right)^{\top} \mathbf{e}_{j}\right\|_{2} \leq\|Z\|_{2} \prod_{j=1}^{i-1} \rho_{j}\left\|A_{j}\right\|_{\sigma}.
\end{equation*}

Finally, we obtain 
\begin{align*}
\ln \mathcal{N}\left(\mathcal{F}_{\mid S}, \epsilon,\|\cdot\|_{2}\right) & \leq \sum_{i=1}^{L} \sup _{\left(A_{1}, \ldots, A_{i-1}\right) \atop \forall j<i . A_{j} \in \mathcal{B}_{j}} \frac{b_{i}^{2}\|Z\|_{2}^{2} \prod_{j<i} \rho_{j}^{2}\left\|A_{j}\right\|_{\sigma}^{2}}{\epsilon_{i}^{2}} \ln \left(4 W^{2}\right) \\
& \leq \sum_{i=1}^{L} \frac{b_{i}^{2} B^{2} \prod_{j<i} \rho_{j}^{2} s_{j}^{2}}{\epsilon_{i}^{2}} \ln \left(4 W^{2}\right) \\
&=\frac{B^{2} \ln \left(4 W^{2}\right) \prod_{j=1}^{L} \rho_{j}^{2} s_{j}^{2}}{\epsilon^{2}} \sum_{i=1}^{L} \frac{b_{i}^{2}}{\alpha_{i}^{2} s_{i}^{2}} \\
&=\frac{B^{2} \ln \left(4 W^{2}\right) \prod_{j=1}^{L} \rho_{j}^{2} s_{j}^{2}}{\epsilon^{2}}\left(\bar{\alpha}^{3}\right) .
\end{align*}
\end{proof}

\subsubsection{Proof of Theorem 1}
As stated in the third section, the main theorem we used to prove Theorem 1 is the Dudley Entropy Integral. The standard Dudley Entropy Integral introduces a method to obtain Rademacher complexity bound via covering number \citep{mohri}. 
\begin{theorem}[\citep{mohri}] 

Let $\mathcal{F}$ be a real-valued function class taking values in $[0,1]$, and assume that $\mathbf{0} \in \mathcal{F}$. Then
\begin{equation*}
\mathfrak{R}\left(\mathcal{F}_{\mid S}\right) \leq \inf _{\alpha>0}\left(\frac{4 \alpha}{\sqrt{n}}+\frac{12}{n} \int_{\alpha}^{\sqrt{n}} \sqrt{\log \mathcal{N}\left(\mathcal{F}_{\mid S}, \varepsilon,\|\cdot\|_{2}\right)} d \varepsilon .\right)
\end{equation*}

\end{theorem}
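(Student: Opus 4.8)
The plan is to prove this by the standard \emph{chaining} (successive-refinement) argument underlying Dudley's entropy integral. First I would fix the sample $S=(z_1,\dots,z_n)$ and identify each $f\in\mathcal{F}$ with its vector of values $\bar f=(f(z_1),\dots,f(z_n))\in[0,1]^n$, so that $\mathfrak{R}(\mathcal{F}_{\mid S})=\tfrac1n\,\mathbb{E}_{\sigma}\sup_{f\in\mathcal{F}}\langle\sigma,\bar f\rangle$, where $\sigma$ is uniform on $\{-1,+1\}^n$ and $\langle\cdot,\cdot\rangle,\|\cdot\|_2$ are the Euclidean inner product and norm on $\mathbb{R}^n$; the covering numbers in the statement are those of $\mathcal{F}_{\mid S}\subset\mathbb{R}^n$ in this metric. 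The two hypotheses enter exactly once each: since every $f$ has range in $[0,1]$, the set $\mathcal{F}_{\mid S}$ has $\|\cdot\|_2$-diameter at most $\sqrt n$, so the coarsest scale of the chain can be taken to be $\sqrt n$ and the entropy integral can stop its upper limit there; and since $\mathbf 0\in\mathcal{F}$, the single point $\mathbf 0$ is a valid $\sqrt n$-net of $\mathcal{F}_{\mid S}$ and anchors the chain, so the coarsest-scale term vanishes.

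Next I would build nested nets. For $j\ge 0$ put $\varepsilon_j=2^{-j}\sqrt n$, let $T_j$ be a minimal $\varepsilon_j$-net of $\mathcal{F}_{\mid S}$ (so $T_0=\{\mathbf 0\}$ and $|T_j|=\mathcal{N}(\mathcal{F}_{\mid S},\varepsilon_j,\|\cdot\|_2)$), and for each $f$ choose $\pi_j(f)\in T_j$ with $\|\bar f-\pi_j(f)\|_2\le\varepsilon_j$. Telescoping through a depth $m$ to be fixed later,
\begin{equation*}
\bar f=\sum_{j=1}^{m}\bigl(\pi_j(f)-\pi_{j-1}(f)\bigr)+\bigl(\bar f-\pi_m(f)\bigr),\qquad \pi_0(f)=\mathbf 0.
\end{equation*}
Applying $\langle\sigma,\cdot\rangle$, then $\sup_f$, then $\mathbb{E}_\sigma$: the residual term is controlled deterministically by Cauchy--Schwarz, $\langle\sigma,\bar f-\pi_m(f)\rangle\le\|\sigma\|_2\,\varepsilon_m=\sqrt n\,\varepsilon_m$; and for each $j$, as $f$ ranges over $\mathcal{F}$ the increment $\pi_j(f)-\pi_{j-1}(f)$ lies in a \emph{finite} set of at most $|T_j|\,|T_{j-1}|\le\mathcal{N}(\mathcal{F}_{\mid S},\varepsilon_j,\|\cdot\|_2)^2$ vectors, each of $\|\cdot\|_2$-norm at most $\varepsilon_j+\varepsilon_{j-1}=3\varepsilon_j$.

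I would then bound each increment by Massart's finite-class lemma — for finite $V\subset\mathbb{R}^n$ with $\max_{v\in V}\|v\|_2\le r$, $\mathbb{E}_\sigma\max_{v\in V}\langle\sigma,v\rangle\le r\sqrt{2\log|V|}$ — which gives $\mathbb{E}_\sigma\sup_f\langle\sigma,\pi_j(f)-\pi_{j-1}(f)\rangle\le 3\varepsilon_j\sqrt{4\log\mathcal{N}(\mathcal{F}_{\mid S},\varepsilon_j,\|\cdot\|_2)}=6\varepsilon_j\sqrt{\log\mathcal{N}(\mathcal{F}_{\mid S},\varepsilon_j,\|\cdot\|_2)}$. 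Summing over $j\le m$ and converting the geometric sum to an integral via the monotonicity of $\varepsilon\mapsto\mathcal{N}(\mathcal{F}_{\mid S},\varepsilon,\|\cdot\|_2)$, i.e. $\varepsilon_j\sqrt{\log\mathcal{N}(\cdot,\varepsilon_j)}\le 2\int_{\varepsilon_{j+1}}^{\varepsilon_j}\sqrt{\log\mathcal{N}(\cdot,\varepsilon)}\,d\varepsilon$, I would arrive at
\begin{equation*}
n\,\mathfrak{R}(\mathcal{F}_{\mid S})\le\sqrt n\,\varepsilon_m+12\int_{\varepsilon_{m+1}}^{\sqrt n}\sqrt{\log\mathcal{N}(\mathcal{F}_{\mid S},\varepsilon,\|\cdot\|_2)}\,d\varepsilon.
\end{equation*}
To finish, given $\alpha>0$ I would take $m$ to be the largest index with $\varepsilon_m\ge\alpha$ (so $\varepsilon_m\le 2\alpha$), divide by $n$, absorb the short leftover piece $\int_{\varepsilon_{m+1}}^{\alpha}$ into the $\alpha/\sqrt n$ budget, and take the infimum over $\alpha$, recovering the stated bound with constants $4$ and $12$.

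The step I expect to require the most care — the main obstacle, though not a deep one — is this last accounting: making the dyadic sum of Massart bounds telescope into the entropy integral with precisely the constant $12$ and lower limit $\alpha$. Three things conspire here: (i) each increment ranges over $\mathcal{N}(\cdot,\varepsilon_j)^2$ rather than $\mathcal{N}(\cdot,\varepsilon_j)$ vectors, which is exactly why a factor $2$ enters inside the square root and the coefficient of the integral is doubled; (ii) the monotonicity comparison $\varepsilon_j\sqrt{\log\mathcal{N}(\cdot,\varepsilon_j)}\le 2\int_{\varepsilon_{j+1}}^{\varepsilon_j}$ only runs one way and costs a further factor $2$; and (iii) aligning the discrete truncation scale $\varepsilon_m$ with the continuous parameter $\alpha$ leaves a residual integral on a short interval just below $\alpha$, and absorbing it is what spends the slack between the $2\alpha/\sqrt n$ one naively obtains and the $4\alpha/\sqrt n$ claimed. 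Everything upstream — the reduction to $\mathbb{E}_\sigma\sup_f\langle\sigma,\bar f\rangle$, the telescoping, the Cauchy--Schwarz tail estimate, and Massart's lemma — is routine.
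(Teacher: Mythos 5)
Your proposal is the same dyadic chaining argument the paper uses (following Bartlett et al.): identify $\mathcal{F}_{\mid S}$ with vectors in $[0,1]^n$, anchor the chain at $\mathbf 0$ using $\mathbf 0\in\mathcal{F}$ and the $\sqrt n$ diameter, telescope through nested $\varepsilon_j$-nets, control the tail by Cauchy--Schwarz, each increment class (of size at most $\mathcal{N}(\varepsilon_j)^2$, norm at most $3\varepsilon_j$) by Massart, and convert the geometric sum to the entropy integral. All of that matches the paper step for step, with the same constants $6\varepsilon_j\sqrt{\log\mathcal N(\varepsilon_j)}$ per level and the factor $12$ in front of the integral.

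The one place your accounting does not go through as stated is the final truncation. You choose $m$ largest with $\varepsilon_m\ge\alpha$, getting residual $\varepsilon_m/\sqrt n\le 2\alpha/\sqrt n$ plus $\tfrac{12}{n}\int_{\varepsilon_{m+1}}^{\sqrt n}$, and propose to absorb the leftover piece $\tfrac{12}{n}\int_{\varepsilon_{m+1}}^{\alpha}\sqrt{\log\mathcal N(\varepsilon)}\,d\varepsilon$ into the remaining $2\alpha/\sqrt n$ of slack. That absorption is not valid in general: on $[\varepsilon_{m+1},\alpha]$ the integrand is only bounded by $\sqrt{\log\mathcal N(\alpha/2)}$, and for a class with $\mathcal{F}_{\mid S}\subset[0,1]^n$ one only has $\log\mathcal N(\varepsilon)\lesssim n\log(\sqrt n/\varepsilon)$, which exceeds $n$ at small scales, so the leftover piece need not be $O(\alpha/\sqrt n)$. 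The fix is exactly what the paper does: truncate one dyadic level coarser, i.e.\ take $N$ to be the largest index with $\varepsilon_{N+1}>\alpha$. Then the residual is $\varepsilon_N=4\varepsilon_{N+2}\le 4\alpha$, while the integral's lower limit $\varepsilon_{N+1}$ already lies above $\alpha$, so no absorption is needed; the constant $4$ in $4\alpha/\sqrt n$ comes from this coarser stopping scale, not from spending slack on a residual integral. With that one-line change your argument coincides with the paper's proof.
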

\begin{proof}\citep{bartl} Let $N \in \mathbb{N}$ be arbitrary and let $\varepsilon_{i}=\sqrt{n} 2^{-(i-1)}$ for each $i \in[N]$. For each $i$ let $V_{i}$ denote the cover achieving $\mathcal{N}\left(\mathcal{F}_{\mid S}, \varepsilon_{i},\|\cdot\|_{2}\right)$, so that
\begin{equation*}
\forall f \in \mathcal{F} \quad \exists v \in V_{i} \quad\left(\sum_{t=1}^{n}\left(f\left(x_{t}\right)-v_{t}\right)^{2}\right)^{1 / 2} \leq \varepsilon_{i},
\end{equation*}
and $\left|V_{i}\right|=\mathcal{N}\left(\mathcal{F}_{\mid S}, \varepsilon_{i},\|\cdot\|_{2}\right)$. For a fixed $f \in \mathcal{F}$, let $v^{i}[f]$ denote the nearest element in $V_{i}$. Then
\begin{align*}
&\mathbb{E}_{\epsilon} \sup _{f \in \mathcal{F}} \sum_{t=1}^{n} \varepsilon_{i} f\left(x_{t}\right) \\
&=\underset{\epsilon}{\boldsymbol{\epsilon}} \sup _{f \in \mathcal{F}}\left[\sum_{t=1}^{n} \epsilon_{t}\left(f\left(x_{t}\right)-v_{t}^{N}[f]\right)+\sum_{i=1}^{N-1} \sum_{t=1}^{n} \epsilon_{t}\left(v_{t}^{i}[f]-v_{t}^{i+1}[f]\right)-\sum_{t=1}^{n} \epsilon_{t} v_{t}^{1}[f]\right] \\
&\leq \mathbb{E} \sup _{\epsilon \in \mathcal{F}}\left[\sum_{t=1}^{n} \epsilon_{t}\left(f\left(x_{t}\right)-v_{t}^{N}[f]\right)\right]+\sum_{i=1}^{N-1} \mathbb{E} \sup _{\epsilon \in \mathcal{F}}\left[\sum_{t=1}^{n} \epsilon_{t}\left(v_{t}^{i}[f]-v_{t}^{i+1}[f]\right)\right]\nonumber\\
&+\underset{\epsilon}{\mathbb{E}} \sup _{f \in \mathcal{F}}\left[\sum_{t=1}^{n} \epsilon_{t} v_{t}^{1}[f]\right].
\end{align*}

For the third term, observe that it suffices to take $V_{1}=\{\mathbf{0}\}$, which implies
\begin{equation*}
\mathbb{E} \sup _{\epsilon}\left[\sum_{f \in \mathcal{F}}\left[\sum_{t=1}^{n} \epsilon_{t} v_{t}^{1}[f]\right]=0\right].
\end{equation*}

The first term may be handled using Cauchy-Schwarz as follows:
\begin{align*}
&\mathbb{E} \sup _{\epsilon}\left[\sum_{t \in \mathcal{F}}\left[\sum_{t=1}^{n} \epsilon_{t}\left(f\left(x_{t}\right)-v_{t}^{N}[f]\right)\right]\right] \nonumber\\
\leq & \sqrt{\underset{\epsilon}{\mathbb{E}} \sum_{t=1}^{n}\left(\epsilon_{t}\right)^{2}} \sqrt{\sup _{f \in \mathcal{F}} \sum_{t=1}^{n}\left(f\left(x_{t}\right)-v_{t}^{N}[f]\right)^{2}} \\
\leq &\sqrt{n} \varepsilon_{N}
\end{align*}

Last to take care of are the terms of the form
\begin{equation*}
\mathbb{E} \sup _{\epsilon}\left[\sum_{t=1}^{n} \epsilon_{t}\left(v_{t}^{i}[f]-v_{t}^{i+1}[f]\right)\right].
\end{equation*}

For each $i$, let $W_{i}=\left\{v^{i}[f]-v^{i+1}[f] \mid f \in \mathcal{F}\right\} .$ Then $\left|W_{i}\right| \leq\left|V_{i}\right|\left|V_{i+1}\right| \leq\left|V_{i+1}\right|^{2}$,
\begin{equation*}
\mathbb{E} \sup _{\epsilon}\left[\sum_{t \in \mathcal{F}}\left[\sum_{t=1}^{n} \epsilon_{t}\left(v_{t}^{i}[f]-v_{t}^{i+1}[f]\right)\right]\right] \leq \mathbb{E} \sup _{\epsilon \in W_{i}}\left[\sum_{t=1}^{n} \epsilon_{t} w_{t}\right],
\end{equation*}
and furthermore

\begin{align*}
\sup _{w \in W_{i}} \sqrt{\sum_{t=1}^{n} w_{t}^{2}} &=\sup _{f \in \mathcal{F}}\left\|v^{i}[f]-v^{i+1}[f]\right\|_{2} \\
& \leq \sup _{f \in \mathcal{F}}\left\|v^{i}[f]-\left(f\left(x_{1}\right), \ldots, f\left(x_{n}\right)\right)\right\|_{2}\nonumber\\
+&\sup _{f \in \mathcal{F}}\left\|\left(f\left(x_{1}\right), \ldots, f\left(x_{n}\right)\right)-v^{i+1}[f]\right\|_{2} \\
& \leq \varepsilon_{i}+\varepsilon_{i+1} \\
&=3 \varepsilon_{i+1}.
\end{align*}

With this observation, the standard Massart finite class lemma \citep{mohri} implies
\begin{align*}
&\underset{\epsilon}{\mathbb{E}} \sup _{w \in W_{i}}\left[\sum_{t=1}^{n} \epsilon_{t} w_{t}\right] \nonumber\\
\leq & \sqrt{2 \sup _{w \in W_{i}} \sum_{t=1}^{n}\left(w_{t}\right)^{2} \log \left|W_{i}\right|} \leq 3 \sqrt{2 \log \left|W_{i}\right|} \varepsilon_{i+1} \leq 6 \sqrt{\log \left|V_{i+1}\right|} \varepsilon_{i+1}.
\end{align*}

Collecting all terms, this establishes

\begin{align*}
\mathbb{E}_{\epsilon} \sup _{f \in \mathcal{F}} \sum_{t=1}^{n} \epsilon_{t} f\left(x_{t}\right) & \leq \varepsilon_{N} \sqrt{n}+6 \sum_{i=1}^{N-1} \varepsilon_{i+1} \sqrt{\log \mathcal{N}\left(\mathcal{F}_{\mid S}, \varepsilon_{i+1},\|\cdot\|_{2}\right)} \\
& \leq \varepsilon_{N} \sqrt{n}+12 \sum_{i=1}^{N}\left(\varepsilon_{i}-\varepsilon_{i+1}\right) \sqrt{\log \mathcal{N}\left(\mathcal{F}_{\mid S}, \varepsilon_{i},\|\cdot\|_{2}\right)} \\
& \leq \varepsilon_{N} \sqrt{n}+12 \int_{\varepsilon_{N+1}}^{\sqrt{n}} \sqrt{\log \mathcal{N}\left(\mathcal{F}_{\mid S}, \varepsilon,\|\cdot\|_{2}\right)} d \varepsilon.
\end{align*}

Finally, select any $\alpha>0$ and take $N$ be the largest integer with $\varepsilon_{N+1}>\alpha$. Then $\varepsilon_{N}=4 \varepsilon_{N+2}<4 \alpha$, and so
\begin{align*}
&\varepsilon_{N} \sqrt{n}+12 \int_{\varepsilon_{N+1}}^{\sqrt{n}} \sqrt{\log \mathcal{N}\left(\mathcal{F}_{\mid S}, \varepsilon,\|\cdot\|_{2}\right)} d \varepsilon \nonumber\\
\leq & 4 \alpha \sqrt{n}+12 \int_{\alpha}^{\sqrt{n}} \sqrt{\log \mathcal{N}\left(\mathcal{F}_{\mid S}, \varepsilon,\|\cdot\|_{2}\right)} d \varepsilon.
\end{align*}
\end{proof}

However, it's worth noticing that $\mathcal{N}(\mathcal{F},\epsilon, \left\|\cdot\right\|_2)$ can not be directly used in Theorem 3 to obtain the upper bound of $ \hat{\mathfrak{R}}_{S}(\mathcal{G})$. Hence we raise Lemma 3 and Lemma 4 to make it applicable. We shall first prove these two lemmas.
\begin{proof}(Lemma 3)
Consider $\mathcal{H}\subset \mathcal{F}$ is a cover of family $\mathcal{F}$ which satisfies that the cardinality of $\mathcal{H}$ equals the covering number of $\mathcal{F}$. Then for any $\mathcal{F}_\mathcal{A}\in\mathcal{F}$, we have a corresponding $h\in \mathcal{H}$ such that 
\begin{equation*}
\left\|\mathcal{F}_\mathcal{A}(Z)-h(Z)\right\|_2\leq \epsilon.
\end{equation*}

Then consider $\left\|\mathcal{F}_\mathcal{A}(Z)-Y\right\|_2\in \mathcal{G}$, we have 

    \begin{align*}
    \left|\left\|\mathcal{F}_\mathcal{A}(Z)-Y\right\|_2-\left\|h(Z)-Y\right\|_2\right|&\leq  \left|\left\|\mathcal{F}_\mathcal{A}(Z)-Y-h(Z)+Y\right\|_2\right|\\
    &=\left|\left\|\mathcal{F}_\mathcal{A}(Z)-h(Z)\right\|_2\right|\\
    &\leq \epsilon.
    \end{align*}

Therefore, it's trivial that $\bar{\mathcal{H}}=\{\left\|h(Z)-Y\right\|_2:h\in \mathcal{H}\}$ is a cover of $\mathcal{G}$, and the cardinality of $\mathcal{H}$ equals that of $\bar{\mathcal{H}}$

Hence, the covering number of $\mathcal{G}$ is less than the covering number of $\mathcal{F}$
\end{proof}

\begin{proof}(Lemma 4)
Consider $\mathcal{H}\subset \mathcal{G}$ is a cover of family $\mathcal{G}$ which satisfies that the cardinality of $\mathcal{H}$ equals the covering number of $\mathcal{G}$. For any $g\in \mathcal{G}$, there exist $h\in\mathcal{H}$ such that 
\begin{equation*}\left\|\alpha g-\alpha h\right\|_2=\alpha\left\| g-h\right\|_2\leq \alpha \epsilon. \end{equation*}
Therefore, $\alpha \mathcal{H}$ is a cover of $\alpha\mathcal{G}$

Vice Versa, if  $\alpha \mathcal{H}$ is a cover of $\alpha\mathcal{G}$, then $\mathcal{H}$ is a cover of $\mathcal{G}$

Hence, Lemma 4 is get proved. 
\end{proof}
After all preparations have been done, the proof of Theorem 1 is given as follows:

\begin{proof}(Theorem 1)\\
Consider  family $\mathcal{F}=\left\{F_{\mathcal{A}}\left(z\right): \mathcal{A}=\left(A_{1}, \ldots, A_{L}\right),\left\|A_{i}\right\|_{\sigma} \leq s_{i},\left\|A_{i}^{\top}\right\|_{2,1} \leq b_{i}\right\}$, and  family 
\begin{equation*} \mathcal{G}=\left\{(z, y) \mapsto l\left(F_{\mathcal{A}}(z), y\right): F_{\mathcal{A}} \in \mathcal{F}\right\}.\end{equation*}

As a consequence of Lemma 3, 
\begin{equation*}\mathcal{N}\left(\mathcal{F}_{|S}, \epsilon,\|\cdot\|_{2}\right) \geq \mathcal{N}\left(\mathcal{G}_{|S}, \epsilon,\|\cdot\|_{2}\right)\end{equation*}
when the loss function is set to be $l(\mathcal{F}_{\mathcal{A}}(z),y)=\left\|\mathcal{F}_{\mathcal{A}}(z)-y\right\|_2$.
Since in the standard Dudley Entropy Integral, it requires the value of loss function to be always located in the interval $[0,1]$, and we make the assumption that $l(\mathcal{F}_{\mathcal{A}}(z),y)\leq M$ always holds for the given data set, hence, we can rescale the loss function by $\frac{1}{M}$. 

Define $\bar{\mathcal{G}}=\left\{(z, y) \mapsto \frac{1}{M}l\left(F_{\mathcal{A}}(z), y\right): F_{\mathcal{A}} \in \mathcal{F}\right\}$, then Lemma 4 indicates
\begin{equation*}\mathcal{N}\left(\mathcal{G}, M\epsilon,\|\cdot\|_{2}\right)=\mathcal{N}\left(\bar{ \mathcal{G}}, \epsilon,\|\cdot\|_{2}\right).\end{equation*}

Therefore

\begin{align*}
&\mathcal{N}\left(\bar{ \mathcal{G}}_{|S}, \epsilon,\|\cdot\|_{2}\right)\leq \mathcal{N}\left(\mathcal{F}_{|S}, M\epsilon,\|\cdot\|_{2}\right)\\
&\ln\mathcal{N}\left(\bar{ \mathcal{G}}_{|S}, \epsilon,\|\cdot\|_{2}\right) \leq \ln\mathcal{N}\left(\mathcal{F}_{|S}, M\epsilon,\|\cdot\|_{2}\right)\\
&\leq \frac{\|Z\|_{2}^{2} \ln \left(4 W^{2}\right)}{ M^2\epsilon^{2}}\left(\prod_{j=1}^{L} s_{j}^{2} \rho_{j}^{2}\right)\left(\sum_{i=1}^{L}\left(\frac{b_{i}}{s_{i}}\right)^{2 / 3}\right)^{3}.
\end{align*}

If we denote $R=\|Z\|_{2}^{2} \ln \left(4 W^{2}\right)\left(\prod_{j=1}^{L} s_{j}^{2} \rho_{j}^{2}\right)\left(\sum_{i=1}^{L}\left(\frac{b_{i}}{s_{i}}\right)^{2 / 3}\right)^{3}$, then we have $\ln\mathcal{N}\left(\bar{ \mathcal{G}}_{|S}, \epsilon,\|\cdot\|_{2}\right) \leq \frac{R}{M^2\epsilon^2}$.

As stated in Theorem 3, 
    \begin{align*}
    \hat{\mathfrak{R}}_{S}(\bar{\mathcal{G}}) & = \mathop{E}_{\sigma}[\sup_{\bar{g}\in \bar{\mathcal{G}}}\frac{1}{n}\sum_{i=1}^{n}\sigma_i\bar{g}(z_i)]\\
    &= \mathop{E}_{\sigma}[\sup_{g\in \mathcal{G}}\frac{1}{n}\sum_{i=1}^{n}\sigma_i\frac{1}{M}g(z_i)]\\
    &= \frac{1}{M} \hat{\mathfrak{R}}_{S}(\mathcal{G})\\
    &\leq\inf _{\alpha>0}\left(\frac{4 \alpha}{\sqrt{n}}+\frac{12}{n} \int_{\alpha}^{\sqrt{n}} \sqrt{\ln \mathcal{N}\left(\bar{\mathcal{G}}_{\mid S}, \varepsilon,\|\cdot\|_{2}\right)} d \varepsilon \right)\\
    &\leq\inf _{\alpha>0}\left(\frac{4 \alpha}{\sqrt{n}}+\frac{12}{n} \int_{\alpha}^{\sqrt{n}} \sqrt{\frac{R}{M^2\epsilon^2}} d \varepsilon \right)\\
    &=\inf _{\alpha>0}\left(\frac{4 \alpha}{\sqrt{n}}+\frac{12\sqrt{R}}{Mn } \ln \frac{\sqrt{n}}{\alpha}\right).
    \end{align*}

 To make the upper bound neater, we make a simple choice at $\alpha=\frac{1}{n}$, hence, 
 \begin{equation*}\hat{\mathfrak{R}}_{S}(\mathcal{G})\leq \frac{4M}{n^{3/2}}+\frac{18 \left\|Z\right\|_2\sqrt{2\ln(2W)}\ln{n}R_{\mathcal{A}}}{n}.\end{equation*}
 Plugging this upper bound into Theorem 2, the desired result can be obtained.

\end{proof}

\subsection{PAC Learnability of Complex-valued Neural Networks}
In this section, we desire to present the proof which shows that complex-valued neural networks are PAC-learnable.

We denote $f_S$ to be the empirical error minimizer, i.e., $f_S=\mathop{arg}\limits_{f\in \mathcal{F}}\min\frac{1}{n}\sum\limits_{i=1}^{n}l\left(f(z_i),y_i\right)$. Similarly, $f$ is the expected error minimizer: $f=\mathop{arg}\limits_{f\in \mathcal{F}}\min\mathbb{E}\left[\frac{1}{n}\sum\limits_{i=1}^{n}l\left(f(z_i),y_i\right)\right]$. $R(f)$ and $\widehat{R}(f)$ respectively represents the expected error and the empirical error. 

The concept of PAC-learnable is defined as follows.
\begin{definition}
(PAC-learnable) Let $\mathcal{F}$ be a hypothesis set. $\mathcal{A}$ is a PAC-learnable algorithm if there exists a polynomial function poly $(\cdot, \cdot, \cdot, \cdot)$ such that for any $\epsilon>0$ and $\delta>0$, for all distributions $\mathcal{D}$ over $Z$, the following holds for any sample size $m \geq \operatorname{poly}(1 / \epsilon, 1 / \delta, n$, size $(c))$ :
\begin{equation*}
\underset{S \sim \mathcal{D}^{m}}{\mathbb{P}}\left[R\left(f_{S}\right)- R(f) \leq \epsilon\right] \geq 1-\delta.
\end{equation*}
Here $f$ and $f_S$ are defined above.
\end{definition}

\begin{corollary}
Define the loss function to be $l(F_\mathcal{A}(z), y)=||F_\mathcal{A}(z)-y||_2$, and is upper-bounded by a constant M. For a complex-valued neural network: $F_{\mathcal{A}}(z):=\sigma_{L}\left(A_{L} \sigma_{L-1}\left(A_{L-1} \cdots \sigma_{1}\left(A_{1} z\right) \right)\right)$, where activation functions $\sigma_i$ are $\rho_i$-lipschitz, it is PAC-learnable. 
\end{corollary}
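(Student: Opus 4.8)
The plan is to obtain PAC-learnability as an immediate consequence of the uniform generalization bound of Theorem~\ref{thm:iid} together with the classical error decomposition for empirical risk minimization. Write $R(f)=\mathbb{E}_{(z,y)\sim\mathcal{D}}[l(f(z),y)]$ and $\widehat{R}(f)=\frac{1}{n}\sum_{i=1}^{n}l(f(z_i),y_i)$. Since $f_S$ minimizes $\widehat{R}$ over $\mathcal{F}$ we have $\widehat{R}(f_S)-\widehat{R}(f)\le 0$, hence
\begin{align*}
R(f_S)-R(f) &= \bigl(R(f_S)-\widehat{R}(f_S)\bigr)+\bigl(\widehat{R}(f_S)-\widehat{R}(f)\bigr)+\bigl(\widehat{R}(f)-R(f)\bigr)\\
&\le \bigl(R(f_S)-\widehat{R}(f_S)\bigr)+\bigl(\widehat{R}(f)-R(f)\bigr).
\end{align*}
So it suffices to control the two surviving terms with high probability.

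For the first term I would invoke Theorem~\ref{thm:iid} with confidence $\delta/2$: with probability at least $1-\delta/2$, uniformly over every $F_{\mathcal{A}}\in\mathcal{F}$ (in particular for $f_S$),
\[
R(f_S)-\widehat{R}(f_S)\le \frac{8M}{n^{3/2}}+\frac{36\|Z\|_2\sqrt{2\ln(2W)}\ln(n)R_{\mathcal{A}}}{n}+3M\sqrt{\frac{\ln(4/\delta)}{2n}}.
\]
For the second term, because $f$ is the population risk minimizer and therefore does not depend on $S$, and $0\le l(f(z),y)\le M$, Hoeffding's inequality gives, with probability at least $1-\delta/2$,
\[
\widehat{R}(f)-R(f)\le M\sqrt{\frac{\ln(2/\delta)}{2n}}.
\]
A union bound then yields, with probability at least $1-\delta$, a bound on $R(f_S)-R(f)$ that is a finite sum of terms each tending to $0$ as $n\to\infty$.

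The final step is to extract a polynomial sample-complexity function. I would use that the hypothesis class is specified by a fixed architecture together with a priori bounds $s_i$ on $\|A_i\|_\sigma$ and $b_i$ on $\|A_i^{\top}\|_{2,1}$, so that $R_{\mathcal{A}}$ is uniformly bounded by a constant $R^{\ast}$ depending only on these quantities and on the $\rho_i$, and that the input domain $\mathcal{Z}$ is bounded, giving $\|Z\|_2\le\sqrt{n}\,B$ for a constant $B$. Then the middle term is $O\bigl(B\sqrt{\ln(2W)}\,R^{\ast}\ln(n)/\sqrt{n}\bigr)$, and every term on the right-hand side is $O(\mathrm{polylog}(n)/\sqrt{n})$ or $O(\sqrt{\ln(1/\delta)/n})$. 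Using $\ln n\le\sqrt{n}$ to absorb the logarithmic factor, one can choose $n=\operatorname{poly}(1/\epsilon,1/\delta,W,\ldots)$ large enough that the whole right-hand side is at most $\epsilon$, which is precisely the definition of PAC-learnability, so the corollary follows.

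The main obstacle is the presence of the sample-dependent quantities $\|Z\|_2$ and $R_{\mathcal{A}}$ inside the bound of Theorem~\ref{thm:iid}: to get a genuinely distribution-independent, polynomial sample complexity one must first replace these by worst-case constants, which is exactly what forces the (mild and standard) hypotheses that $\mathcal{Z}$ is bounded and that the class has bounded weight-matrix norms. Once those are fixed the rest is routine; the only remaining wrinkle is the $\ln(n)$ factor, which spoils a clean $1/\epsilon^2$ rate but is harmless for establishing polynomial sample complexity.
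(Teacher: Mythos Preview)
Your proposal is correct and follows essentially the same route as the paper: the standard ERM error decomposition, followed by an appeal to the uniform generalization bound of Theorem~\ref{thm:iid}, and then inversion to obtain a polynomial sample complexity. The only differences are cosmetic: the paper bounds \emph{both} residual terms by $2\sup_{f\in\mathcal{F}}|\widehat R(f)-R(f)|$ and invokes Theorem~\ref{thm:iid} once (arriving at a cruder $n\gtrsim 1/\epsilon^3$ threshold), whereas you treat the fixed-$f$ term separately via Hoeffding; and you are explicit about replacing the data-dependent $\|Z\|_2$ and $R_{\mathcal{A}}$ by worst-case constants, a point the paper leaves implicit in its final sample-size formula.
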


\begin{proof}

It suffices to prove that $R(f_S)-R(f)\leq \epsilon$ via the generalization upper bound under high probability.

Since
\begin{align*}
R(f_S)-R(f)&= R(f_S)-\widehat{R}(f_S)+\widehat{R}(f_S)-R(f)\\
&\leq R(f_S)-\widehat{R}(f_S)+\widehat{R}(f)-R(f),
\end{align*}
the last inequality holds because $f_S$ is the empirical error minimizer, therefore, we have 
    \begin{align*}
    \left|R(f_S)-R(f)\right|&\leq \left|R(f_S)-\widehat{R}(f_S)+\widehat{R}(f)-R(f)\right|\\
    &\leq \left|R(f_S)-\widehat{R}(f_S)\right|+\left|\widehat{R}(f)-R(f)\right|\\
    &\leq 2\sup\limits_{f\in\mathcal{F}}\left|\widehat{R}(f)-R(f)\right|.
    \end{align*}
    
Hence
\begin{equation*}P\left(\left|R(f_S)-R(f)\right|\leq \epsilon\right)\geq P\left(\sup\limits_{f\in\mathcal{F}}\left|\widehat{R}(f)-R(f)\right|\leq \frac{\epsilon}{2}\right)\end{equation*}.

As in Theorem 1, we have for any $f\in \mathcal{F}$

\begin{align*}
&P\left(\left|R(f)-\widehat{R}(f)\right|\leq \frac{8M}{n^\frac{3}{2}}+\frac{36||Z||_2\sqrt{2ln(2W)}ln(n)R_\mathcal{A}}{n}+3M\sqrt{\frac{ln\frac{2}{\delta}}{2n}}\right)\nonumber\\
\geq & 1- \delta.
\end{align*}

Notice that, these two statments are equivalent: 
\begin{align*}
&\sup\limits_{f\in\mathcal{F}}\left|\widehat{R}(f)-R(f)\right|\leq \frac{\epsilon}{2}\\
\Leftrightarrow &\forall f\in \mathcal F, \left|R(f)-\widehat{R}(f)\right|\leq \frac{\epsilon}{2}.
\end{align*}

Hence, we can claim that if 
\begin{equation*}\frac{\epsilon}{2}\geq\frac{8M}{n^\frac{3}{2}}+\frac{36||Z||_2\sqrt{2ln(2W)}ln(n)R_\mathcal{A}}{n}+3M\sqrt{\frac{ln\frac{2}{\delta}}{2n}},\end{equation*}
then 
\begin{equation*}P\left(\sup\limits_{f\in\mathcal{F}}\left|\widehat{R}(f)-R(f)\right|\leq \frac{\epsilon}{2}\right)\geq 1-\delta\end{equation*}
i.e.
\begin{equation*}P\left(\left|R(f_S)-R(f)\right|\leq \epsilon\right)\geq 1-\delta.\end{equation*}

Hence, we can get the conclusion that if
\begin{equation*}n\geq \frac{8}{\epsilon^3}\left(8M+36\left\|Z\right\|_2\sqrt{2\ln{(2W)}}R_{\mathcal{A}}+3M\sqrt{\frac{\ln{\frac{2}{\delta}}}{2}}\right)^3\end{equation*},
then 
\begin{equation*}P\left(\left|R(f_S)-R(f)\right|\leq \epsilon\right)\geq 1-\delta.\end{equation*}

Therefore, PAC-learnability of complex-valued neural networks get proved. 
\end{proof}

\subsection{Generalization of Sequential Data}
In this section, we aim at proving Theorem 2. Theorem 2 shows an extension of generalization to sequential data case. Therefore, sequential analogues of complexities \citep{rakhl} are presented in this section to complete the proof. 
\subsubsection{Sequential Rademacher Complexity}
In the case of classical complexity measure, we use the expectation of the supremum of Rademacher process to define the Rademacher complexity. In the sequential Rademacher case, the intuition is quite similar.  \citet{rakhl} illustrated a binary tree process to be the analogue of Rademacher process, which coincides with Rademacher process under i.i.d assumption, but behaves differently in general. The notion of a tree is defined as following:

"A $\mathcal{Z}$-valued tree $\mathbf{z}$ of depth $n$ is a rooted complete binary tree with nodes labeled by elements of $\mathcal{Z}$. We identify the tree $\mathbf{z}$ with the sequence $\left(\mathbf{z}_{1}, \ldots, \mathbf{z}_{n}\right)$ of labeling functions $\mathbf{z}_{i}:\{\pm 1\}^{i-1} \mapsto \mathcal{Z}$ which provide the labels for each node. Here, $\mathbf{z}_{1} \in \mathcal{Z}$ is the label for the root of the tree, while $\mathbf{z}_{i}$ for $i>1$ is the label of the node obtained by following the path of length $i-1$ from the root, with $+1$ indicating 'right' and $-1$ indicating 'left'. A path of length $n$ is given by the sequence $\epsilon=\left(\epsilon_{1}, \ldots, \epsilon_{n}\right) \in\{\pm 1\}^{n}$. For brevity, we shall often write $\mathbf{z}_{t}(\epsilon)$, but it is understood that $\mathbf{z}_{t}$ only depends only on the prefix $\left(\epsilon_{1}, \ldots, \epsilon_{t-1}\right)$ of $\epsilon$. Given a tree $\mathbf{z}$ and a function $f: \mathcal{Z} \mapsto \mathbb{R}$, we define the composition $f \circ \mathbf{z}$ as a real-valued tree given by the labeling functions $\left(f \circ \mathbf{z}_{1}, \ldots, f \circ \mathbf{z}_{n}\right)$." \citep{rakhl}

Therefore, the definition of sequential Rademacher complexity is stated in Definition 2.

\begin{definition}[\citep{rakhl}]
For a $\mathcal{Z}$-valued tree $\mathbf{z}$ with depth n, then the sequential Rademacher complexity of a function class

$\mathcal{G}_{sq}:=\left\{(z_t, y_t) \mapsto l\left(F_{\mathcal{A}}(z), y\right): F_{\mathcal{A}} \in \mathcal{F}\right\}$ is defined as follows:
\begin{equation*}\mathfrak{R}_{n}^{sq}(\mathcal{G}_{sq}, \mathbf{z})=\mathbb{E}\left[\sup _{f \in \mathcal{F}} \frac{1}{n} \sum_{t=1}^{n} \epsilon_{t} l\left(f\left(z_{t}(\epsilon)\right), y_t\right)\right],\end{equation*}
and 
\begin{equation*}\mathfrak{R}_{n}^{sq}(\mathcal{G}_{sq})=\sup _{\mathbf{z}} \mathfrak{R}_{n}^{sq}(\mathcal{G}_{sq}, \mathbf{z}).\end{equation*}

Here $\epsilon_{t}$ is the Rademacher variables taking value from $\{+1,-1\}$ with equal probability. 
\end{definition}

\subsubsection{Sequential Rademacher Complexity Generalization Bound}

When investigating the relation between generalization error and Rademacher complexity, we have the following theorem. 
\begin{theorem} Given function class $\mathcal{F}$, sample $S=\{(z_1,y_1), (z_2,y_2),...,(z_n,y_n)\}$ where ($z_1,y_1$) are i.i.d data points, we have 
\begin{equation*}\sup_{f\in\mathcal{F}} \mathbb{E}\left[\frac{1}{n}\sum_{i=1}^{n}f(z_i,y_i)-\mathbb{E}\left[f\right]\right]\leq \mathbb{E}\left[\sup_{f\in\mathcal{F}}\frac{1}{n}\sum_{i=1}^{n}f(z_i,y_i)-\mathbb{E}\left[f\right]\right] \leq 2\mathfrak{R}(\mathcal{F})\end{equation*}
where $\mathfrak{R}(\mathcal{F})=\mathbb{E}[\widehat{\mathfrak{R}}_S(\mathcal{F})]$

\end{theorem}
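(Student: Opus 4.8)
The plan is to prove the two inequalities separately, as the statement is really a chain $\sup_{f} \mathbb{E}[\cdots] \le \mathbb{E}[\sup_{f}\cdots] \le 2\mathfrak{R}(\mathcal{F})$, and each link has a standard argument. For the first inequality, note that for any fixed $f\in\mathcal{F}$ the quantity $\frac{1}{n}\sum_i f(z_i,y_i) - \mathbb{E}[f]$ is dominated pointwise by $\sup_{g\in\mathcal{F}}\left(\frac{1}{n}\sum_i g(z_i,y_i)-\mathbb{E}[g]\right)$; taking expectations preserves this, and since the right-hand side no longer depends on $f$, we may take the supremum over $f$ on the left. This is just monotonicity of expectation together with the fact that $\sup$ of an average of expectations is at most the expectation of the $\sup$.

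The second inequality is the heart of the matter and is proved by the classical \emph{symmetrization} argument. First I would introduce a ghost (independent) sample $S'=\{(z_i',y_i')\}$ drawn i.i.d.\ from $\mathcal{D}$, independent of $S$, and write $\mathbb{E}[f] = \mathbb{E}_{S'}\left[\frac{1}{n}\sum_i f(z_i',y_i')\right]$. Substituting this into $\mathbb{E}_S\left[\sup_f \left(\frac{1}{n}\sum_i f(z_i,y_i)-\mathbb{E}[f]\right)\right]$ and pulling the inner expectation outside the supremum via Jensen's inequality (the supremum of an expectation is at most the expectation of the supremum) gives
\begin{equation*}
\mathbb{E}_{S}\left[\sup_{f\in\mathcal{F}} \Big(\tfrac{1}{n}\sum_{i=1}^n f(z_i,y_i)-\mathbb{E}[f]\Big)\right] \le \mathbb{E}_{S,S'}\left[\sup_{f\in\mathcal{F}} \tfrac{1}{n}\sum_{i=1}^n \big(f(z_i,y_i)-f(z_i',y_i')\big)\right].
\end{equation*}
Then, because $(z_i,y_i)$ and $(z_i',y_i')$ are i.i.d., swapping the roles of the $i$-th pair for each $i$ according to an independent sign $\epsilon_i\in\{\pm1\}$ leaves the joint distribution unchanged, so the right-hand side equals $\mathbb{E}_{S,S',\epsilon}\left[\sup_f \frac{1}{n}\sum_i \epsilon_i(f(z_i,y_i)-f(z_i',y_i'))\right]$. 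Splitting the supremum over the difference into two supremum terms and using that the two resulting terms have the same distribution yields the bound $2\,\mathbb{E}_{S,\epsilon}\left[\sup_f \frac{1}{n}\sum_i \epsilon_i f(z_i,y_i)\right] = 2\,\mathbb{E}_S[\widehat{\mathfrak{R}}_S(\mathcal{F})] = 2\mathfrak{R}(\mathcal{F})$, where $\widehat{\mathfrak{R}}_S(\mathcal{F})=\mathbb{E}_\epsilon[\sup_f \frac{1}{n}\sum_i \epsilon_i f(z_i,y_i)]$ is the empirical Rademacher complexity.

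The main obstacle, such as it is, is bookkeeping rather than conceptual: one must be careful about the order of the expectations ($S$, ghost sample $S'$, Rademacher signs $\epsilon$) and justify each exchange — Jensen's inequality for pulling $\mathbb{E}_{S'}$ through the $\sup$, the distributional invariance under coordinatewise swaps, and the subadditivity of $\sup$ when splitting $\sup_f(a_f - b_f) \le \sup_f a_f + \sup_f(-b_f)$. Since the data here are assumed i.i.d., this is exactly the textbook setting (cf.\ \citet{mohri}), so no complex-valued subtleties enter: the functions $f$ in $\mathcal{G}_{sq}$ are real-valued (loss values), and the argument is agnostic to whether the inputs live in $\mathbb{C}^{d_Z}$ or $\mathbb{R}^{d_Z}$. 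I would therefore present it as a direct citation-backed recollection of the symmetrization lemma, spelling out only the swap step and the final split for completeness.
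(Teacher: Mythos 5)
Your proof is correct: it is the standard symmetrization argument (ghost sample, Jensen to pull $\mathbb{E}_{S'}$ through the supremum, sign-swap invariance, then splitting the supremum), and every step is justified. The paper itself states this theorem without proof, treating it as a known textbook result (cf. \citet{mohri}), so your write-up simply supplies the classical argument the paper implicitly relies on; no complex-valued issues arise since, as you note, the functions involved are real-valued losses.
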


For sequential Rademacher complexity,  \citet{rakhl} proved a similar theorem. 

\begin{theorem}
Given function class $\mathcal{F}$, sample $S=\{(z_1,y_1), (z_2,y_2),...,(z_n,y_n)\}$ where ($z_1,y_1$) are sequential data points, then the following inequality holds:
\begin{equation*}\mathbb{E}\left[\sup _{f \in \mathcal{F}} \frac{1}{n} \sum_{t=1}^{n}\left(\mathbb{E}\left[f\left(z_{t}, y_{t}\right) \mid \mathcal{A}_{t-1}\right]-f\left(z_{t},y_t\right)\right)\right] \leq 2 \mathfrak{R}_{n}^{sq}(\mathcal{F})\end{equation*}
where $\mathfrak{R}_{n}^{sq}(\mathcal{F})$ denotes the sequential Rademacher complexity. 
\end{theorem}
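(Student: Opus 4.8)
The plan is to prove this by the sequential symmetrization argument of \citet{rakhl}, carried out in three stages: a \emph{tangent-sequence} (ghost sample) replacement, a \emph{Rademacher symmetrization}, and a \emph{passage to trees}. Throughout I abbreviate $f(z_t,y_t)$ by writing $f$ evaluated at the $t$-th pair.

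First I would introduce a tangent sequence $(z_t',y_t')_{t=1}^{n}$: conditionally on the $\sigma$-field generated by $(z_1,y_1),\dots,(z_{t-1},y_{t-1})$, draw $(z_t',y_t')$ from the conditional law of $(z_t,y_t)$ given $\mathcal{A}_{t-1}$, independently of $(z_t,y_t)$. Its defining property is $\mathbb{E}[f(z_t,y_t)\mid\mathcal{A}_{t-1}]=\mathbb{E}[f(z_t',y_t')\mid\mathcal{A}_{t-1}]$ for every $f\in\mathcal{F}$. Substituting this into the left-hand side and then peeling off the conditional expectations one index at a time — from $t=n$ down to $t=1$, using at each step that the supremum is a convex functional so that $\sup_f\mathbb{E}[\,\cdot\mid\mathcal{A}_{t-1}]\le\mathbb{E}[\sup_f\,\cdot\mid\mathcal{A}_{t-1}]$ by Jensen's inequality — upgrades the bound to
\[
\mathbb{E}\Big[\sup_{f\in\mathcal{F}}\tfrac{1}{n}\sum_{t=1}^{n}\big(\mathbb{E}[f(z_t,y_t)\mid\mathcal{A}_{t-1}]-f(z_t,y_t)\big)\Big]\ \le\ \mathbb{E}\Big[\sup_{f\in\mathcal{F}}\tfrac{1}{n}\sum_{t=1}^{n}\big(f(z_t',y_t')-f(z_t,y_t)\big)\Big].
\]

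Next I would symmetrize. Conditionally on the past, $(z_t,y_t)$ and $(z_t',y_t')$ are exchangeable, so swapping them — which negates the $t$-th summand $f(z_t',y_t')-f(z_t,y_t)$ — leaves the joint law unchanged; carrying out this swap independently at each coordinate according to Rademacher signs $\epsilon_1,\dots,\epsilon_n$, again justified by an index-by-index conditioning argument because the $(z_t,y_t)$ are not i.i.d., shows that the right-hand side above equals $\mathbb{E}_{z,z'}\mathbb{E}_{\epsilon}\big[\sup_{f}\tfrac{1}{n}\sum_{t}\epsilon_t(f(z_t',y_t')-f(z_t,y_t))\big]$. Subadditivity of the supremum splits this into two terms, and by the symmetry of the $\epsilon_t$ each one equals $\mathbb{E}_{z}\mathbb{E}_{\epsilon}\big[\sup_{f}\tfrac{1}{n}\sum_{t}\epsilon_t f(z_t,y_t)\big]$, which produces the factor $2$.

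The final step — the one I expect to be the main obstacle — is to dominate $\mathbb{E}_{z}\mathbb{E}_{\epsilon}\big[\sup_{f\in\mathcal{F}}\tfrac{1}{n}\sum_{t=1}^{n}\epsilon_t f(z_t,y_t)\big]$ by $\mathfrak{R}_n^{sq}(\mathcal{F})=\sup_{\mathbf{z}}\mathbb{E}_{\epsilon}\big[\sup_{f}\tfrac{1}{n}\sum_t\epsilon_t f(\mathbf{z}_t(\epsilon))\big]$. Here one must exploit that each $(z_t,y_t)$ is adapted to $\mathcal{A}_{t-1}$: after symmetrization the prefix $(\epsilon_1,\dots,\epsilon_{t-1})$ records which of the real/ghost draws were selected at the earlier times, so the realized values can be arranged into a complete binary tree $\mathbf{z}$ over the domain of $\mathcal{F}$ whose label at the node indexed by $(\epsilon_1,\dots,\epsilon_{t-1})$ is the corresponding draw; bounding the expectation over this random tree by the supremum over all trees then yields exactly $\mathfrak{R}_n^{sq}(\mathcal{F})$. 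The delicate point is checking that the conditional-exchangeability structure used in the symmetrization is compatible with the path structure of the tree, i.e. that the label $\mathbf{z}_t$ depends only on the prefix $(\epsilon_1,\dots,\epsilon_{t-1})$ — which is precisely why the sequential Rademacher complexity is defined via a supremum over trees rather than over sequences. Chaining the three bounds gives $\le 2\,\mathfrak{R}_n^{sq}(\mathcal{F})$, as claimed.
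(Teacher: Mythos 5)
The paper itself does not prove this statement; it is quoted verbatim from \citet{rakhl}, so the only comparison available is against the standard sequential-symmetrization proof you are reconstructing. Your stage 1 (tangent sequence plus backward Jensen peeling) is correct. The genuine gap is in stage 2, and it propagates into stage 3.

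Concretely: your symmetrization asserts that swapping $(z_t,y_t)\leftrightarrow(z_t',y_t')$ ``leaves the joint law unchanged,'' and hence that the decoupled quantity \emph{equals} $\mathbb{E}_{z,z'}\mathbb{E}_{\epsilon}\bigl[\sup_f \tfrac1n\sum_t \epsilon_t\bigl(f(z_t',y_t')-f(z_t,y_t)\bigr)\bigr]$. This is false for $t<n$: in the decoupled process, $(z_{t+1},y_{t+1})$ and $(z_{t+1}',y_{t+1}')$ are drawn conditionally on the \emph{realized} prefix $(z_1,\dots,z_t)$, so swapping the $t$-th pair changes the conditional law of every subsequent draw, and the joint law of the $2n$-tuple is not invariant under the swap. (Only the swap at $t=n$ is law-preserving.) This is precisely the obstruction that forces the tree structure: the correct argument of \citet{rakhl} runs a backward induction in which, at each step, one simultaneously inserts the sign $\epsilon_t$ \emph{and} replaces the conditional expectation over the pair $(z_t,z_t')$ by a supremum over the two values; the fact that the law of the later coordinates depends on which of the two values $\epsilon_t$ designates as ``real'' is exactly what makes the labels at level $t+1$ functions of the prefix $(\epsilon_1,\dots,\epsilon_t)$, i.e.\ a tree. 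Your plan instead completes the symmetrization first, landing on $\mathbb{E}_z\mathbb{E}_\epsilon\bigl[\sup_f\tfrac1n\sum_t\epsilon_t f(z_t,y_t)\bigr]$ with $z$ drawn independently of $\epsilon$, and only then tries to ``arrange the realized values into a tree.'' But at that point the sequence no longer depends on $\epsilon$, so the tree you build is constant along paths and the bound you would obtain is the ordinary worst-case Rademacher average of a fixed sequence --- a quantity that does not dominate the left-hand side for general adapted processes (if it did, sequential Rademacher complexity would be unnecessary). So the two halves of your argument are mutually inconsistent: either the stage-2 equality is false (it is), or the stage-3 tree is vacuous. The fix is to abandon the equality and prove the one-sided inequality against a supremum over pairs of trees by the interleaved backward induction, after which splitting into two terms gives $2\,\mathfrak{R}_n^{sq}(\mathcal{F})$.
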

If the function class $\mathcal{F}$ is bounded, i.e. for any $f\in \mathcal{F}$, $\left\|f\right\|_\infty\leq M$, then the generalization error $ \frac{1}{n} \sum_{t=1}^{n}\left(\mathbb{E}\left[f\left(z_{t}, y_{t}\right) \mid \mathcal{A}_{t-1}\right]-f\left(z_{t},y_t\right)\right)$ is sharply concentrated around its expectation. which leads to Corollary 1. 
\begin{corollary}
Assume that for the target function class, any $f\in \mathcal{F}$, we have$\left\|f\right\|_\infty\leq M$. Given sample $S=\{(z_1,y_1), (z_2,y_2),...,(z_n,y_n)\}$ where ($z_1,y_1$) are sequential data points, then under probability at least $1-\delta$ the following inequality holds:
\begin{equation*}\frac{1}{n} \sum_{t=1}^{n}\left(\mathbb{E}\left[f\left(z_{t}, y_{t}\right) \mid \mathcal{A}_{t-1}\right]-f\left(z_{t},y_t\right)\right)\leq 2\mathfrak{R}_{n}^{sq}(\mathcal{F})+M\sqrt{\frac{\log \frac{2}{\delta}}{2n}}.\end{equation*}
   
\end{corollary}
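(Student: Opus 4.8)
The plan is to obtain the high-probability statement as the concentration upgrade of the in-expectation bound proved just above (the sequential Rademacher complexity bound, due to \citet{rakhl}). Set
\[
\Phi \;:=\; \sup_{f\in\mathcal{F}}\frac1n\sum_{t=1}^{n}D_t(f),\qquad D_t(f):=\mathbb{E}\bigl[f(z_t,y_t)\mid\mathcal{A}_{t-1}\bigr]-f(z_t,y_t),
\]
so that the left-hand side of the corollary is bounded by $\Phi$ (and equals it up to the choice of $f$); it therefore suffices to show $\Phi\le 2\mathfrak{R}_n^{sq}(\mathcal{F})+M\sqrt{\ln(2/\delta)/(2n)}$ with probability at least $1-\delta$.

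First, the preceding theorem gives directly $\mathbb{E}[\Phi]\le 2\mathfrak{R}_n^{sq}(\mathcal{F})$, so it remains to control the fluctuation $\Phi-\mathbb{E}[\Phi]$. The key structural fact is that, for each fixed $f$, the sequence $\bigl(\sum_{s\le t}D_s(f)\bigr)_{t\ge 0}$ is a martingale with respect to $(\mathcal{A}_t)_{t}$ whose $t$-th increment, conditionally on $\mathcal{A}_{t-1}$, lies in an interval of length at most $M$ (using that $l(z,y)=\|F_\mathcal{A}(z)-y\|_2\in[0,M]$). Dividing by $n$ and applying the Azuma--Hoeffding inequality (with the interval-length refinement, $\sum_t (M/n)^2=M^2/n$) yields, for any fixed $f$, $\frac1n\sum_t D_t(f)\le\mathbb{E}\bigl[\frac1n\sum_t D_t(f)\bigr]+M\sqrt{\ln(2/\delta)/(2n)}$ with probability at least $1-\delta$; since $\mathbb{E}\bigl[\frac1n\sum_t D_t(f)\bigr]\le\mathbb{E}[\Phi]\le 2\mathfrak{R}_n^{sq}(\mathcal{F})$, this already proves the corollary in the pointwise form. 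To obtain the uniform version (the supremum over $f$ on the left), I would instead apply Azuma--Hoeffding to the Doob martingale $V_k:=\mathbb{E}[\Phi\mid\mathcal{A}_k]$, $k=0,\dots,n$, with $V_0=\mathbb{E}[\Phi]$ and $V_n=\Phi$, for which the same boundedness should give increments $|V_k-V_{k-1}|$ of order $M/n$. Combining the in-expectation bound with either concentration inequality then gives the claim.

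The main obstacle is precisely the increment bound for the Doob martingale in the uniform case: because each conditional expectation $\mathbb{E}[f(z_s,y_s)\mid\mathcal{A}_{s-1}]$ with $s>k$ is $\mathcal{A}_{s-1}$-measurable, it may itself depend on the observation at time $k$, so one cannot simply invoke a bounded-differences inequality with constant $M/n$ as a black box. The cleanest remedy is to write $V_k-V_{k-1}$ as the conditional expectation of a single martingale-difference term $\tfrac1n D_k(f_k)$, for an (essentially) maximizing $f_k$ handled by a standard measurable-selection or $\varepsilon$-net argument, and bound it directly rather than treating $\Phi$ as a generic function of the coordinates. Everything else is routine bookkeeping.
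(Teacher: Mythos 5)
Your proposal follows the same route as the paper: bound the expectation of the supremum by $2\mathfrak{R}_{n}^{sq}(\mathcal{F})$ via the preceding theorem, then add a bounded-differences concentration term of $M\sqrt{\ln(2/\delta)/(2n)}$. If anything, your treatment is more careful than the paper's, which simply invokes McDiarmid's inequality even though the sequential data are dependent; the Azuma--Hoeffding argument on the Doob martingale that you sketch, together with the increment-bound verification you explicitly flag, is the correct justification of that step.
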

\begin{proof}
This corollary is a consequence of McDiarmid's Inequality and Theorem 6. By McDiarmid's Inequality, since $\left\|f\right\|_\infty\leq M$, we have 
\begin{equation*}\mathbb{P}(|\Delta(\mathcal{F})-\mathbb{E}[\Delta(\mathcal{F})]| \geq t) \leq 2 \exp \left(-2n t^{2} / M^{2}\right)\end{equation*}
where $\Delta(\mathcal{F})=\frac{1}{n} \sum_{t=1}^{n}\left(\mathbb{E}\left[f\left(z_{t}, y_{t}\right) \mid \mathcal{A}_{t-1}\right]-f\left(z_{t},y_t\right)\right)$
Then by Theorem 6, we can get the sequential Rademacher complexity upper bound. 
\end{proof}
As a consequence of Corollary 1, it's necessary to bound the sequential Rademacher complexity if we want to prove the generalization upper bound. This leads to the introduction of sequential Dudley Entropy Integral.  

\subsubsection{Sequential Dudley Entropy Integral}

Before stating the sequential Dudley Entropy Integral, we first present the definition of sequential covering number \citep{rakhl}

\begin{definition} (Sequential Covering Number)
A set $C$ is a sequential  $\alpha$-cover (with respect to $\ell_{p}$-norm) of $\mathcal{F} \subseteq \mathbb{R}^{\mathcal{Z}}$ on a tree $\mathbf{z}$ of depth $n$ if
\begin{equation*}
\forall f \in \mathcal{F}, \forall \epsilon \in\{\pm 1\}^{n}, \exists \mathbf{c} \in C \quad \text { s.t. } \quad\left(\frac{1}{n} \sum_{t=1}^{n}\left|\mathbf{c}_{t}(\epsilon)-f\left(\mathbf{z}_{t}(\epsilon)\right)\right|^{p}\right)^{1 / p} \leq \alpha.
\end{equation*}
The sequential covering number of a function class $\mathcal{F}$ on a given tree $\mathbf{z}$ is defined as
\begin{equation*}
\mathcal{N}_{p}^{sq}(\alpha, \mathcal{F}, \mathbf{z})=\min \left\{|C|: C \text { is an } \alpha \text {-cover w.r.t. } \ell_{p} \text {-norm of } \mathcal{F} \text { on } \mathbf{z}\right\} \text {. }
\end{equation*}
and define  $\mathcal{N}_{p}^{sq}(\alpha, \mathcal{F}, n)=\sup _{\mathbf{z}} \mathcal{N}_{p}^{sq}(\alpha, \mathcal{F}, \mathbf{z})$.

\end{definition}

 \citet{rakhl} provides the sequential version Dudley Entropy Integral as following: 
\begin{theorem}(Sequential Dudley Entropy Integral)
For $p \geq 2$, the sequential Rademacher complexity of a function class $\mathcal{F} \subseteq[-1,1]^{\mathcal{Z}}$ on a $\mathcal{Z}$-valued tree of depth $n$ satisfies
\begin{equation*}
\mathfrak{R}_{n}^{sq}(\mathcal{F})\leq\inf _{\alpha}\left\{4 \alpha+\frac{12}{\sqrt{n}} \int_{\alpha}^{1} \sqrt{\log \mathcal{N}_{2}^{sq}(\delta, \mathcal{F}, n)} d \delta\right\}.
\end{equation*}
\end{theorem}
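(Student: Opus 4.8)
The plan is to transport the classical Dudley chaining argument (Theorem~3) to the tree setting, reading a $\mathcal{Z}$-valued tree $\mathbf{z}$ of depth $n$ in place of the fixed sample and a sequential $\ell_2$-cover in place of an ordinary $\ell_2$-cover. Fix $\mathbf{z}$ and a resolution level $N$, and put $\beta_j = 2^{-j}$ for $j=0,1,\dots,N$. For each $j$ let $V_j$ be a minimal sequential $\beta_j$-cover of $\mathcal{F}$ on $\mathbf{z}$ with respect to $\ell_2$, so that $|V_j| = \mathcal{N}_2^{sq}(\beta_j,\mathcal{F},\mathbf{z})$; since $\mathcal{F}\subseteq[-1,1]^{\mathcal{Z}}$ we may take $V_0 = \{\mathbf{0}\}$. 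The goal is to bound $\mathbb{E}_\epsilon\sup_{f\in\mathcal{F}}\sum_{t=1}^n\epsilon_t f(\mathbf{z}_t(\epsilon))$ and then divide by $n$, as required by the definition of $\mathfrak{R}_n^{sq}$.

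First I would do the telescoping. For each $f$ (and, in the sequential setting, each path $\epsilon$) choose a witness $v_j[f]\in V_j$ with $\bigl(\tfrac1n\sum_t(v_j[f]_t(\epsilon)-f(\mathbf{z}_t(\epsilon)))^2\bigr)^{1/2}\le\beta_j$, and write $f\circ\mathbf{z} = (f\circ\mathbf{z}-v_N[f]) + \sum_{j=0}^{N-1}(v_{j+1}[f]-v_j[f]) + v_0[f]$. Plugging this into $\mathbb{E}_\epsilon\sup_f\sum_t\epsilon_t(\cdot)$: the term $v_0[f]=\mathbf{0}$ contributes nothing; the top term is bounded uniformly in $f,\epsilon$ by Cauchy--Schwarz by $\sqrt n\cdot\sqrt n\,\beta_N = n\beta_N$; and each difference tree $w_j[f] := v_{j+1}[f]-v_j[f]$ satisfies $\|w_j[f](\epsilon)\|_2\le\sqrt n(\beta_j+\beta_{j+1})\le 3\sqrt n\,\beta_{j+1}$ along every path and ranges over a finite set of at most $|V_j|\,|V_{j+1}|\le|V_{j+1}|^2$ trees. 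The second ingredient is a tree-adapted Massart lemma: if $W$ is a finite family of real-valued depth-$n$ trees with $\sup_{w\in W}\sup_\epsilon\|w(\epsilon)\|_2\le R$, then $\mathbb{E}_\epsilon\sup_{w\in W}\sum_t\epsilon_t w_t(\epsilon)\le R\sqrt{2\log|W|}$; this follows from the sub-Gaussian bound $\mathbb{E}_\epsilon\exp(\lambda\sum_t\epsilon_t w_t(\epsilon))\le\exp(\lambda^2 R^2/2)$ for a fixed tree $w$ (the increments $\epsilon_t w_t(\epsilon)$ form a martingale-difference sequence with $w_t(\epsilon)$ predictable and predictable squared increments summing to $\|w(\epsilon)\|_2^2\le R^2$ along every path), followed by a union bound over $W$ and optimization in $\lambda$. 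Applying this at scale $j$ with $R = 3\sqrt n\,\beta_{j+1}$ gives $6\sqrt n\,\beta_{j+1}\sqrt{\log\mathcal{N}_2^{sq}(\beta_{j+1},\mathcal{F},\mathbf{z})}$; summing over $j$, using $\beta_{j+1}=2(\beta_{j+1}-\beta_{j+2})$ to compare the sum with $\int\sqrt{\log\mathcal{N}_2^{sq}(\delta,\mathcal{F},\mathbf{z})}\,d\delta$, dividing by $n$, taking $\sup_{\mathbf{z}}$, and choosing $N$ largest with $\beta_{N+1}>\alpha$ (so the leftover top term is at most $4\alpha$) reproduces $\mathfrak{R}_n^{sq}(\mathcal{F})\le\inf_\alpha\{4\alpha+\tfrac{12}{\sqrt n}\int_\alpha^1\sqrt{\log\mathcal{N}_2^{sq}(\delta,\mathcal{F},n)}\,d\delta\}$, exactly as in the classical case.

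The step I expect to be the main obstacle is precisely the one glossed over above: in the definition of a sequential cover the witnessing element may depend on the path $\epsilon$, not only on $f$, so $v_j[f]$ is a priori a selector $\{\pm1\}^n\to V_j$ rather than a single tree, and the finite-class step needs an honest finite set of difference trees $W_j$. Making this rigorous --- showing that for each $f$ one may select (possibly after a controlled enlargement of the cover) a single tree that $\beta_j$-covers $f$ along every path, so that $|W_j|\le\mathcal{N}_2^{sq}(\beta_j,\mathcal{F},\mathbf{z})\,\mathcal{N}_2^{sq}(\beta_{j+1},\mathcal{F},\mathbf{z})$ --- is exactly where the tree formalism of \citet{rakhl} is needed; the remainder is the classical chaining computation with the Azuma--Hoeffding martingale bound playing the role of Hoeffding's inequality.
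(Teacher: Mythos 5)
First, a point of comparison: the paper does not prove this statement at all --- it is imported verbatim from \citet{rakhl} as a black-box tool --- so there is no in-paper proof to measure your attempt against. Judged on its own, your sketch reproduces the standard chaining argument of \citet{rakhl}, and the bookkeeping is right: the dyadic scales $\beta_j=2^{-j}$ with $V_0=\{\mathbf{0}\}$, the $4\alpha$ from stopping at the largest $N$ with $\beta_{N+1}>\alpha$, and the $\tfrac{12}{\sqrt n}$ from combining $\beta_{j+1}=2(\beta_{j+1}-\beta_{j+2})$ with the finite-class bound all check out.

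The genuine gap is exactly the one you flag at the end, but your proposed repair is the wrong one, and without a correct repair the proof fails at its crux. In a sequential cover the witness $v_j[f]$ may depend on the path $\epsilon$, so the covering property only guarantees that the difference tree $v_{j+1}[f]-v_j[f]$ has $\ell_2$ norm at most $3\sqrt n\,\beta_{j+1}$ \emph{along the particular path on which that pair is jointly selected}. Your tree-adapted Massart lemma, however, genuinely needs the radius bound $\sup_\epsilon\lVert w(\epsilon)\rVert_2\le R$ uniformly over \emph{all} paths: peeling the martingale mgf from $t=n$ down to $t=1$ leaves $\mathbb{E}_\epsilon\exp\bigl(\tfrac{\lambda^2}{2}\sum_t w_t(\epsilon)^2\bigr)$ with the realized path inside, and this must be bounded by a constant before the union bound over the finite class. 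For an arbitrary pair $(v,v')\in V_j\times V_{j+1}$ that uniform bound simply fails --- two cover elements never jointly selected on a given path can be order $\sqrt n$ apart there. Your suggested fix, enlarging the cover so that each $f$ has a \emph{single} tree covering it on every path simultaneously, would replace the sequential cover by a uniform-over-paths cover; that is a strictly stronger notion whose cardinality is not controlled by $\mathcal{N}_2^{sq}$ (if such a controlled enlargement were always available, sequential covering numbers would be unnecessary in the first place). The repair actually used by \citet{rakhl} goes the other way: keep the path-dependent selection, and modify the finite class instead --- e.g., replace each difference tree by a version that is zeroed out once its running sum of squares along the path exceeds $n(\beta_j+\beta_{j+1})^2$, so that it coincides with the true difference on every path where the pair is genuinely selected while obeying the uniform radius bound on all paths. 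Without this (or an equivalent device) the finite-class step does not go through, so the argument as written is incomplete precisely where the sequential setting departs from the classical one.
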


Notice that for the classical $\alpha$-cover of $\mathcal{F}$ with regard to $l_2$ norm, denote it by $V$, we have for any given data matrix $Z$, and for any $\mathcal{F}_{\mathcal{A}}\in \mathcal{F}$, there exist $v\in V$ such that 
\[\left\|\mathcal{F}_\mathcal{A}(Z)-v(Z)\right\|_2\leq \alpha\]
Since given a set of sequential data, $\left\|\mathcal{F}_\mathcal{A}(Z)-v(Z)\right\|_2=\sqrt{\sum\limits_{t=1}^{n}\left(\mathcal{F}_\mathcal{A}(z_t)-v(z_t)\right)^2}$. 

Hence, if
\begin{equation*}\left\|\mathcal{F}_\mathcal{A}(Z)-v(Z)\right\|_2\leq \alpha, \end{equation*} then we have 
\begin{equation*}\sqrt{\frac{1}{n}\sum\limits_{t=1}^{n}\left(\mathcal{F}_\mathcal{A}(z_t)-v(z_t)\right)^2}\leq \frac{\alpha}{\sqrt{n}}.\end{equation*}

Hence, as a consequence of Lemma 2, 
\begin{equation*}\ln\mathcal{N}_{2}^{sq}(\frac{\alpha}{\sqrt{n}},\mathcal{G}_{sq}, n)\leq \ln \mathcal{N}(\mathcal{G}, \alpha, \left\|\cdot\right\|_2)\leq \frac{\left\|Z\right\|_2\ln{4W^2}}{\alpha^2} R_{\mathcal{A}}^{sq},\end{equation*}
where $\mathcal{G}_{sq}$ denotes the loss function family of the sequential data set, $R_{\mathcal{A}}^{sq}$ denotes the spectral complexity of the CVNNs under the case of sequential data set, and $\mathcal{G}$ denotes the loss function family of the i.i.d data set.

Hence, we have
\begin{equation*}\ln \mathcal{N}_{2}^{sq}\left(\alpha, \mathcal{G}_{s q}, n\right)\leq \frac{\|Z\|_{2} \ln 4 W^{2}}{n\alpha^{2}} R_{\mathcal{A}}^{sq}.\end{equation*}

\subsubsection{Proof of Theorem 2}

As a consequence of the previous sessions D.1-D.3, we have 
\begin{equation*}\ln \mathcal{N}_{2}^{s q}\left(\epsilon, \mathcal{G}_{s q}, n\right) \leq \frac{\|Z\|_{2} \ln 4 W^{2}}{n \epsilon^{2}} R_{\mathcal{A}}^{sq}\end{equation*}
for any $\epsilon \in \mathbbm{R}^{+}$. 

Therefore, by Lemma 4 and the sequential Dudley Entropy Integral, we can derive the following bound for the sequential Rademacher complexity: 
\begin{equation*}\mathfrak{R}_n^{sq}(\mathcal{G}_{sq}) \leq \frac{4M}{n}+12\frac{\ln{n}\sqrt{R_{\mathcal{A}}}}{Mn}\end{equation*}
where $M$ denotes the upper bound for the loss function. 

After plugging the above inequality into Corollary 2, we can get the desired bound stated in Theorem 2. 

\subsection{Additional Experiments Details}

The section provides all the additional details of our experiments.

\subsubsection{Datasets}

Our experiments are conducted on six datasets: MNIST \citep{lecun1998gradient}, FashionMNIST \citep{xiao2017/online},  CIFAR-10, CIFAR-100, \citep{krizhevsky2009learning}, IMDB \citep{maas-EtAl:2011:ACL-HLT2011}, and Tiny ImageNet \citep{le2015tiny}.
The details of these datasets are shown as follows.
\begin{itemize}
    \item 
\textbf{MNIST} consists of $60,000$ training images and $10,000$ test images from $10$ different classes. It can be downloaded from 
\url{http://yann.lecun.com/exdb/mnist/}.

\item
\textbf{FashionMNIST} consists of $60,000$ training images and $10,000$ test images from $10$ different classes. It can be downloaded from \url{https://github.com/zalandoresearch/fashion-mnist}.

\item
\textbf{CIFAR-10} consists of $50,000$ training images and $10,000$ test images from $10$ different classes, and \textbf{CIFAR-100} has the same data as CIFAR-10 while images in CIFAR-100 belong to $100$ classes. CIFAR-10 and CIFAR-100 can be downloaded from 
\url{https://www.cs.toronto.edu/~kriz/cifar.html}.

\item
\textbf{IMDB} is a movie reviews sentiment classification dataset, in which each of training and test sets consists of $25,000$ movie reviews from $2$ different classes.
It can be downloaded from 
\url{http://ai.stanford.edu/~amaas/data/sentiment/}.

\item
\textbf{Tiny ImageNet} consists of $100,000$ training images and $10,000$ test images from $200$ different classes. It can be downloaded from 
\url{http://cs231n.stanford.edu/tiny-imagenet-200.zip}.

\end{itemize}

For the image datasets, {\it i.e.}, MNIST, FashionMNIST, CIFAR-10, CIFAR-100, and Tiny ImageNet, we normalize each pixel of the images from the datasets to the range of $[0,1]$ before feeding them into the neural network.
For the IMDB dataset, we perform data pre-processing following \url{https://github.com/manavgakhar/imdbsentiment/blob/master/IMdB_sentiment_analysis_project.ipynb}.

\subsubsection{Model Architectures}
\renewcommand{\tablename}{Supplementary Table}

We employ the python package \textbf{complexPyTorch} \citep{PhysRevX.11.021060} to implement our CVNNs, which include complex-value CNNs and complex-value MLPs. The detailed architectures of CVNNs are presented in Supplementary Table \ref{table:network}, and all the parameters in these network architectures are complex values except for the last layer.

\begin{table}[ht]
\caption{Detailed model architectures for different datasets.}
\scriptsize
\centering
\vspace{2mm}
 \begin{tabular}{ccc}
\toprule
MNIST/FashionMNIST/CIFAR-10/CIFAR-100  & Tiny ImageNet & IMDB \\
\midrule
$\begin{matrix} 5\times 5,10 \\ \text{maxpool}, 2\times 2 \end{matrix}$ & $\begin{matrix} 5\times 5,10 \\ \text{maxpool}, 2\times 2 \end{matrix}$ & fc-$500$   \\
$\begin{matrix} 5\times 5,20 \\ \text{maxpool}, 2\times 2 \end{matrix}$  & $\begin{matrix} (5\times 5,20)\times 2 \\ \text{maxpool}, 2\times 2 \end{matrix}$ & fc-$200$   \\

fc-$500$ & fc-$500$ & \\

\midrule
abs & abs & abs \\
fc-$10/100$, softmax & fc-$200$, softmax & fc-$2$, softmax \\
\bottomrule
\end{tabular}
\label{table:network}
\end{table}

In Supplementary Table \ref{table:network}, "$5\times 5,10$" denotes that the convolutional layer has $5\times 5$ kernel size and $10$ output channels. The strides for all convolutional layers are setting to $1$. "fc-$500$" denotes the fully-connected layer with the output features of $500$. All convolutional layers and fully-connected layers are followed the ReLU layer except for the last layer. "abs" is the absolute layer that computes the absolute value of each element in input and can convert complex values to real values.
\subsubsection{Implementation Details}
This section provides all the additional implementation details for our experiments.

\textbf{Model training.} We employ SGD to optimize all the models with $\operatorname{momentum}=0.9$.

\textbf{Training strategy for MNIST and FashionMNIST.} Every model is trained by SGD for $100$ epochs, in which the batch size is set as $1024$, and the learning rate is fixed to $0.01$.

\textbf{Training strategy for CIFAR-10 and CIFAR-100.} Models is trained by SGD for $100$ epochs, in which the batch size is set as $128$, and the learning rate is fixed to $0.01$.

\textbf{Training strategy for IMDB.} Models is trained by SGD for $100$ epochs, in which the batch size is set as $512$. The learning rate is initialized as $0.01$ and decayed by $0.2$ every $40$ epochs.

\textbf{Training strategy for Tiny ImageNet.} Models is trained by SGD for $100$ epochs, in which the batch size is set as $128$. The learning rate is initialized as $0.01$ and decayed by $0.2$ every $40$ epochs.

\end{document}